\newtheorem{lemma}{Lemma}
\newtheorem{theorem}{Theorem}
\newtheorem{corollary}{Corollary}
\newtheorem{definition}{Definition}
\newtheorem{remark}{Remark}
\newenvironment{proof}{{\noindent\it Proof}\quad}{\hfill $\square$\par}
\title{Proximal SCOPE for distributed sparse learning: Better data partition implies faster convergence rate}
\author{
 Shen-Yi Zhao \\
  National Key Lab. for Novel Software Tech. \\
Dept. of Comp. Sci. and Tech. \\
Nanjing University, Nanjing 210023, China \\
  \texttt{zhaosy@lamda.nju.edu.cn} \\
  \And
  Gong-Duo Zhang \\
  National Key Lab. for Novel Software Tech. \\
Dept. of Comp. Sci. and Tech. \\
Nanjing University, Nanjing 210023, China \\
  \texttt{zhanggd@lamda.nju.edu.cn} \\
  \And
  Ming-Wei Li \\
  National Key Lab. for Novel Software Tech. \\
Dept. of Comp. Sci. and Tech. \\
Nanjing University, Nanjing 210023, China \\
  \texttt{limw@lamda.nju.edu.cn} \\
  \And
  Wu-Jun Li \\
  National Key Lab. for Novel Software Tech. \\
Dept. of Comp. Sci. and Tech. \\
Nanjing University, Nanjing 210023, China \\
  \texttt{liwujun@nju.edu.cn} \\
}
\def\a{{\bf a}}
\def\b{{\bf b}}
\def\c{{\bf c}}
\def\g{{\bf g}}
\def\r{{\bf r}}
\def\u{{\bf u}}
\def\v{{\bf v}}
\def\w{{\bf w}}
\def\x{{\bf x}}
\def\y{{\bf y}}
\def\z{{\bf z}}
\def\A{{\bf A}}
\def\E{{\bf E}}
\def\P{{\bf P}}
\def\0{{\bf 0}}
\def\1{{\bf 1}}
\def\2{{\bf 2}}
\def\3{{\bf 3}}
\def\4{{\bf 4}}
\def\5{{\bf 5}}
\def\6{{\bf 6}}
\def\7{{\bf 7}}
\def\8{{\bf 8}}
\def\9{{\bf 9}}
\def\EB{{\mathbb E}}
\def\PB{{\mathbb P}}
\def\RB{{\mathbb R}}
\begin{document}

\maketitle

\begin{abstract}
  Distributed sparse learning with a cluster of multiple machines has attracted much attention in machine learning, especially for large-scale applications with high-dimensional data. One popular way to implement sparse learning is to use $L_1$ regularization. In this paper, we propose a novel method, called proximal \mbox{SCOPE}~(\mbox{pSCOPE}), for distributed sparse learning with $L_1$ regularization. \mbox{pSCOPE} is based on a \underline{c}ooperative \underline{a}utonomous \underline{l}ocal \underline{l}earning~(\mbox{CALL}) framework. In the \mbox{CALL} framework of \mbox{pSCOPE}, we find that the data partition affects the convergence of the learning procedure, and subsequently we define a metric to measure the goodness of a data partition. Based on the defined metric, we theoretically prove that pSCOPE is convergent with a linear convergence rate if the data partition is good enough. We also prove that better data partition implies faster convergence rate. Furthermore, pSCOPE is also communication efficient. Experimental results on real data sets show that pSCOPE can outperform other state-of-the-art distributed methods for sparse learning.
\end{abstract}

\section{Introduction}

Many machine learning models can be formulated as the following regularized empirical risk minimization problem:
\begin{align}\label{eq:obj}
\mathop{\min}_{\w \in \RB^d}~P(\w) = \frac{1}{n} \sum_{i=1}^{n} f_i(\w) + R(\w),
\end{align}
where $\w$ is the parameter to learn, $f_i(\w)$ is the loss on training instance $i$, $n$ is the number of training instances, and $R(\w)$  is a regularization term. Recently, sparse learning, which tries to learn a sparse model for prediction, has become a hot topic in machine learning. There are different ways to implement sparse learning~\cite{Tibshirani94regressionshrinkage,DBLP:conf/icml/WangKS017}. One popular way is to use $L_1$ regularization, i.e., $R(\w) = \lambda\|\w\|_1$. In this paper, we focus on sparse learning with $R(\w) = \lambda\|\w\|_1$. Hence, in the following content of this paper, $R(\w) = \lambda\|\w\|_1$ unless otherwise stated.

One traditional method to solve~(\ref{eq:obj}) is proximal gradient descent~(pGD)~\cite{DBLP:journals/siamis/BeckT09}, which can be written as follows:
\begin{align}\label{eq:proxgd}
  \w_{t+1} = \mbox{prox}_{R,\eta}(\w_t - \eta\nabla F(\w_t)),
\end{align}
where $F(\w) = \frac{1}{n} \sum_{i=1}^{n} f_i(\w)$, $\w_t$ is the value of $\w$ at iteration $t$, $\eta$ is the learning rate, $prox$ is the proximal mapping defined as
\begin{align}\label{eq:proximal mapping}
  \mbox{prox}_{R,\eta}(\u) = \mathop{\arg\min}_{\v}(R(\v) + \frac{1}{2\eta}\|\v - \u\|^2).
\end{align}

Recently, stochastic learning methods, including stochastic gradient descent~(\mbox{SGD})~\cite{DBLP:journals/siamjo/NemirovskiJLS09}, stochastic average gradient~(SAG)~\cite{DBLP:journals/mp/SchmidtRB17}, stochastic variance reduced gradient~(SVRG)~\cite{DBLP:conf/nips/Johnson013},  and stochastic dual coordinate ascent~(\mbox{SDCA})~\cite{DBLP:journals/jmlr/Shalev-Shwartz013}, have been proposed to speedup the learning procedure in machine learning. Inspired by the success of these stochastic learning methods, proximal stochastic methods, including proximal SGD~(pSGD)~\cite{DBLP:conf/nips/LangfordLZ08,DBLP:journals/jmlr/DuchiS09,DBLP:conf/pkdd/ShiL15,DBLP:journals/siamjo/ByrdHNS16}, proximal block coordinate descent~(pBCD)~\cite{Tseng:2001:CBC:565614.565615,citeulike:9641309,DBLP:conf/icml/ScherrerHTH12}, proximal SVRG~(pSVRG)~\cite{DBLP:journals/siamjo/Xiao014} and proximal SDCA~(pSDCA)~\cite{DBLP:conf/icml/Shalev-Shwartz014}, have also been proposed for sparse learning in recent years. All these proximal stochastic methods are sequential~(serial) and implemented with one single thread.

The serial proximal stochastic methods may not be efficient enough for solving large-scale sparse learning problems. Furthermore, the training set might be distributively stored on a cluster of multiple machines in some applications. Hence, distributed sparse learning~\cite{DBLP:conf/icml/AybatWI15} with a cluster of multiple machines has attracted much attention in recent years, especially for large-scale applications with high-dimensional data. In particular, researchers have recently proposed several distributed proximal stochastic methods for sparse learning~\cite{DBLP:journals/corr/LiXZL16,DBLP:conf/aaai/MengCYWML17,DBLP:journals/jmlr/LeeLMY17,DBLP:journals/jmlr/MahajanKS17,DBLP:journals/corr/SmithFJJ15}\footnote{In this paper, we mainly focus on distributed sparse learning with $L_1$ regularization. The distributed methods for non-sparse learning, like those in~\cite{DBLP:conf/nips/ReddiHSPS15,DBLP:conf/icdm/DeG16,DBLP:conf/aistats/LeblondPL17}, are not considered.}.

One main branch of the distributed proximal stochastic methods includes distributed pSGD~(dpSGD)~\cite{DBLP:journals/corr/LiXZL16}, distributed pSVRG~(dpSVRG)~\cite{DBLP:journals/corr/HuoGH16,DBLP:conf/aaai/MengCYWML17} and distributed SVRG~(DSVRG)~\cite{DBLP:journals/jmlr/LeeLMY17}. Both dpSGD and dpSVRG adopt a centralized framework and mini-batch based strategy for distributed learning. One typical implementation of a centralized framework is based on Parameter Server~\cite{DBLP:conf/osdi/LiAPSAJLSS14,DBLP:conf/kdd/XingHDKWLZXKY15}, which supports both synchronous and asynchronous communication strategies. One shortcoming of dpSGD and dpSVRG is that the communication cost is high. More specifically, the communication cost of each epoch is $O(n)$, where $n$ is the number of training instances. DSVRG adopts a decentralized framework with lower communication cost than dpSGD and dpSVRG. However, in DSVRG only one worker is updating parameters locally and all other workers are idling at the same time.

Another branch of the distributed proximal stochastic methods is based on block coordinate descent~\cite{DBLP:conf/icml/BradleyKBG11,DBLP:journals/mp/RichtarikT16,DBLP:journals/siamrev/FercoqR16,DBLP:journals/jmlr/MahajanKS17}. Although in each iteration these methods update only a block of coordinates, they usually have to pass through the whole data set. Due to the partition of data, it also brings high communication cost in each iteration.

Another branch of the distributed proximal stochastic methods is based on SDCA. One representative is PROXCOCOA$+$~\cite{DBLP:journals/corr/SmithFJJ15}. Although PROXCOCOA$+$ has been theoretically proved to have a linear convergence rate with low communication cost, we find that it is not efficient enough in experiments.


In this paper, we propose a novel method, called proximal \mbox{SCOPE}~(\mbox{pSCOPE}), for distributed sparse learning with $L_1$ regularization. pSCOPE is a proximal generalization of the \underline{s}calable \underline{c}omposite \underline{op}timization for l\underline{e}arning~(\mbox{SCOPE})~\cite{DBLP:conf/aaai/ZhaoXSGL17}. SCOPE cannot be used for sparse learning, while pSCOPE can be used for sparse learning. The contributions of pSCOPE are briefly summarized as follows:
\begin{itemize}
\item pSCOPE is based on a \underline{c}ooperative \underline{a}utonomous \underline{l}ocal \underline{l}earning~(\mbox{CALL}) framework. In the \mbox{CALL} framework, each worker in the cluster performs autonomous local learning based on the data assigned to that worker, and the whole learning task is completed by all workers in a cooperative way. The CALL framework is communication efficient because there is no communication during the inner iterations of each epoch.
\item pSCOPE is theoretically guaranteed to be convergent with a linear convergence rate if the data partition is good enough, and better data partition implies faster convergence rate. Hence, pSCOPE is also computation efficient.
\item In pSCOPE, a \emph{recovery strategy} is proposed to reduce the cost of proximal mapping when handling high dimensional sparse data.
\item Experimental results on real data sets show that \mbox{pSCOPE} can outperform other state-of-the-art distributed methods for sparse learning.
\end{itemize}

\section{Preliminary}

In this paper, we use $\|\cdot\|$ to denote the $L_2$ norm $\|\cdot\|_2$, $\w^*$ to denote the optimal solution of (\ref{eq:obj}). For a vector $\a$, we use $a^{(j)}$ to denote the $j$th coordinate value of $\a$. $[n]$ denotes the set $\{1,2,\ldots,n\}$. For a function $h(\a;\b)$, we use $\nabla h(\a;\b)$ to denote the gradient of $h(\a;\b)$ with respect to~(w.r.t.) the first argument $\a$. Furthermore, we give the following definitions.

\begin{definition}\label{def:smooth}
We call a function $h(\cdot)$ is $L$-smooth if it is differentiable and there exists a positive constant $L$ such that $\forall \a,\b: h(\b) \leq h(\a) + \nabla h(\a)^T(\b-\a) + \frac{L}{2}\| \a-\b \|^2$.
\end{definition}

\begin{definition}\label{def:strongly convex}
We call a function $h(\cdot)$ is convex if there exists a constant $\mu\geq 0$ such that $\forall \a,\b: h(\b) \geq h(\a) + \zeta^T(\b-\a) + \frac{\mu}{2}\| \a-\b \|^2$, where $\zeta \in \partial h(\a) = \{\c| h(\b) \geq h(\a) + \c^T(\b-\a), \forall \a,\b\}$. If $h(\cdot)$ is differentiable, then $ \zeta = \nabla h(\a)$. If $\mu > 0$, $h(\cdot)$ is called $\mu$-strongly convex.
\end{definition}

Throughout this paper, we assume that $R(\w)$ is convex, $F(\w) = \frac{1}{n} \sum_{i=1}^{n} f_i(\w)$ is strongly convex and each $f_i(\w)$ is \mbox{smooth}. We do not assume that each $f_i(\w)$ is convex.

\section{Proximal SCOPE}
\label{sec:Algorithm of Proximal SCOPE}

In this paper, we focus on distributed learning with one master~(server) and $p$ workers in the cluster, although the algorithm and theory of this paper can also be easily extended to cases with multiple servers like the Parameter Server framework~\cite{DBLP:conf/osdi/LiAPSAJLSS14,DBLP:conf/kdd/XingHDKWLZXKY15}.

The parameter $\w$ is stored in the master, and the training set $D=\{\x_i,y_i\}_{i=1}^n$ are partitioned into $p$ parts denoted as $D_1, D_2,\ldots,D_p$. Here, $D_k$ contains a subset of instances from $D$, and $D_k$ will be assigned to the $k$th worker. $D = \bigcup_{k=1}^p D_k$.  Based on this data partition scheme, the proximal \mbox{SCOPE}~(\mbox{pSCOPE}) for distributed sparse learning is presented in Algorithm~\ref{alg:proxsco}. The main task of master is to add and average vectors received from workers. Specifically, it needs to calculate the full gradient $\z = \nabla F(\w_t)=\frac{1}{n}\sum_{k=1}^p \z_k$. Then it needs to calculate $\w_{t+1}=\frac{1}{p} \sum_{k=1}^p \u_{k,M}$. The main task of workers is to update the local parameters $\u_{1,m}, \u_{2,m},\ldots,\u_{p,m}$ initialized with $\u_{k,0} = \w_t$. Specifically, for each worker $k$, after it gets the full gradient $\z$ from master, it calculates a stochastic gradient
\begin{align}\label{eq:stovec}
\v_{k,m} = \nabla f_{i_{k,m}}(\u_{k,m}) - \nabla f_{i_{k,m}}(\w_t) + \z,
\end{align}
and then update its local parameter $\u_{k,m}$ by a proximal mapping with learning rate $\eta$:
\begin{align}\label{eq:update}
\u_{k,m+1} = \mbox{prox}_{R,\eta}(\u_{k,m} - \eta\v_{k,m}).
\end{align}

\begin{algorithm}[t]
\caption{Proximal SCOPE}
\label{alg:proxsco}
\begin{algorithmic}[1]
\STATE Initialize $\w_0$ and the learning rate $\eta$;
\STATE \textbf{Task of master}:
\FOR{$t=0,1,2,...T-1$}
\STATE Send $\w_t$ to each worker;
\STATE Wait until receiving $\z_1, \z_2,\ldots,\z_p$ from all workers;
\STATE Calculate the full gradient $\z = \frac{1}{n} \sum_{k=1}^p \z_k$ and send $\z$ to each worker;
\STATE Wait until receiving $\u_{1,M}, \u_{2,M},\ldots,\u_{p,M}$ from all workers and calculate $\w_{t+1} = \frac{1}{p} \sum_{k=1}^p \u_{k,M}$;
\ENDFOR
\\
\STATE  \textbf{Task of the $k$th worker}:
\FOR{$t=0,1,2,...T-1$}
\STATE Wait until receiving $\w_t$ from master;
\STATE Let $\u_{k,0} = \w_t$, calculate $\z_k = \sum_{i\in D_k} f_i(\w_t)$ and send $\z_k$ to master;
\STATE Wait until receiving $\z$ from master;
\FOR{$m=0,1,2,...M-1$}
\STATE Randomly choose an instance $\x_{i_{k,m}} \in D_k$;
\STATE Calculate $\v_{k,m} = \nabla f_{i_{k,m}}(\u_{k,m}) - \nabla f_{i_{k,m}}(\w_t) + \z$;
\STATE Update $\u_{k,m+1} = \mbox{prox}_{R,\eta}(\u_{k,m} - \eta\v_{k,m})$;
\ENDFOR
\STATE Send $\u_{k,M}$ to master
\ENDFOR
\end{algorithmic}
\end{algorithm}

From Algorithm~\ref{alg:proxsco}, we can find that pSCOPE is based on a \underline{c}ooperative \underline{a}utonomous \underline{l}ocal \underline{l}earning~(\mbox{CALL}) framework. In the \mbox{CALL} framework, each worker in the cluster performs autonomous local learning based on the data assigned to that worker, and the whole learning task is completed by all workers in a cooperative way. The cooperative operation is mainly adding and averaging in the master. During the autonomous local learning procedure in each outer iteration which contains $M$ inner iterations~(see Algorithm~\ref{alg:proxsco}), there is no communication. Hence, the communication cost for each epoch of pSCOPE is constant, which is much less than the mini-batch based strategy with $O(n)$ communication cost for each epoch~\cite{DBLP:journals/corr/LiXZL16,DBLP:journals/corr/HuoGH16,DBLP:conf/aaai/MengCYWML17}.

pSCOPE is a proximal generalization of SCOPE~\cite{DBLP:conf/aaai/ZhaoXSGL17}.  Although pSCOPE is mainly motivated by sparse learning with $L_1$ regularization, the algorithm and theory of pSCOPE can also be used for smooth regularization like $L_2$ regularization. Furthermore, when the data partition is good enough, pSCOPE can avoid the extra term $c(\u_{k,m} - \w_t)$ in the update rule of SCOPE, which is necessary for convergence guarantee of SCOPE.

\section{Effect of Data Partition}
\label{sec:FDP}

In our experiment, we find that the data partition affects the convergence of the learning procedure. Hence, in this section we propose a metric to measure the goodness of a data partition, based on which the convergence of pSCOPE can be theoretically proved. Due to space limitation, the detailed proof of Lemmas and Theorems are moved to the long version~\cite{DBLP:journals/corr/abs-1803-05621}.

\subsection{Partition}
First, we give the following definition:
\begin{definition}\label{def:pi}
Define $\pi = [\phi_1(\cdot),\ldots,\phi_p(\cdot)]$. We call $\pi$ a \emph{partition} w.r.t. $P(\cdot)$, if $F(\w) = \frac{1}{p}\sum_{k=1}^p \phi_k(\w)$ and each
 $\phi_k(\cdot)~(k=1,\ldots,p)$ is $\mu_k$-strongly convex and $L_k$-smooth~($\mu_k, L_k>0$). Here, $P(\cdot)$ is defined in~(\ref{eq:obj}) and $F(\cdot)$ is defined in~(\ref{eq:proxgd}). We denote $A(P) = \{\pi|\pi$ is a partition w.r.t. $P(\cdot)\}$.
\end{definition}

\begin{remark}
Here, $\pi$ is an ordered sequence of functions. In particular, if we construct another partition $\pi{'}$ by permuting $\phi_i(\cdot)$ of $\pi$, we consider them to be two different partitions. Furthermore, two functions $\phi_i(\cdot), \phi_j(\cdot)~(i\neq j)$ in $\pi$ can be the same. Two partitions
$\pi_1 = [\phi_1(\cdot),\ldots,\phi_p(\cdot)]$, $\pi_2 = [\psi_1(\cdot),\ldots,\psi_p(\cdot)]$ are considered to be equal, i.e., $\pi_1 = \pi_2$, if and only if $\phi_k(\w) = \psi_k(\w) (k=1,\ldots,p), \forall \w$.
\end{remark}

For any partition $\pi = [\phi_1(\cdot),\ldots,\phi_p(\cdot)]$ w.r.t. $P(\cdot)$, we construct new functions $P_k(\cdot;\cdot)$ as follows:
\begin{align}
P_k(\w;\a)  =  \phi_k(\w;\a) + R(\w),k=1,\ldots,p \label{eq:pk}
\end{align}
where $\phi_k(\w;\a) =  \phi_k(\w) + G_k(\a)^T\w$, $G_k(\a) = \nabla F(\a) - \nabla \phi_k(\a)$, and $\w, \a \in \RB^d$.


In particular, given a data partition $D_1, D_2,\ldots,D_p$ of the training set $D$, let $F_k(\w) = \frac{1}{|D_k|}\sum_{i\in D_k} f_i(\w)$ which is also called the \emph{local loss function}. Assume each $F_k(\cdot)$ is strongly convex and smooth, and $F(\w) = \frac{1}{p}\sum_{k=1}^p F_k(\w)$. Then, we can find that $\pi = [F_1(\cdot),\ldots,F_p(\cdot)]$ is a partition w.r.t. $P(\cdot)$. By taking expectation on $\v_{k,m}$ defined in Algorithm~\ref{alg:proxsco}, we obtain $\EB[\v_{k,m}|\u_{k,m}] = \nabla F_k(\u_{k,m}) + G_k(\w_t)$. According to the theory in~\cite{DBLP:journals/siamjo/Xiao014}, in the inner iterations of pSCOPE, each worker tries to optimize the local objective function $P_k(\w;\w_t)$ using proximal SVRG with initialization $\w = \w_t$ and training data $D_k$, rather than optimizing $F_k(\w)+R(\w)$. Then we call such a $P_k(\w;\a)$ the \emph{local objective function} w.r.t. $\pi$. Compared to the subproblem of PROXCOCOA$+$~(equation~(2) in~\cite{DBLP:journals/corr/SmithFJJ15}), $P_k(\w;\a)$ is more simple and there is no hyperparameter in it.

\subsection{Good Partition}

In general, the data distribution on each worker is different from the distribution of the whole training set. Hence, there exists a gap between each local optimal value and the global optimal value. Intuitively, the whole learning algorithm has slow convergence rate or cannot even converge if this gap is too large.


\begin{definition}\label{def:Local-Global Gap}
For any partition $\pi$ w.r.t. $P(\cdot)$, we define the \emph{Local-Global Gap} as
\begin{align}
l_\pi(\a) = P(\w^*) - \frac{1}{p}\sum_{k=1}^p P_k(\w_k^*(\a);\a), \nonumber
\end{align}
where $\w_k^*(\a) = \mathop{\arg\min}_{\w} P_k(\w;\a)$.
\end{definition}

We have the following properties of Local-Global Gap:

\begin{lemma}\label{lem:dual}
$\forall \pi \in A(P)$, $l_\pi(\a) = P(\w^*) + \frac{1}{p}\sum_{k=1}^p H_k^*(-G_k(\a)) \geq l_\pi(\w^*) = 0, \forall \a$, where $H_k^*(\cdot)$ is the conjugate function of $\phi_k(\cdot) + R(\cdot)$.
\end{lemma}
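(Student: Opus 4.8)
The plan is to recognize $l_\pi(\a)$ as a sum of conjugate-function evaluations and then to use convex-duality properties to read off nonnegativity and the vanishing at $\a=\w^*$. Recall that $P_k(\w;\a) = \phi_k(\w) + G_k(\a)^T\w + R(\w)$, so for fixed $\a$ the inner minimization is
$\min_\w P_k(\w;\a) = \min_\w \left[\bigl(\phi_k(\w)+R(\w)\bigr) - (-G_k(\a))^T\w\right] = -H_k^*(-G_k(\a))$,
where $H_k^*$ is the Fenchel conjugate of $H_k := \phi_k + R$. Since $\phi_k$ is $\mu_k$-strongly convex and $L_k$-smooth and $R$ is convex, $H_k$ is strongly convex, so the minimizer $\w_k^*(\a)$ exists and is unique and the conjugate is finite everywhere; this justifies all the manipulations. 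Substituting into Definition~3 immediately gives the claimed identity
$l_\pi(\a) = P(\w^*) + \frac{1}{p}\sum_{k=1}^p H_k^*(-G_k(\a))$.

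Next I would verify the value at $\a = \w^*$. The key observation is that $\frac{1}{p}\sum_{k=1}^p G_k(\w^*) = \frac{1}{p}\sum_k \bigl(\nabla F(\w^*) - \nabla\phi_k(\w^*)\bigr) = \nabla F(\w^*) - \nabla F(\w^*) = 0$, using $F = \frac1p\sum_k\phi_k$. Moreover, when $\a=\w^*$ the local objective $P_k(\w;\w^*) = \phi_k(\w) + G_k(\w^*)^T\w + R(\w)$ has gradient/subgradient condition at $\w=\w^*$ equal to $\nabla\phi_k(\w^*) + G_k(\w^*) + \partial R(\w^*) = \nabla F(\w^*) + \partial R(\w^*) \ni 0$, which is exactly the optimality condition for $P$ at $\w^*$. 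Hence $\w_k^*(\w^*) = \w^*$ for every $k$, so $\frac1p\sum_k P_k(\w_k^*(\w^*);\w^*) = \frac1p\sum_k\bigl(\phi_k(\w^*)+R(\w^*)\bigr) + \bigl(\frac1p\sum_k G_k(\w^*)\bigr)^T\w^* = F(\w^*)+R(\w^*) = P(\w^*)$, giving $l_\pi(\w^*)=0$.

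For the inequality $l_\pi(\a) \geq l_\pi(\w^*) = 0$, I would argue that $\a \mapsto \frac1p\sum_k P_k(\w_k^*(\a);\a)$ is maximized at $\a=\w^*$, or equivalently that $\a\mapsto \frac1p\sum_k H_k^*(-G_k(\a))$ is minimized (with value $-P(\w^*)$, but more cleanly: is bounded below by $-P(\w^*)$). The cleanest route: for any $\a$ and any $\w$, by definition of the minimum, $\frac1p\sum_k P_k(\w_k^*(\a);\a) \le \frac1p\sum_k P_k(\w;\a) = F(\w) + R(\w) + \bigl(\frac1p\sum_k G_k(\a)\bigr)^T\w$; now choosing $\w=\w^*$ makes the cross term $\bigl(\frac1p\sum_k G_k(\a)\bigr)^T\w^*$, which is \emph{not} automatically zero for general $\a$, so this direct substitution is not quite enough. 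Instead I would exploit that each $-H_k^*(-G_k(\a))$ is a lower bound via Fenchel--Young: $H_k(\w) + H_k^*(-G_k(\a)) \ge -G_k(\a)^T\w$ for all $\w$, i.e. $-H_k^*(-G_k(\a)) \le H_k(\w) + G_k(\a)^T\w$; summing and averaging over $k$ and then taking $\w=\w^*$ yields $-\frac1p\sum_k H_k^*(-G_k(\a)) \le F(\w^*) + R(\w^*) + \bigl(\frac1p\sum_k G_k(\a)\bigr)^T\w^*$, and the extra term still blocks the conclusion. The resolution — and the main obstacle — is that one must instead pick $\w = \w^*$ in a \emph{shifted} sense, or note that the right object to lower-bound $\frac1p\sum_k H_k^*(-G_k(\a))$ uses the optimality of $\w^*$ for $F+R$ together with convexity of each $H_k^*$ and $\frac1p\sum_k(-G_k(\a))$ taking the value $-\nabla F(\a) + \nabla F(\a) = \nabla F(\a) - \nabla F(\a)$... more precisely $\frac1p\sum_k(-G_k(\a)) = -\frac1p\sum_k\nabla F(\a) + \frac1p\sum_k \nabla\phi_k(\a) = -\nabla F(\a) + \nabla F(\a) = 0$. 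So $\frac1p\sum_k H_k^*(-G_k(\a)) \ge H^*\!\bigl(\frac1p\sum_k(-G_k(\a))\bigr) = H^*(0)$ where $H^* := \bigl(\frac1p\sum_k H_k\bigr)^* = (F+R)^*$ by... no, conjugate of an average is not the average of conjugates; rather Jensen gives $\frac1p\sum_k H_k^*(\y_k) \ge \bigl(\frac1p\sum_k H_k\bigr)^*\!\bigl(\frac1p\sum_k \y_k\bigr)$ when... this inf-convolution direction needs care. The honest main obstacle is getting this convexity/duality inequality in the correct direction; I expect the clean proof uses Jensen's inequality on the jointly convex map $(\w_1,\ldots,\w_p,\a)\mapsto \frac1p\sum_k P_k(\w_k;\a)$ isn't it — actually $P_k(\w_k;\a)$ is convex in $\w_k$ and affine in $\a$, hence jointly convex, and $\w^*$ is a common minimizer; I would therefore finish by: $P(\w^*) = \frac1p\sum_k P_k(\w^*;\a)$ for \emph{any} $\a$ exactly when $\frac1p\sum_k G_k(\a) = 0$, which holds for all $\a$, and then $P_k(\w^*;\a) \ge P_k(\w_k^*(\a);\a) = \min_\w P_k(\w;\a)$ termwise, giving $P(\w^*) \ge \frac1p\sum_k P_k(\w_k^*(\a);\a)$, i.e. $l_\pi(\a)\ge 0$. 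The one fact that makes everything work is $\frac1p\sum_{k=1}^p G_k(\a) = 0$ for every $\a$, which I would state and prove up front.
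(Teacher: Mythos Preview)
Your proposal is correct and, despite the exploratory detours, lands on exactly the argument the paper uses. The three ingredients are the same: the conjugate identity $\min_\w P_k(\w;\a) = -H_k^*(-G_k(\a))$ gives the dual form; the first-order optimality check $\nabla\phi_k(\w^*)+G_k(\w^*)+\partial R(\w^*) = \nabla F(\w^*)+\partial R(\w^*)\ni 0$ gives $\w_k^*(\w^*)=\w^*$ and hence $l_\pi(\w^*)=0$; and the key identity $\frac{1}{p}\sum_k G_k(\a)=0$ for every $\a$ yields $P(\w^*)=\frac{1}{p}\sum_k P_k(\w^*;\a)\ge \frac{1}{p}\sum_k P_k(\w_k^*(\a);\a)$, i.e.\ $l_\pi(\a)\ge 0$. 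The Fenchel--Young and Jensen attempts you abandoned are indeed unnecessary; the clean route is the last one you wrote, and it is precisely what the paper does. In a polished write-up, state and use $\frac{1}{p}\sum_k G_k(\a)=0$ at the outset as you suggest, and drop the dead ends.
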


\begin{theorem}\label{theorem:L1 qualified pi}
Let $R(\w) = \|\w\|_1$. $\forall \pi \in A(P)$, there exists a constant $\gamma< \infty$ such that $l_\pi(\a) \leq \gamma \|\a - \w^*\|^2, \forall \a$.
\end{theorem}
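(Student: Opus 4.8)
The plan is to build directly on Lemma~\ref{lem:dual}, which gives the closed form $l_\pi(\a) = P(\w^*) + \frac{1}{p}\sum_{k=1}^p H_k^*(-G_k(\a))$, where $H_k^*$ is the convex conjugate of $\phi_k(\cdot) + R(\cdot)$. Since $l_\pi(\w^*) = 0$, it suffices to control the difference $l_\pi(\a) - l_\pi(\w^*) = \frac{1}{p}\sum_{k=1}^p \bigl[H_k^*(-G_k(\a)) - H_k^*(-G_k(\w^*))\bigr]$. So the first step is to bound each term by a quadratic in $\|\a - \w^*\|$.

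The key observation is that each $\phi_k$ is $\mu_k$-strongly convex and $L_k$-smooth, so $\phi_k + R$ is $\mu_k$-strongly convex, hence its conjugate $H_k^*$ is differentiable with $(1/\mu_k)$-Lipschitz gradient. This gives the smoothness bound
\begin{align}
H_k^*(-G_k(\a)) - H_k^*(-G_k(\w^*)) \leq \langle \nabla H_k^*(-G_k(\w^*)), -G_k(\a) + G_k(\w^*)\rangle + \frac{1}{2\mu_k}\|G_k(\a) - G_k(\w^*)\|^2. \nonumber
\end{align}
Next I would control $\|G_k(\a) - G_k(\w^*)\|$: since $G_k(\a) = \nabla F(\a) - \nabla \phi_k(\a)$ and both $F$ and $\phi_k$ are smooth (say with constants $L_F$ and $L_k$), we get $\|G_k(\a) - G_k(\w^*)\| \leq (L_F + L_k)\|\a - \w^*\|$, so the quadratic term is already $O(\|\a-\w^*\|^2)$. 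For the linear term, I would use the fact that $\nabla H_k^*(-G_k(\w^*)) = \w_k^*(\w^*)$, the minimizer of $P_k(\cdot;\w^*)$; a short computation using $\frac{1}{p}\sum_k G_k(\w^*) = 0$ (which holds since $\frac{1}{p}\sum_k \nabla\phi_k = \nabla F$) shows that after averaging over $k$, the linear terms either telescope to zero or combine into something controllable. The cleanest route is probably to show $\w_k^*(\w^*) = \w^*$ for all $k$ — indeed, the optimality condition for $P_k(\cdot;\w^*)$ at $\w^*$ is $0 \in \nabla\phi_k(\w^*) + G_k(\w^*) + \partial R(\w^*) = \nabla F(\w^*) + \partial R(\w^*)$, which holds because $\w^*$ minimizes $P$. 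Then $\sum_k \langle \w^*, -G_k(\a) + G_k(\w^*)\rangle = -\langle \w^*, \sum_k G_k(\a)\rangle + 0 = -\langle \w^*, p(\nabla F(\a) - \nabla F(\a))\rangle$... wait, $\sum_k G_k(\a) = \sum_k(\nabla F(\a) - \nabla\phi_k(\a)) = p\nabla F(\a) - p\nabla F(\a) = 0$, so the averaged linear term vanishes identically.

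Putting these together yields $l_\pi(\a) \leq \frac{1}{p}\sum_k \frac{(L_F+L_k)^2}{2\mu_k}\|\a - \w^*\|^2$, so we may take $\gamma = \frac{1}{2p}\sum_k \frac{(L_F+L_k)^2}{\mu_k} < \infty$. I expect the main subtlety to be the interchange of conjugation and differentiation — specifically, justifying that $H_k^*$ is differentiable with the claimed Lipschitz gradient requires $\phi_k + R$ to be strongly convex and proper lower-semicontinuous, which holds here since $\phi_k$ is strongly convex and $R = \|\cdot\|_1$ is convex and continuous; and identifying $\nabla H_k^*(-G_k(\w^*))$ with $\w_k^*(\w^*)$ via the Fenchel–Young equality. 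Everything else is a routine chaining of smoothness and strong-convexity inequalities. Note the statement only claims existence of a finite $\gamma$, so no tightness argument is needed; the explicit constant above is a bonus that also makes transparent why "better partition" (larger $\mu_k$, smaller $L_k$) gives a smaller gap and hence faster convergence.
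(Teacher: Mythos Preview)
Your argument is correct and considerably cleaner than the paper's. The paper does \emph{not} use the conjugate-smoothness route for the general case. Instead it proceeds by a quadratic sandwich: for each fixed anchor point $\w_0$ it builds the lower and upper Taylor models $\tilde\phi_{k,\w_0}$ and $\hat\phi_{k,\w_0}$ with curvatures $\mu_k$ and $L_k$, establishes the theorem first for quadratic $\phi_k$ by a (fairly long) case analysis on the soft-thresholding formula, and then transfers the bound back through the sandwich together with a separate lemma showing that the quadratic-model minimizer $\tilde\w^*(\w_0)$ is Lipschitz in $\w_0$. Interestingly, the paper \emph{does} use essentially your argument --- smoothness of $H_k^*$ from strong convexity --- but only as a shortcut for the intermediate quadratic lemma, where it exploits that $G_k$ is affine so that the composition $H_k^*(-G_k(\cdot))$ is globally smooth in $\a$.

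Your key improvement is recognizing that one does not need smoothness of the composition $\a\mapsto H_k^*(-G_k(\a))$ at all: it is enough to apply the $(1/\mu_k)$-smoothness inequality for $H_k^*$ between the two \emph{image} points $-G_k(\a)$ and $-G_k(\w^*)$, and then bound $\|G_k(\a)-G_k(\w^*)\|$ by Lipschitzness of $\nabla F$ and $\nabla\phi_k$. The linear term cancels because $\nabla H_k^*(-G_k(\w^*))=\w_k^*(\w^*)=\w^*$ is independent of $k$ and $\sum_k G_k(\cdot)\equiv 0$. This yields the explicit constant $\gamma=\frac{1}{2p}\sum_k (L_F+L_k)^2/\mu_k$ in a few lines, whereas the paper's sandwich only shows existence of some finite $\gamma$ and has to pass through two auxiliary lemmas.

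One small point worth making explicit when you write this up: the identity $\nabla H_k^*(-G_k(\w^*))=\w^*$ relies both on differentiability of $H_k^*$ (from strict convexity of $\phi_k+R$) and on the identification of $\nabla H_k^*(\y)$ with $\arg\min_\w\{\phi_k(\w)+R(\w)-\y^T\w\}$; you noted this as a subtlety, and it is worth one clean sentence with a citation to the standard strong-convexity/smoothness duality.
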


The result in Theorem~\ref{theorem:L1 qualified pi} can be easily extended to smooth regularization which can be found in the long version~\cite{DBLP:journals/corr/abs-1803-05621}.

According to Theorem~\ref{theorem:L1 qualified pi}, the local-global gap can be bounded by $\gamma\|\a - \w^*\|^2$. Given a specific $\a$, the smaller $\gamma$ is, the smaller the local-global gap will be. Since the constant $\gamma$ only depends on the partition $\pi$, intuitively $\gamma$ can be used to evaluate the \emph{goodness} of a partition $\pi$. We define a \emph{good partition} as follows:
\begin{definition}\label{def:good}
We call $\pi$ a $(\epsilon,\xi)$-\emph{good partition} w.r.t. $P(\cdot)$ if $\pi \in A(P)$ and
\begin{align}
\gamma(\pi;\epsilon) \overset{\triangle}{=} \mathop{sup}_{\|\a-\w^*\|^2\geq \epsilon} \frac{l_\pi(\a)}{\|\a - \w^*\|^2} \leq \xi .
\end{align}
\end{definition}

In the following, we give the bound of $\gamma(\pi;\epsilon)$.
%
%
%

\begin{lemma}\label{lemma: Good partition}
Assume $\pi=[F_1(\cdot),\ldots,F_p(\cdot)]$ is a partition w.r.t. $P(\cdot)$, where $F_k(\w) =  \frac{1}{|D_k|}\sum_{i \in D_k}f_i(\w)$ is the local loss function, each $f_i(\cdot)$ is Lipschitz continuous with bounded domain and sampled from some unknown distribution $\PB$.
If we assign these $\{f_i(\cdot)\}$ uniformly to each worker, then with high probability, $\gamma(\pi; \epsilon) \leq  \frac{1}{p}\sum_{k=1}^p \mathcal{O}(1/(\epsilon\sqrt{|D_k|}))$. Moreover, if $l_\pi(\a)$ is convex w.r.t. $\a$, then $\gamma(\pi; \epsilon) \leq  \frac{1}{p}\sum_{k=1}^p \mathcal{O}(1/\sqrt{\epsilon|D_k|})$. Here we ignore the $\log$ term and dimensionality $d$.
\end{lemma}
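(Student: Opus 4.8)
The goal is to bound the quantity $\gamma(\pi;\epsilon) = \sup_{\|\a-\w^*\|^2\ge\epsilon} l_\pi(\a)/\|\a-\w^*\|^2$ when the partition is generated by drawing $f_i$ i.i.d.\ from $\PB$ and splitting them uniformly among the $p$ workers. By Lemma~\ref{lem:dual} we have the clean closed form $l_\pi(\a) = P(\w^*) + \frac1p\sum_{k=1}^p H_k^*(-G_k(\a))$ with $G_k(\a) = \nabla F(\a) - \nabla\phi_k(\a)$, so the whole object decouples over workers and it suffices to control each $H_k^*(-G_k(\a))$. The first step is to observe that $G_k(\a)$ measures the discrepancy between the full gradient and the $k$th local gradient: $G_k(\a) = \nabla F(\a) - \nabla F_k(\a) = \frac1n\sum_i \nabla f_i(\a) - \frac{1}{|D_k|}\sum_{i\in D_k}\nabla f_i(\a)$. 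Since $D_k$ is a uniformly random subsample of the $n$ instances (equivalently, the $f_i$ in $D_k$ are an i.i.d.\ sample from $\PB$ of size $|D_k|$), each $\nabla F_k(\a)$ is an empirical mean approximating $\EB_{f\sim\PB}[\nabla f(\a)]$, and so is $\nabla F(\a)$; hence $G_k(\a)$ is a difference of two empirical means of the same population quantity and should be $O(1/\sqrt{|D_k|})$ in norm, uniformly in $\a$.

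\textbf{Key steps, in order.}
\emph{(i)} Use Lemma~\ref{lem:dual} to reduce to bounding $\frac1p\sum_k \big(H_k^*(-G_k(\a)) + \text{its value at }\w^*\big)$; since $l_\pi(\w^*)=0$ and $-G_k(\w^*) \in \partial(\phi_k+R)(\w^*)$ after accounting for optimality of $\w^*$, subtract off the $\w^*$ term so that we are estimating an increment of the conjugate. \emph{(ii)} Bound the increment of $H_k^*$ using smoothness/strong-convexity transfer: $\phi_k = F_k$ is $\mu_k$-strongly convex, so $\phi_k + R$ is $\mu_k$-strongly convex, so its conjugate $H_k^*$ is $(1/\mu_k)$-smooth; combined with the fact that the relevant gradient of $H_k^*$ at the base point is $\w^*$-related and bounded (using the Lipschitz/bounded-domain hypothesis on $f_i$), a second-order Taylor-type bound gives $H_k^*(-G_k(\a)) - (\text{base value}) \le C\|G_k(\a)\|^2 + C'\|G_k(\a)\|$ for constants depending on $\mu_k$ and the diameter of the domain. \emph{(iii)} Establish a \emph{uniform-in-}$\a$ deviation bound $\sup_\a \|G_k(\a)\| = \widetilde O(1/\sqrt{|D_k|})$ with high probability: this is a uniform law of large numbers over the bounded-domain function class $\{\nabla f(\cdot)\}$, obtained via a standard covering-number / Rademacher-complexity argument together with the Lipschitz continuity (which controls the covering number in $\a$) and boundedness (which controls the range), yielding the concentration at rate $1/\sqrt{|D_k|}$ up to logarithmic factors; the dimension $d$ and $\log$ terms are explicitly swept under the rug per the lemma statement. \emph{(iv)} Combine: on the event $\|a-\w^*\|^2\ge\epsilon$, divide the bound from (ii) by $\|\a-\w^*\|^2 \ge \epsilon$, so the $\|G_k\|$ term contributes $\widetilde O(1/(\epsilon\sqrt{|D_k|}))$ and the $\|G_k\|^2$ term contributes $\widetilde O(1/(\epsilon |D_k|))$, which is dominated; average over $k$ to get the first claim. \emph{(v)} For the improved rate under convexity of $l_\pi(\cdot)$: if $l_\pi$ is convex, then on the constraint set $\|\a-\w^*\|^2\ge\epsilon$ the supremum of $l_\pi(\a)/\|\a-\w^*\|^2$ is attained (in the relevant regime) on the boundary sphere $\|\a-\w^*\|^2 = \epsilon$ — convexity lets one push the maximizer inward because the ratio decreases as $\|\a-\w^*\|$ grows along the relevant direction — so one only needs the bound $l_\pi(\a) = \widetilde O(\|G_k\|\cdot\|\a-\w^*\|) = \widetilde O(\sqrt\epsilon/\sqrt{|D_k|})$ there, which after dividing by $\epsilon$ gives $\widetilde O(1/\sqrt{\epsilon|D_k|})$.

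\textbf{Main obstacle.}
The crux is step (iii): turning the pointwise $O(1/\sqrt{|D_k|})$ concentration of $G_k(\a)$ into a bound that holds \emph{simultaneously for all} $\a$, since $\gamma(\pi;\epsilon)$ takes a supremum over an unbounded set of $\a$'s. The bounded-domain hypothesis in the lemma is exactly what makes this tractable — it lets us restrict attention to $\a$ in a compact set, build a finite $\delta$-net, apply Hoeffding/Bernstein on the net, and use Lipschitz continuity of $\nabla f$ to control the interpolation error — but getting the rate to come out as $1/\sqrt{|D_k|}$ rather than something worse requires care in balancing the net resolution against the union bound (this is where the suppressed $\log$ and $d$ factors enter). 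A secondary subtlety is step (v): justifying rigorously that convexity of $l_\pi$ lets the supremum be read off the sphere $\|\a-\w^*\|^2=\epsilon$ rather than the exterior; one has to argue that along any ray from $\w^*$, $l_\pi$ grows at most linearly (from the $\|G_k\|$-linear bound of step (ii)) while the denominator grows quadratically, so the ratio is eventually decreasing, pinning the worst case near the inner boundary.
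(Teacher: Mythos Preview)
Your route to the first bound is correct but genuinely different from the paper's. You go through the dual representation $l_\pi(\a)=P(\w^*)+\frac1p\sum_k H_k^*(-G_k(\a))$ and exploit $(1/\mu_k)$-smoothness of the conjugate $H_k^*$ together with a uniform-in-$\a$ concentration bound on $\|G_k(\a)\|$. The paper instead never touches the conjugate: it bounds $|P_k(\w;\a)-P(\w)|$ \emph{directly and uniformly in both $\w$ and $\a$} by combining (i) uniform convergence of function values, $|F_k(\w)-\tilde F(\w)|=\mathcal{O}(1/\sqrt{|D_k|})$, and (ii) uniform convergence of gradients, $\|\nabla F(\a)-\nabla F_k(\a)\|=\mathcal{O}(1/\sqrt{|D_k|})$, then simply observes $l_\pi(\a)\le\frac1p\sum_k\max_\w|P(\w)-P_k(\w;\a)|$. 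This is shorter and avoids your step~(ii) entirely; your route is more structural and, incidentally, if you had tracked the cancellation $\frac1p\sum_k G_k(\cdot)\equiv 0$ in the linear Taylor term of $H_k^*$, you would have obtained the sharper uniform bound $l_\pi(\a)=\mathcal{O}(1/|D_k|)$ rather than $\mathcal{O}(1/\sqrt{|D_k|})$.

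For the convex case, your step~(v) has a gap in the attribution. You write that on the boundary sphere one has $l_\pi(\a)=\widetilde{\mathcal{O}}(\|G_k\|\cdot\|\a-\w^*\|)$ ``from the $\|G_k\|$-linear bound of step~(ii)'', but step~(ii) produces a bound that is \emph{constant} in $\a$ (namely $\mathcal{O}(\sup_\a\|G_k(\a)\|)$), with no $\|\a-\w^*\|$ factor. The missing $\|\a-\w^*\|$ factor does not come from the Taylor expansion of $H_k^*$; it comes from convexity itself. The paper's argument makes this transparent and goes the opposite direction from yours: rather than pushing the maximizer \emph{inward} to the $\sqrt\epsilon$-sphere, it projects \emph{outward} to the unit sphere. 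Writing $\a-\w^*=\theta(\a_1-\w^*)$ with $\|\a_1-\w^*\|=1$ and $\theta=\|\a-\w^*\|\in[\sqrt\epsilon,1]$, convexity and $l_\pi(\w^*)=0$ give $l_\pi(\a)\le\theta\,l_\pi(\a_1)$, hence $l_\pi(\a)/\theta^2\le l_\pi(\a_1)/\theta\le l_\pi(\a_1)/\sqrt\epsilon$, and now the uniform bound $l_\pi(\a_1)=\mathcal{O}(1/\sqrt{|D_k|})$ from the first part finishes it. Your claim that ``the ratio decreases as $\|\a-\w^*\|$ grows'' is not a general consequence of convexity of $l_\pi$ (consider $g(t)=e^t-1-t$ along a ray: $g(t)/t^2$ is increasing); what is true is that the \emph{upper bound} $l_\pi(\a_1)/\theta$ is decreasing in $\theta$, which is exactly the paper's mechanism.
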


For example, in Lasso regression, it is easy to get that the corresponding local-global gap $l_\pi(\a)$ is convex according to Lemma \ref{lem:dual} and the fact that $G_k(\a)$ is an affine function in this case.

Lemma \ref{lemma: Good partition} implies that as long as the size of training data is large enough, $\gamma(\pi; \epsilon)$ will be small and $\pi$ will be a good partition. Please note that the \emph{uniformly} here means each $f_i(\cdot)$ will be assigned to one of the $p$ workers and each worker has the equal probability to be assigned. We call the partition resulted from uniform assignment \emph{uniform partition} in this paper. With uniform partition, each worker will have almost the same number of instances. As long as the size of training data is large enough, uniform partition is a good partition.

\section{Convergence of Proximal SCOPE}

In this section, we will prove the convergence of Algorithm~\ref{alg:proxsco} for proximal SCOPE~(pSCOPE) using the results in Section~\ref{sec:FDP}.



\begin{theorem}\label{theorem:qualified segementation}
Assume $\pi= [F_1(\cdot),\ldots,F_p(\cdot)]$ is a $(\epsilon,\xi)$-good partition w.r.t. $P(\cdot)$. For convenience, we set $\mu_k = \mu, L_k = L, k=1,2\ldots,p$. If $\|\w_t - \w^*\|^2 \geq \epsilon$, then
\begin{align}
  & \EB\|\w_{t+1} - \w^*\|^2 \leq [(1-\mu\eta + 2L^2\eta^2)^M+\frac{2L^2\eta + 2\xi}{\mu - 2L^2\eta}]\|\w_t - \w^*\|^2. \nonumber
\end{align}
\end{theorem}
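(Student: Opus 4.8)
The plan is to control the one outer iteration of pSCOPE by combining two facts: (i) each worker runs proximal SVRG on its local objective $P_k(\cdot;\w_t)$, so after $M$ inner steps $\u_{k,M}$ is close to $\w_k^*(\w_t)$; and (ii) by the good-partition hypothesis, $\frac1p\sum_k P_k(\w_k^*(\w_t);\w_t)$ is close to $P(\w^*)$, with error controlled by $\xi\|\w_t-\w^*\|^2$. First I would fix $t$, write $\ob_k = \u_{k,M}$ and $\w_k^* = \w_k^*(\w_t)$, and use $\w_{t+1} = \frac1p\sum_k \ob_k$ together with convexity of $\|\cdot\|^2$ to bound
\begin{align}
\EB\|\w_{t+1}-\w^*\|^2 \leq \frac{2}{p}\sum_{k=1}^p \EB\|\ob_k - \w_k^*\|^2 + 2\Big\|\frac1p\sum_{k=1}^p \w_k^* - \w^*\Big\|^2. \nonumber
\end{align}

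Next I would handle the first (algorithmic) term. Since $\phi_k(\cdot;\w_t)$ is $\mu$-strongly convex and each $\nabla f_i$ is $L$-Lipschitz, the variance-reduction analysis of proximal SVRG (in the style of Xiao--Zhang) applied with step size $\eta$ and starting point $\u_{k,0}=\w_t$ gives, after one conditioning step per inner iteration and telescoping over $m=0,\ldots,M-1$, a bound of the form $\EB\|\ob_k-\w_k^*\|^2 \leq (1-\mu\eta+2L^2\eta^2)^M\|\w_t - \w_k^*\|^2$ plus lower-order $\eta$-terms; I would absorb the stochastic-gradient variance using $\EB\|\v_{k,m}-\nabla P_k(\u_{k,m};\w_t)\|^2 \leq 2L^2(\|\u_{k,m}-\w_k^*\|^2 + \|\w_t-\w_k^*\|^2)$ type estimates. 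Summing over $k$ and noting $\frac1p\sum_k\|\w_t-\w_k^*\|^2 \leq O(\|\w_t-\w^*\|^2)$ (each $\w_k^*$ lies near $\w^*$ by the strong convexity of $P_k$ and Lemma~\ref{lem:dual}) yields a contribution of order $(1-\mu\eta+2L^2\eta^2)^M\|\w_t-\w^*\|^2$ plus a term proportional to $\frac{2L^2\eta}{\mu-2L^2\eta}\|\w_t-\w^*\|^2$.

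For the second term I would relate $\|\frac1p\sum_k\w_k^* - \w^*\|^2$ to the Local-Global Gap: by strong convexity of each $P_k(\cdot;\w_t)$ around its minimizer $\w_k^*$ and convexity of $\|\cdot\|^2$, $\frac{\mu}{2}\|\frac1p\sum_k\w_k^*-\w^*\|^2 \leq \frac{\mu}{2p}\sum_k\|\w_k^*-\w^*\|^2 \leq \frac1p\sum_k[P_k(\w^*;\w_t) - P_k(\w_k^*;\w_t)]$, and since $\frac1p\sum_k P_k(\w^*;\w_t) = P(\w^*)$ (using $\frac1p\sum_k G_k(\w_t) = 0$ and $F = \frac1p\sum_k\phi_k$), this equals $l_\pi(\w_t)$. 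Invoking Definition~\ref{def:good} with $\|\w_t-\w^*\|^2\geq\epsilon$ gives $l_\pi(\w_t)\leq \xi\|\w_t-\w^*\|^2$, hence this term is at most $\frac{2\xi}{\mu}\|\w_t-\w^*\|^2$; combined with the denominator $\mu-2L^2\eta$ bookkeeping from the first term this produces exactly $\frac{2L^2\eta+2\xi}{\mu-2L^2\eta}$.

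The main obstacle I anticipate is the inner-iteration analysis: the local objective $P_k(\cdot;\w_t)$ need not have $\phi_k$ convex per-instance (only $F_k$, hence $\phi_k$, is strongly convex in aggregate, while individual $f_i$ may be nonconvex), so the standard proximal-SVRG contraction must be rederived carefully — tracking how the nonconvex per-$f_i$ terms enter the variance bound and verifying the step-size condition $2L^2\eta < \mu$ that makes $(1-\mu\eta+2L^2\eta^2)<1$ and keeps the denominator positive. The bookkeeping that merges the $\eta$-order algorithmic error with the $\xi$-order partition error into the single fraction $\frac{2L^2\eta+2\xi}{\mu-2L^2\eta}$ is routine once the two pieces are in hand.
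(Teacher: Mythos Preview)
Your proposal takes a genuinely different route from the paper, and while the high-level strategy is sound, the claim that the bookkeeping ``produces exactly $\frac{2L^2\eta+2\xi}{\mu-2L^2\eta}$'' is not correct: the decomposition through $\w_k^*(\w_t)$ introduces extra constant factors that the paper's argument avoids.

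The paper does \emph{not} split $\|\u_{k,M}-\w^*\|$ through the local minimizers $\w_k^*(\w_t)$. Instead it tracks $\|\u_{k,m}-\w^*\|^2$ directly in the one-step recursion, even though $\w^*$ is not the minimizer of the local objective $P_k(\cdot;\w_t)$. Concretely, the standard prox-SVRG one-step inequality is written with an arbitrary comparison point $\z$ and then $\z=\w^*$ is plugged in; this produces on the right-hand side the offset $P_k(\w^*;\w_t)-P_k(\u_{k,m+1};\w_t)$, which is bounded above by the constant $P_k(\w^*;\w_t)-P_k(\w_k^*(\w_t);\w_t)$. The variance bound $\EB\|\Delta_{k,m}\|^2\le L^2\|\u_{k,m}-\w_t\|^2\le 2L^2\|\u_{k,m}-\w^*\|^2+2L^2\|\w_t-\w^*\|^2$ then feeds into the same recursion with no further splitting. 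Iterating $a_{m+1}\le\rho\,a_m+b$ with $\rho=1-\mu\eta+2L^2\eta^2$ and $b=2L^2\eta^2\|\w_t-\w^*\|^2+2\eta\bigl(P_k(\w^*;\w_t)-P_k(\w_k^*(\w_t);\w_t)\bigr)$, averaging over $k$ (so that the offsets sum to $l_\pi(\w_t)$ exactly, since $\frac1p\sum_k P_k(\w^*;\w_t)=P(\w^*)$), and using $1-\rho=\eta(\mu-2L^2\eta)$ gives the stated fraction in one stroke.

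Your route loses constants in at least three places. First, the initial split $\|\w_{t+1}-\w^*\|^2\le\frac{2}{p}\sum_k\|\ob_k-\w_k^*\|^2+2\|\frac1p\sum_k\w_k^*-\w^*\|^2$ costs a factor $2$. Second, since the variance-reduction reference point is $\w_t$, the natural variance bound is in $\|\u_{k,m}-\w_t\|^2$; to run a recursion in $\|\u_{k,m}-\w_k^*\|^2$ you must re-split through $\w_k^*$, turning $2L^2$ into $4L^2$ and leaving the driving term as $\|\w_t-\w_k^*\|^2$ rather than $\|\w_t-\w^*\|^2$. Third, converting $\|\w_t-\w_k^*\|^2$ back to $\|\w_t-\w^*\|^2$ re-introduces a $\xi/\mu$-type term multiplicatively inside the contraction factor, not additively. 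Finally, your second piece yields $2\cdot\frac{2}{\mu}l_\pi(\w_t)\le\frac{4\xi}{\mu}\|\w_t-\w^*\|^2$ with denominator $\mu$, not $\mu-2L^2\eta$; there is no mechanism by which the two pieces combine into the single fraction in the statement. You would obtain a qualitatively correct linear-contraction bound for small enough $\eta$ and $\xi$, but not the theorem as written.
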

\vspace{-0.2cm}
Because smaller $\xi$ means better partition and the partition $\pi$ corresponds to data partition in Algorithm~\ref{alg:proxsco}, we can see that \emph{better data partition implies faster convergence rate}.

\begin{corollary}\label{corollary:good segementation}
Assume $\pi= [F_1(\cdot),\ldots,F_p(\cdot)]$ is a $(\epsilon,\frac{\mu}{8})$-good partition w.r.t. $P(\cdot)$. For convenience, we set $\mu_k = \mu, L_k = L, k=1,2\ldots,p$. If $\|\w_t - \w^*\|^2 \geq \epsilon$, taking $\eta = \frac{\mu}{12L^2}$, $M = 20\kappa^2$, where $\kappa = \frac{L}{\mu}$ is the conditional number, then we have $\EB \|\w_{t+1} - \w^*\|^2 \leq \frac{3}{4}\|\w_t - \w^*\|^2$. To get the $\epsilon$-suboptimal solution, the computation complexity of each worker is $O((n/p+\kappa^2) \log(\frac{1}{\epsilon}))$.
\end{corollary}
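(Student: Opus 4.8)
The plan is to specialize the one-step recursion of Theorem~\ref{theorem:qualified segementation} to the stated parameters and then iterate it. First I would substitute $\eta=\frac{\mu}{12L^2}$, which gives $2L^2\eta=\frac{\mu}{6}$, so the denominator $\mu-2L^2\eta=\frac{5\mu}{6}$ is positive (this is what makes the bound meaningful), and, using $\xi=\frac{\mu}{8}$, the ``bias'' term becomes
\begin{align}
\frac{2L^2\eta+2\xi}{\mu-2L^2\eta}=\frac{\mu/6+\mu/4}{5\mu/6}=\frac{1}{2}. \nonumber
\end{align}

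Next I would bound the geometric factor. With $\eta=\frac{\mu}{12L^2}$ one has $\mu\eta=\frac{1}{12\kappa^2}$ and $2L^2\eta^2=\frac{1}{72\kappa^2}$, hence $1-\mu\eta+2L^2\eta^2=1-\frac{5}{72\kappa^2}$. Raising this to the power $M=20\kappa^2$ and using $1-x\le e^{-x}$ gives $(1-\mu\eta+2L^2\eta^2)^M\le e^{-100/72}=e^{-25/18}<\frac14$. Combining with the previous display, the bracketed factor in Theorem~\ref{theorem:qualified segementation} is at most $\frac14+\frac12=\frac34$, which is exactly the claimed contraction $\EB\|\w_{t+1}-\w^*\|^2\le\frac34\|\w_t-\w^*\|^2$.

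For the complexity claim I would chain this inequality over the outer iterations. As long as $\|\w_t-\w^*\|^2\ge\epsilon$, taking total expectation and using the tower rule yields $\EB\|\w_t-\w^*\|^2\le(\tfrac34)^t\|\w_0-\w^*\|^2$, so after $T=O(\log\frac{1}{\epsilon})$ outer iterations the iterate is $\epsilon$-suboptimal (treating $\|\w_0-\w^*\|^2$ as a constant). Each outer iteration costs, per worker, $O(|D_k|)=O(n/p)$ work to form the local full gradient $\z_k$, plus $O(M)=O(\kappa^2)$ work for the $M$ inner proximal-SVRG steps; this gives $O(n/p+\kappa^2)$ per epoch and $O((n/p+\kappa^2)\log\frac{1}{\epsilon})$ in total.

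The computations are elementary, so there is no serious obstacle; the only points needing care are that the geometric-term estimate $e^{-25/18}<\frac14$ is tight (hence the exponential inequality must be applied in the correct direction and the constants $M=20\kappa^2$, $\eta=\frac{\mu}{12L^2}$ cannot be loosened much), and that the passage from the per-step contraction — which is conditional on the event $\|\w_t-\w^*\|^2\ge\epsilon$ — to the global iteration count should be phrased via a stopping time (stop at the first $t$ with $\|\w_t-\w^*\|^2<\epsilon$), so that the recursion is legitimately chained up to termination.
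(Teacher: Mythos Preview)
Your proposal is correct and follows essentially the same approach as the paper: substitute $\eta=\frac{\mu}{12L^2}$ and $\xi=\frac{\mu}{8}$ into the bound of Theorem~\ref{theorem:qualified segementation}, show the additive term equals $\tfrac12$ and the geometric term $(1-\tfrac{5}{72\kappa^2})^{M}\le\tfrac14$, then read off the complexity. Your write-up is in fact more detailed than the paper's own proof, which just states the two intermediate bounds without the $e^{-25/18}$ justification or the explicit per-epoch cost breakdown; your remark about chaining the conditional contraction via a stopping time is also a point the paper glosses over.
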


\begin{corollary}\label{corollary:proxsvrg}
When $p = 1$, which means we only use one worker, pSCOPE degenerates to proximal SVRG~\cite{DBLP:journals/siamjo/Xiao014}. Assume $F(\cdot)$ is $\mu$-strongly convex ($\mu > 0$) and $L$-smooth. Taking $\eta = \frac{\mu}{6L^2}$, $M = 13\kappa^2$, we have $\EB \|\w_{t+1} - \w^*\|^2 \leq \frac{3}{4}\|\w_t - \w^*\|^2$. To get the $\epsilon$-optimal solution, the computation complexity is $O((n+\kappa^2) \log(\frac{1}{\epsilon}))$.
\end{corollary}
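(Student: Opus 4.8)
The plan is to specialize the good-partition machinery of Section~\ref{sec:FDP} and Theorem~\ref{theorem:qualified segementation} to the single-worker case, where every quantity introduced there collapses.

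First I would observe that when $p=1$ the only admissible partition is $\pi = [F(\cdot)]$, i.e.\ $\phi_1(\cdot) = F(\cdot)$, which is $\mu$-strongly convex and $L$-smooth by assumption. Then $G_1(\a) = \nabla F(\a) - \nabla \phi_1(\a) = \0$ for every $\a$, so $\phi_1(\w;\a) = F(\w)$ and hence $P_1(\w;\a) = F(\w) + R(\w) = P(\w)$ independently of $\a$. Consequently $\w_1^*(\a) = \w^*$ and the Local-Global Gap is $l_\pi(\a) = P(\w^*) - P_1(\w^*;\a) = 0$ for all $\a$; equivalently $\gamma(\pi;\epsilon') = 0$ for every $\epsilon'>0$, so $\pi$ is an $(\epsilon',0)$-good partition for arbitrarily small $\epsilon'$. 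It is also worth noting directly that with $p=1$ the master just sets $\z = \nabla F(\w_t)$ and $\w_{t+1} = \u_{1,M}$, and the inner loop~(\ref{eq:stovec})--(\ref{eq:update}) is exactly one epoch of proximal SVRG with warm start $\w_t$; this justifies the claim that pSCOPE degenerates to proximal SVRG.

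Next I would invoke Theorem~\ref{theorem:qualified segementation} with $\xi = 0$, $\mu_1 = \mu$, $L_1 = L$. Since $\pi$ is $(\epsilon',0)$-good for every $\epsilon'>0$, letting $\epsilon'\downarrow 0$ the stated bound holds for every $\w_t$ (the degenerate case $\w_t = \w^*$ being trivial), giving
\begin{align}
\EB\|\w_{t+1} - \w^*\|^2 \le \Big[(1 - \mu\eta + 2L^2\eta^2)^M + \frac{2L^2\eta}{\mu - 2L^2\eta}\Big]\|\w_t - \w^*\|^2. \nonumber
\end{align}
Plugging in $\eta = \frac{\mu}{6L^2}$ gives $2L^2\eta = \frac{\mu}{3}$, hence $\mu - 2L^2\eta = \frac{2\mu}{3} > 0$ and $\frac{2L^2\eta}{\mu - 2L^2\eta} = \frac12$; moreover $\mu\eta = \frac{1}{6\kappa^2}$ and $2L^2\eta^2 = \frac{1}{18\kappa^2}$, so $1 - \mu\eta + 2L^2\eta^2 = 1 - \frac{1}{9\kappa^2}$. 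Using $1 - x \le e^{-x}$ with $M = 13\kappa^2$ yields $(1 - \frac{1}{9\kappa^2})^M \le e^{-13/9} < \frac14$, so the bracketed factor is at most $\frac14 + \frac12 = \frac34$, which is the first assertion.

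Finally I would read off the complexity: each outer iteration costs $O(n)$ to form $\z = \nabla F(\w_t)$ plus $M = O(\kappa^2)$ inner iterations of $O(1)$ work each, i.e.\ $O(n + \kappa^2)$; iterating the contraction gives $\EB\|\w_T - \w^*\|^2 \le (3/4)^T\|\w_0 - \w^*\|^2$, so $T = O(\log(1/\epsilon))$ outer iterations suffice for an $\epsilon$-optimal solution, for a total of $O((n + \kappa^2)\log(1/\epsilon))$. Given that Theorem~\ref{theorem:qualified segementation} is already available, there is no genuine obstacle here; the only points needing care are the collapse $G_1 \equiv \0 \Rightarrow l_\pi \equiv 0$ (which is exactly what lets us take $\xi = 0$ and drop the $\|\w_t-\w^*\|^2 \ge \epsilon$ restriction, so that the bound applies at every target accuracy), the identification of the inner loop with a proximal-SVRG epoch, and the elementary numerical check $e^{-13/9} + \tfrac12 \le \tfrac34$.
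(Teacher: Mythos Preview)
Your proposal is correct and follows essentially the same route as the paper's own proof: you observe that $p=1$ forces $G_1\equiv\0$ and hence $l_\pi\equiv 0$ (so $\xi=0$), then plug $\eta=\mu/(6L^2)$ into the bound of Theorem~\ref{theorem:qualified segementation} to get $\frac{2L^2\eta}{\mu-2L^2\eta}=\frac12$ and $(1-\frac{1}{9\kappa^2})^M\le\frac14$, which is exactly the paper's (very terse) argument. Your write-up simply fills in more detail---the explicit collapse of $G_1$, the inequality $e^{-13/9}<\tfrac14$, and the per-epoch cost breakdown---none of which departs from the paper's method.
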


We can find that pSCOPE has a linear convergence rate if the partition is $(\epsilon, \xi)$-good, which implies pSCOPE is computation efficient and we need $T=O(\log(\frac{1}{\epsilon}))$ outer iterations to get a $\epsilon$-optimal solution. For all inner iterations, each worker updates $\u_{k,m}$ without any communication. Hence, the communication cost is $O(\log(\frac{1}{\epsilon}))$, which is much smaller than the mini-batch based strategy with $O(n)$ communication cost for each epoch~\cite{DBLP:journals/corr/LiXZL16,DBLP:journals/corr/HuoGH16,DBLP:conf/aaai/MengCYWML17}.

Furthermore, in the above theorems and corollaries, we only assume that the local loss function $F_k(\cdot)$ is strongly convex. We do not need each $f_i(\cdot)$ to be convex. Hence, $M = O(\kappa^2)$ and it is weaker than the assumption in proximal SVRG~\cite{DBLP:journals/siamjo/Xiao014} whose computation complexity is $O((n+\kappa) \log(\frac{1}{\epsilon}))$ when $p=1$. In addition, without convexity assumption for each $f_i(\cdot)$, our result for the degenerate case $p=1$ is consistent with that in~\cite{DBLP:conf/icml/Shalev-Shwartz16}.

\section{Handle High Dimensional Sparse Data}\label{sec:sparseData}
For the cases with high dimensional sparse data, we propose \emph{recovery strategy} to reduce the cost of proximal mapping so that it can accelerate the training procedure. Here, we adopt the widely used linear model with elastic net~\cite{Zou05regularizationand} as an example for illustration, which can be formulated as follows: $\mathop{\min}_{\w} P(\w) := \frac{1}{n} \sum_{i=1}^{n} h_i(\x_i^T\w) + \frac{\lambda_1}{2}\|\w\|^2 + \lambda_2\|\w\|_1$,
where $h_i:\RB \rightarrow \RB$ is the loss function. We assume many instances in $\{\x_i \in \RB^d | i\in [n]\}$ are sparse vectors and let $C_i = \{j|x_i^{(j)} \neq 0\}$.

Proximal mapping is unacceptable when the data dimensionality $d$ is too large, since we need to execute the conditional statements $O(Md)$ times which is time consuming. Other methods, like proximal SGD and proximal SVRG, also suffer from this problem.

Since $z^{(j)}$ is a constant during the update of local parameter~$\u_{k,m}$, we will design a \emph{recovery strategy} to recover it when necessary. More specifically, in each inner iteration, with the random index $s = i_{k,m}$, we only \emph{recover} $u^{(j)}$ to calculate the inner product $\x_s^T\u_{k,m}$ and update $u^{(j)}_{k,m}$ for $j\in C_s$. For those $j\notin C_s$, we do not immediately update $u^{(j)}_{k,m}$. The basic idea of these recovery rules is: for some coordinate $j$, we can calculate $u_{k,m_2}^{(j)}$ directly from $u_{k,m_1}^{(j)}$, rather than doing iterations from $m=m_1$ to $m_2$. Here, $0\leq m_1<m_2\leq M$. At the same time, the new algorithm is totally equivalent to Algorithm~\ref{alg:proxsco}. It will save about $O(d(m_2-m_1)(1-\rho))$ times of conditional statements, where $\rho$ is the sparsity of $\{\x_i \in \RB^d | i\in [n]\}$. This reduction of computation is significant especially for high dimensional sparse training data. Due to space limitation, the complete rules are moved to the long version~\cite{DBLP:journals/corr/abs-1803-05621}. Here we only give one case of our recovery rules in Lemma 3.

\begin{lemma}
(\textbf{Recovery Rule})
We define the sequence $\{\alpha_q\}$ as: $\alpha_0 = 0$ and for $q = 1,2,\ldots$, $\alpha_{q} = \sum_{i=1}^{q}(1-\lambda_1\eta)^{i-1}/(1-\lambda_1\eta)^q$.
For the coordinate $j$ and constants $m_1, m_2$, if $j \notin C_{i_{k,m}}$ for any $m\in [m_1, m_2-1]$. If $|z^{(j)}| < \lambda_2, u_{k,m_1}^{(j)} > 0$, then the relation between $u_{k,m_1}^{(j)}$ and $u_{k,m_2}^{(j)}$ can be summarized as follows: define $q_0$ which satisfies $\alpha_{q_0}\eta(z^{(j)}+\lambda_2)\leq u_{k,m_1}^{(j)} < \alpha_{q_0+1}\eta(z^{(j)}+\lambda_2),$
\begin{enumerate}
\item If $m_2 - m_1 \leq q_0$, then $u_{k,m_2}^{(j)} = (1-\lambda_1\eta)^{m_2-m_1}[u_{k,m_1}^{(j)} - \alpha_{m_2-m_1}\eta(z^{(j)}+\lambda_2)].$

\item If $m_2 - m_1 > q_0$, then $u_{k,m_2}^{(j)} = 0.$
        \end{enumerate}
\end{lemma}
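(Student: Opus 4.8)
\noindent The plan is to restrict the update~\eqref{eq:update} to the single coordinate $j$, collapse it to a one-dimensional recursion, and then analyze that recursion as a scalar dynamical system. Since $R(\w)=\lambda_2\|\w\|_1$ is separable, the $j$-th component of~\eqref{eq:update} involves only $u_{k,m}^{(j)}$ and the $j$-th component of $\v_{k,m}$ from~\eqref{eq:stovec}. Because $j\notin C_{i_{k,m}}$ for every $m\in[m_1,m_2-1]$, i.e.\ $x_{i_{k,m}}^{(j)}=0$, the $h$-terms in the gradients appearing in~\eqref{eq:stovec} vanish in coordinate $j$ and the $\lambda_1\w_t$ contributions cancel, leaving $v_{k,m}^{(j)}=\lambda_1 u_{k,m}^{(j)}+z^{(j)}$, where $z^{(j)}$ denotes the $j$-th coordinate of $\nabla\big(\frac1n\sum_i h_i(\x_i^T\cdot)\big)$ evaluated at $\w_t$ (a constant throughout the epoch). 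Hence for $m\in[m_1,m_2-1]$ the coordinate obeys the scalar recursion $u_{k,m+1}^{(j)}=\mathrm{sign}(\theta_m)\max\{|\theta_m|-\eta\lambda_2,\,0\}$ with $\theta_m:=(1-\lambda_1\eta)u_{k,m}^{(j)}-\eta z^{(j)}$, i.e.\ soft-thresholding at level $\eta\lambda_2$; from here the argument is entirely one-dimensional.

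\noindent Next I would record how this scalar map behaves under the hypotheses $|z^{(j)}|<\lambda_2$ and $\lambda_1\eta\in(0,1)$ (the latter a mild condition met by the step sizes used in the paper). Write $c:=\eta(z^{(j)}+\lambda_2)$, which is positive by the first hypothesis. Two facts suffice: (i) if $u_{k,m}^{(j)}=0$ then $u_{k,m+1}^{(j)}=0$, because then $\theta_m=-\eta z^{(j)}$ and $|\eta z^{(j)}|<\eta\lambda_2$; and (ii) if $u_{k,m}^{(j)}>0$, then $u_{k,m+1}^{(j)}=(1-\lambda_1\eta)u_{k,m}^{(j)}-c\ge 0$ when $u_{k,m}^{(j)}\ge c/(1-\lambda_1\eta)$, whereas $u_{k,m+1}^{(j)}=0$ when $0<u_{k,m}^{(j)}<c/(1-\lambda_1\eta)$ --- the latter because in that range $|\theta_m|<\eta\lambda_2$ (using both $z^{(j)}+\lambda_2>0$ and $\lambda_2-z^{(j)}>0$). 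Starting from $u_{k,m_1}^{(j)}>0$ and combining (i) and (ii), the sequence stays on the affine branch $u\mapsto(1-\lambda_1\eta)u-c$ until the first index at which it drops strictly below $c/(1-\lambda_1\eta)$, and equals $0$ thereafter.

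\noindent It then remains to solve the affine recursion $u_{r+1}=(1-\lambda_1\eta)u_r-c$ with $u_0=u_{k,m_1}^{(j)}$ and to locate that first index. Telescoping gives $u_r=(1-\lambda_1\eta)^r u_0-c\sum_{i=0}^{r-1}(1-\lambda_1\eta)^i$, and the definition of $\alpha_r$ is exactly such that $\sum_{i=0}^{r-1}(1-\lambda_1\eta)^i=(1-\lambda_1\eta)^r\alpha_r$; hence $u_r=(1-\lambda_1\eta)^r\big(u_{k,m_1}^{(j)}-\alpha_r\,\eta(z^{(j)}+\lambda_2)\big)$, which with $r=m_2-m_1$ is precisely the formula of case~1. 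Moreover $\alpha_{r+1}-\alpha_r=(1-\lambda_1\eta)^{-(r+1)}>0$, so $(\alpha_r)_{r\ge0}$ is strictly increasing and the index $q_0$ with $\alpha_{q_0}\,\eta(z^{(j)}+\lambda_2)\le u_{k,m_1}^{(j)}<\alpha_{q_0+1}\,\eta(z^{(j)}+\lambda_2)$ is well-defined. By fact~(ii), step $r\to r+1$ stays affine iff $u_r\ge c/(1-\lambda_1\eta)=\alpha_1\,\eta(z^{(j)}+\lambda_2)$; substituting the closed form for $u_r$ together with $\alpha_{r+1}=\alpha_r+(1-\lambda_1\eta)^{-(r+1)}$ reduces this to $u_{k,m_1}^{(j)}\ge\alpha_{r+1}\,\eta(z^{(j)}+\lambda_2)$, which by the definition of $q_0$ holds precisely when $r+1\le q_0$. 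Therefore, if $m_2-m_1\le q_0$ the affine formula is valid all the way to $m_2$ (case~1), and if $m_2-m_1>q_0$ the sequence reaches $0$ at index $m_1+q_0+1$ and stays $0$ by fact~(i) (case~2).

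\noindent The step I expect to be the main obstacle is the bookkeeping around $q_0$: arranging the strict versus non-strict inequalities so that the boundary configurations (for instance $u_{k,m_1}^{(j)}=\alpha_{q_0}\,\eta(z^{(j)}+\lambda_2)$, which forces $u_{k,m_1+q_0}^{(j)}=0$) fall into the intended case, and verifying in fact~(ii) that soft-thresholding genuinely annihilates every positive sub-threshold argument --- both points consuming the hypothesis $|z^{(j)}|<\lambda_2$ in the sharpened forms $z^{(j)}+\lambda_2>0$ and $\lambda_2-z^{(j)}>0$. The remaining branches of the full recovery rule (the sign of $u_{k,m_1}^{(j)}$ negative, or $|z^{(j)}|\ge\lambda_2$) follow the same template with the thresholds relabelled.
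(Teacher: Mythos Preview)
Your proposal is correct and follows essentially the same approach as the paper: both reduce to the scalar soft-thresholding recursion $d^{(j)}=(1-\lambda_1\eta)u_{k,m}^{(j)}-\eta z^{(j)}$, verify by induction the closed form $u_{k,m_1+q}^{(j)}=(1-\lambda_1\eta)^{q}[u_{k,m_1}^{(j)}-\alpha_q\eta(z^{(j)}+\lambda_2)]$ while on the affine branch, and use the defining inequalities for $q_0$ to show the iterate enters the threshold zone (hence becomes and stays $0$) at step $q_0+1$. Your write-up is more explicit than the paper's (which largely says ``it is easy to verify''): you spell out the monotonicity of $(\alpha_q)$, the equivalence $u_r\ge c/(1-\lambda_1\eta)\Leftrightarrow u_{k,m_1}^{(j)}\ge\alpha_{r+1}c$, and the two-sided use of $|z^{(j)}|<\lambda_2$ in fact~(ii), but the logical skeleton is identical.
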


\section{Experiment}
We use two sparse learning models for evaluation. One is logistic regression~(LR) with elastic net~\cite{Zou05regularizationand}: $P(\w) = \frac{1}{n}\sum_{i=1}^{n}\log(1+e^{-y_i\x_i^T\w}) + \frac{\lambda_1}{2}\|\w\|^2 + \lambda_2\|\w\|_1$.
The other is Lasso regression~\cite{Tibshirani94regressionshrinkage}: $P(\w) = \frac{1}{2n}\sum_{i=1}^{n}(\x_i^T\w-y_i)^2 + \lambda_2\|\w\|_1$. All experiments are conducted on a cluster of multiple machines. The CPU for each machine has 12 Intel E5-2620 cores, and the memory of each machine is 96GB. The machines are connected by 10GB Ethernet. Evaluation is based on four datasets in Table 1: cov, rcv1, avazu, kdd2012. All of them can be downloaded from LibSVM website~\footnote{\url{https://www.csie.ntu.edu.tw/~cjlin/libsvmtools/datasets/}}.
\begin{table}[ht]
\caption{Datasets}
\begin{center}
\small
\begin{tabular}{|l|l|l|l|c|}
  \hline
  ~     & \#instances   & \#features   & $\lambda_1$ & $\lambda_2$ \\ \hline
  cov   & 581,012     & 54         & $10^{-5}$   & $10^{-5}$   \\ \hline
  rcv1  & 677,399     & 47,236     & $10^{-5}$   & $10^{-5}$   \\ \hline
  avazu & 23,567,843  & 1,000,000  & $10^{-7}$   & $10^{-5}$   \\ \hline
  kdd2012 & 119,705,032 & 54,686,452 & $10^{-8}$   & $10^{-5}$   \\
  \hline
\end{tabular}
\end{center}
\end{table}

\subsection{Baselines}
We compare our pSCOPE with six representative baselines: proximal gradient descent based method FISTA~\cite{DBLP:journals/siamis/BeckT09}, ADMM type method \mbox{DFAL}~\cite{DBLP:conf/icml/AybatWI15}, newton type method \mbox{mOWL-QN}~\cite{DBLP:conf/icml/GongY15}, proximal SVRG based method AsyProx-SVRG~\cite{DBLP:conf/aaai/MengCYWML17}, proximal SDCA based method PROXCOCOA$+$~\cite{DBLP:journals/corr/SmithFJJ15}, and distributed block
coordinate descent DBCD~\cite{DBLP:journals/jmlr/MahajanKS17}. FISTA and mOWL-QN are serial. We design distributed versions of them, in which workers distributively compute the gradients and then master gathers the gradients from workers for parameter update.

All methods use 8 workers. One master will be used if necessary. Unless otherwise stated, all methods except DBCD and PROXCOCOA$+$ use the same data partition, which is got by uniformly assigning each instance to each worker~(uniform partition). Hence, different workers will have almost the same number of instances. This uniform partition strategy satisfies the condition in Lemma~\ref{lemma: Good partition}. Hence, it is a \emph{good} partition. DBCD and PROXCOCOA$+$ adopt a coordinate distributed strategy to partition the data.

\subsection{Results}

The convergence results of LR with elastic net and Lasso regression are shown in Figure 1. DBCD is too slow, and hence we will separately report the time of it and pSCOPE when they get $10^{-3}$-suboptimal solution in Table 2. AsyProx-SVRG is slow on the two large datasets avazu and kdd2012, and hence we only present the results of it on the datasets cov and rcv1. From Figure 1 and Table 2, we can find that pSCOPE outperforms all the other baselines on all datasets.
\begin{figure*}[thb]
\centering
\subfigure[LR with elastic net]{
    \includegraphics[width=1.35in]{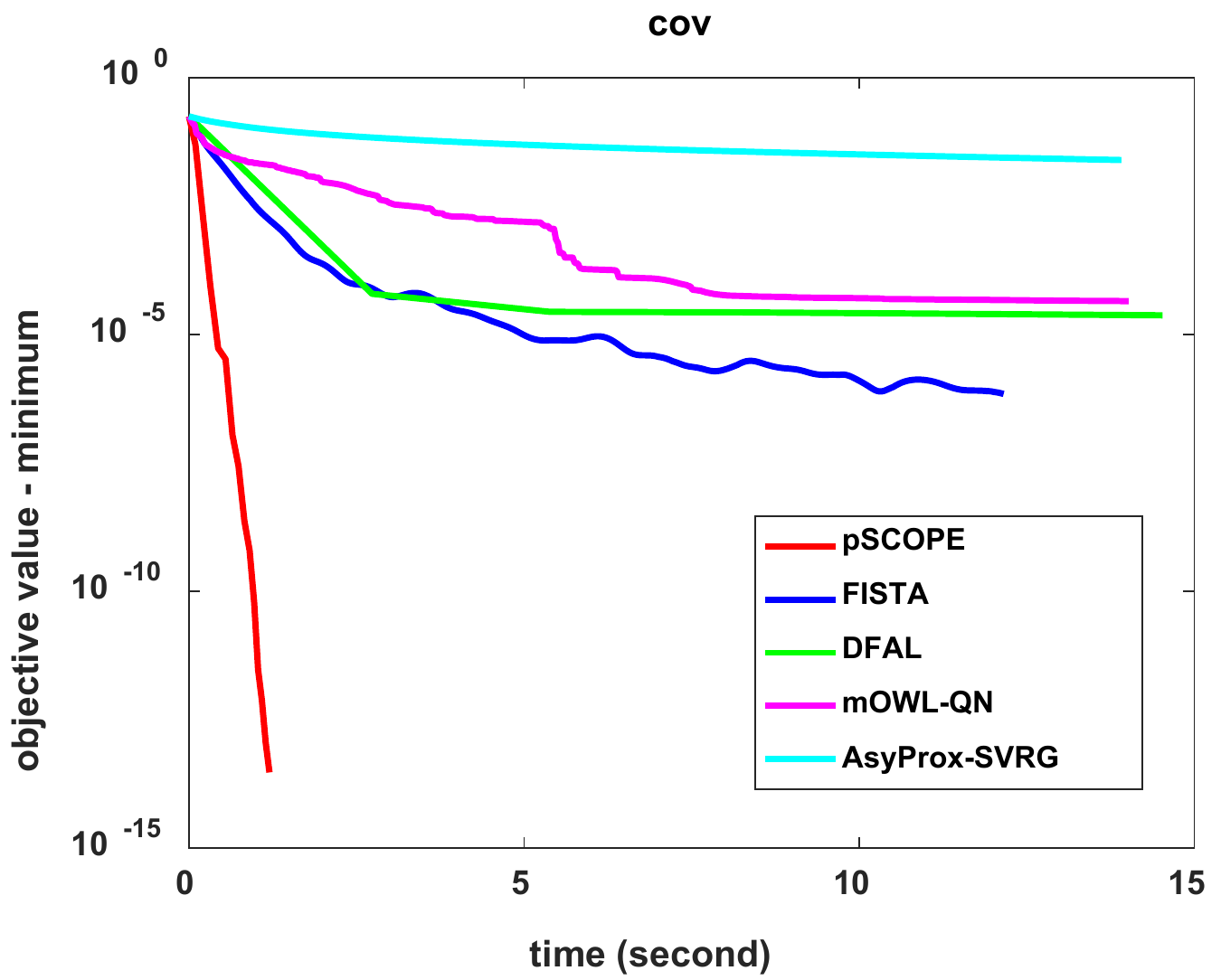}
    \includegraphics[width=1.35in]{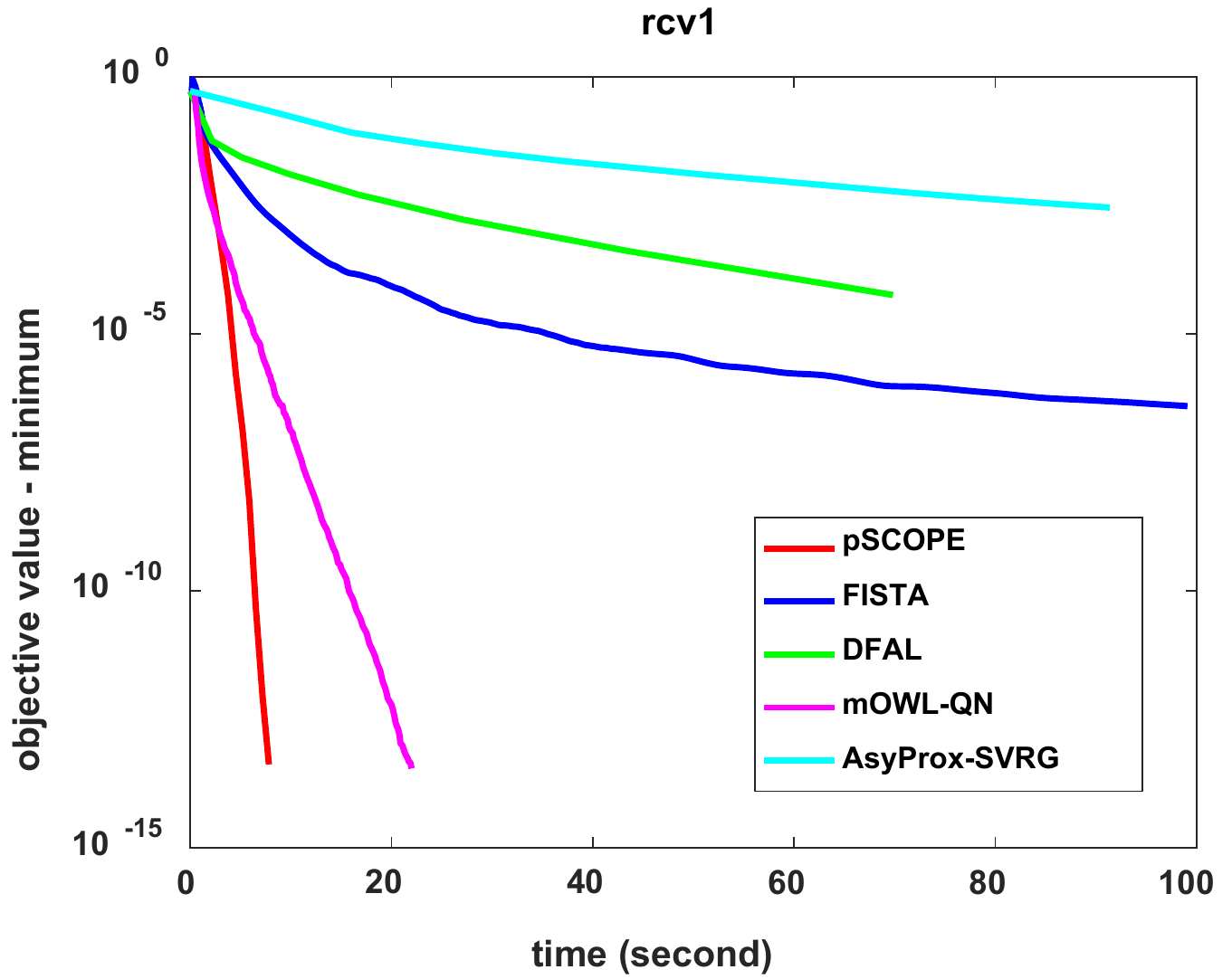}
    \includegraphics[width=1.35in]{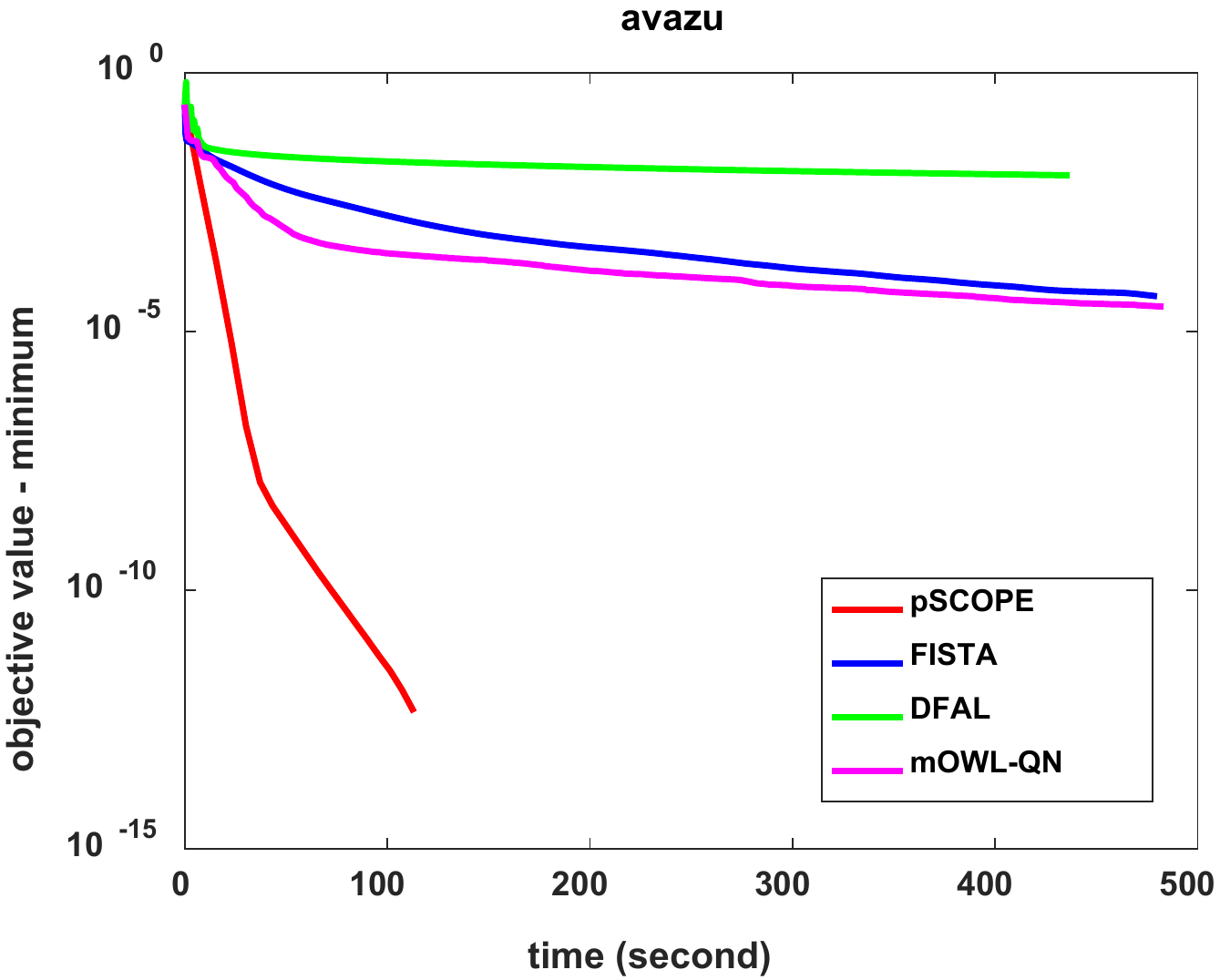}
    \includegraphics[width=1.35in]{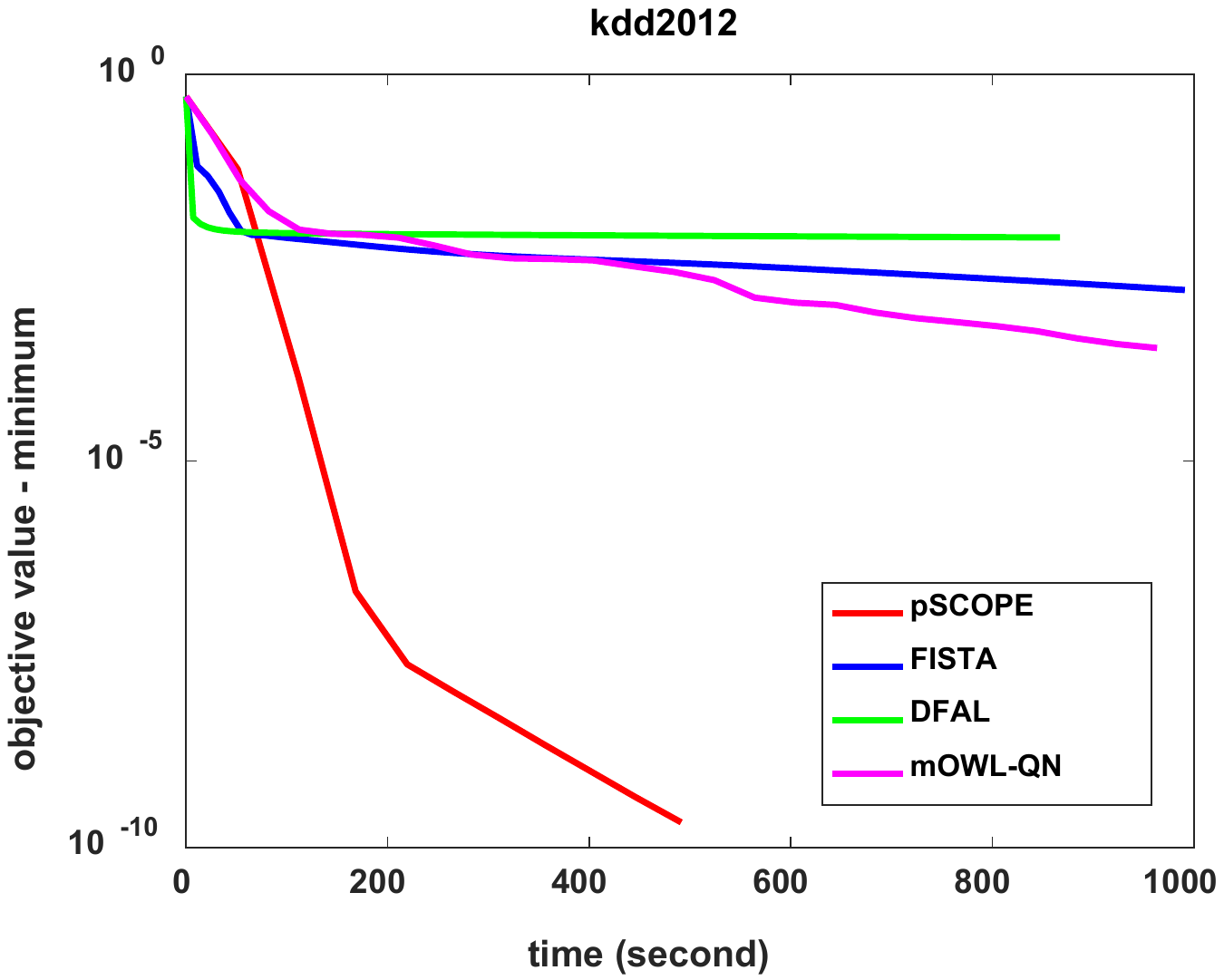}}
\subfigure[Lasso regression]{
    \includegraphics[width=1.35in]{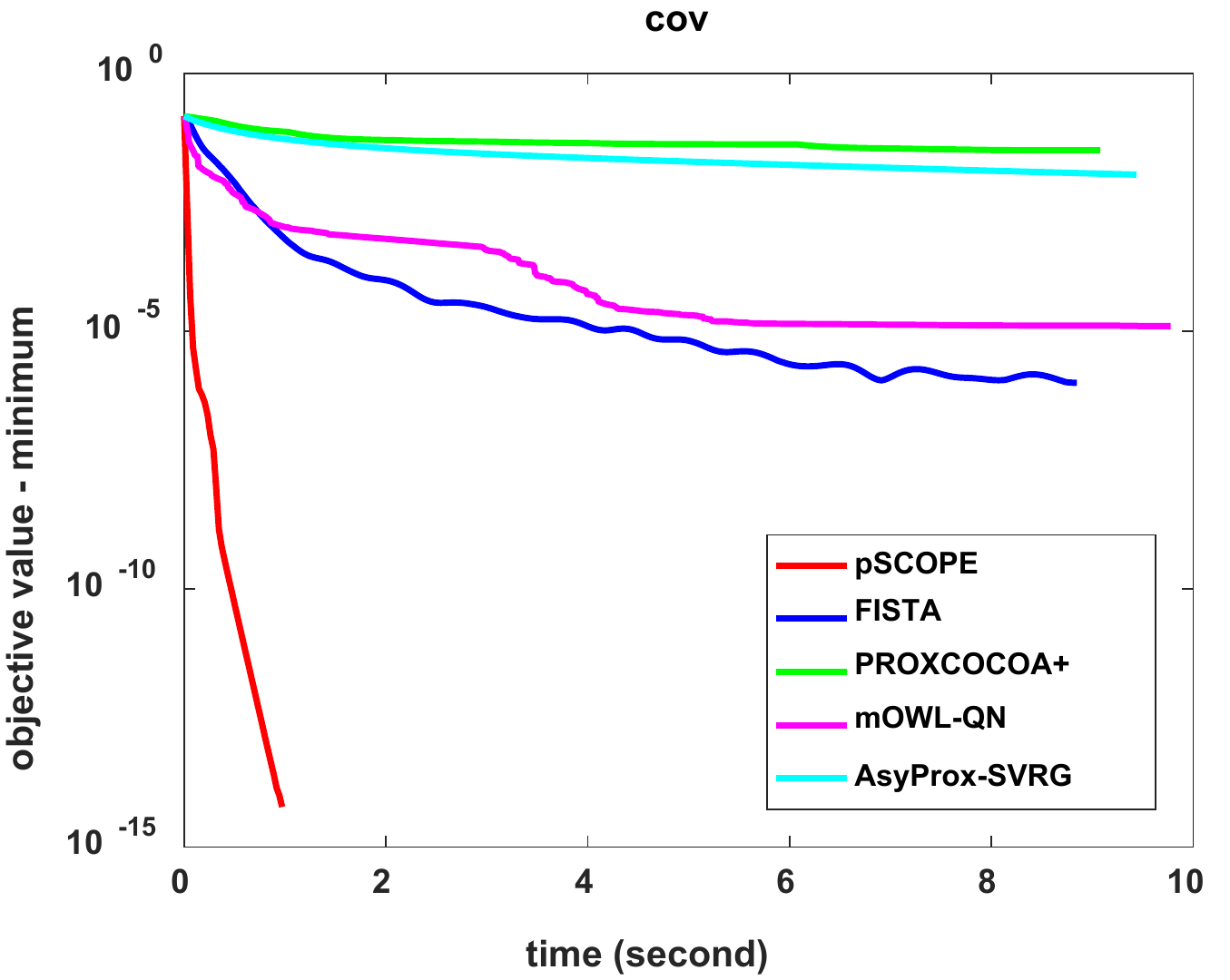}
    \includegraphics[width=1.35in]{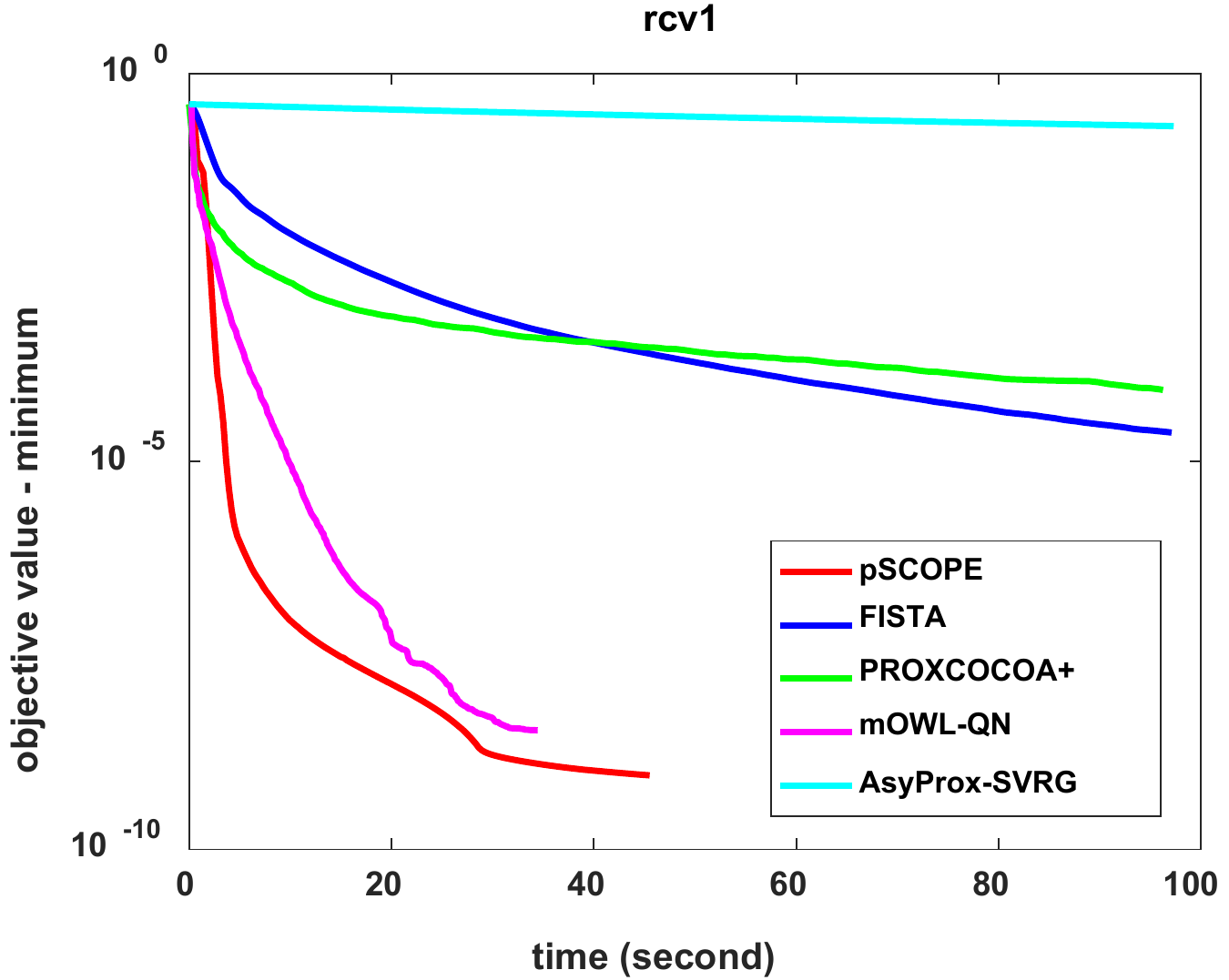}
    \includegraphics[width=1.35in]{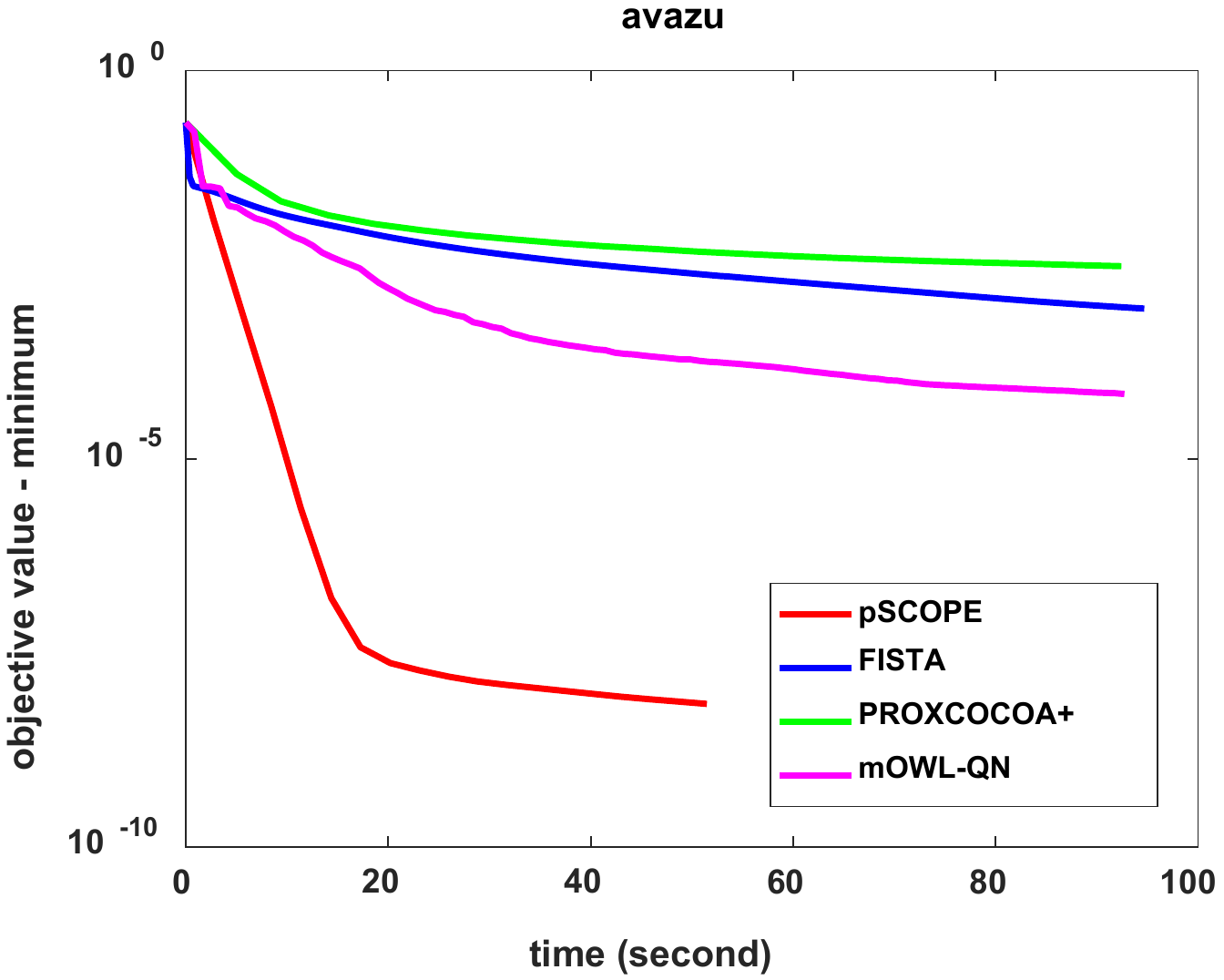}
    \includegraphics[width=1.35in]{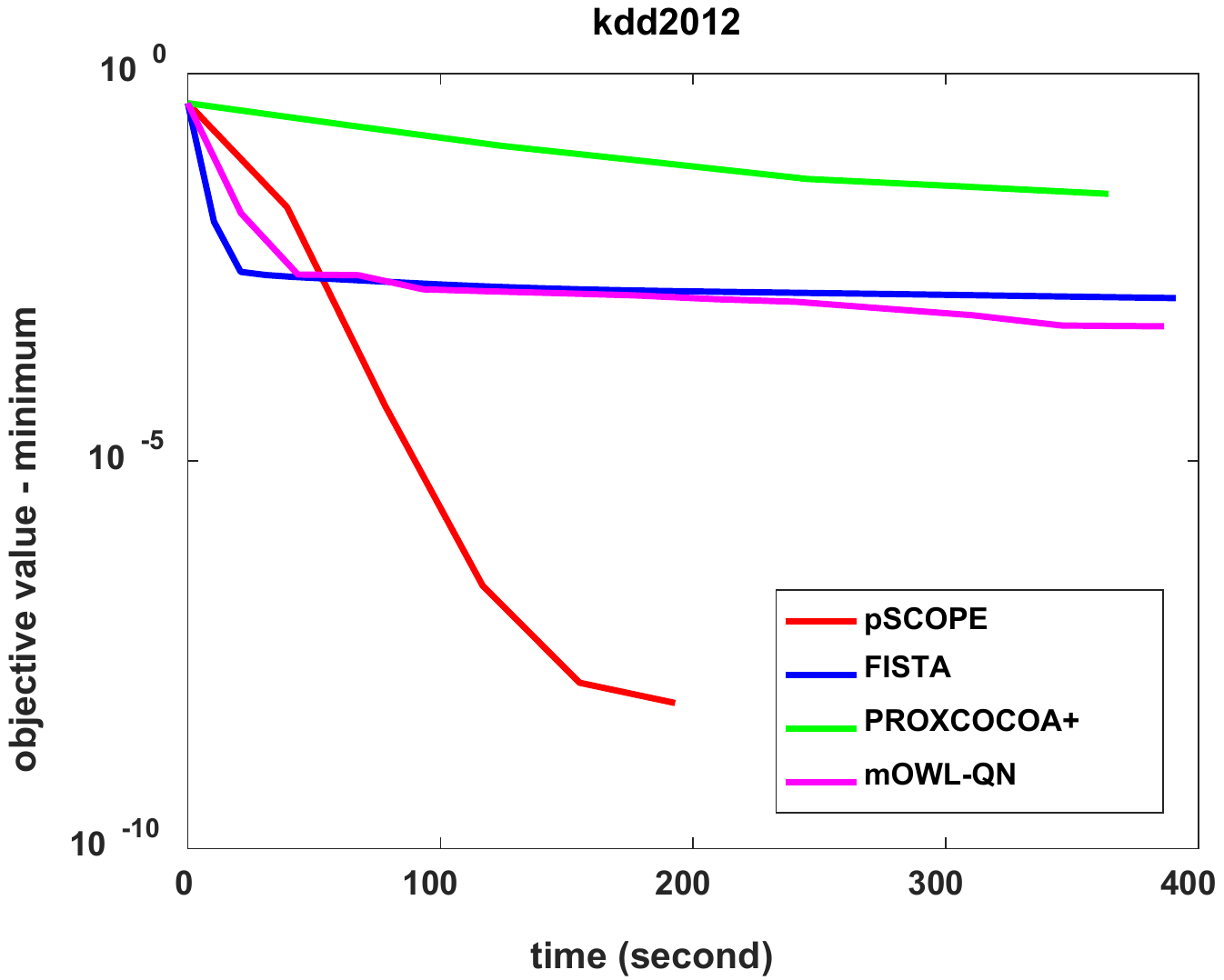}}
\caption{Evaluation with baselines on two models.}
\end{figure*}

\begin{table}[thb]
  \centering
  \caption{Time comparison~(in second) between pSCOPE and DBCD.}
  \begin{tabular}{|c|c|c|c|}
  \hline
  \multicolumn{2}{|c|}{} & pSCOPE & {DBCD}\\\hline
     & cov  & 0.32  & 822 \\\cline{2-4}
   {LR}
   & rcv1 & 3.78 & $>1000$  \\\hline
   & cov & 0.06 & 81.9  \\\cline{2-4}
   {Lasso}
            & rcv1 & 3.09 & $>1000$ \\\hline
  \end{tabular}
\end{table}

\subsection{Speedup}
We also evaluate the speedup of pSCOPE on the four datasets for LR. We run pSCOPE and stop it when the gap $P(\w) - P(\w^*) \leq 10^{-6}$. The speedup is defined as: ${\mbox{Speedup}} = (\mbox{Time using one worker})/(\mbox{Time using } p \mbox{ workers})$.
We set $p = 1,2,4,8$. The speedup results are in Figure 2~(a). We can find that \mbox{pSCOPE} gets promising speedup.
\begin{figure}[thb]
  \centering
  \subfigure[Speedup of pSCOPE]{\includegraphics[width=1.6in]{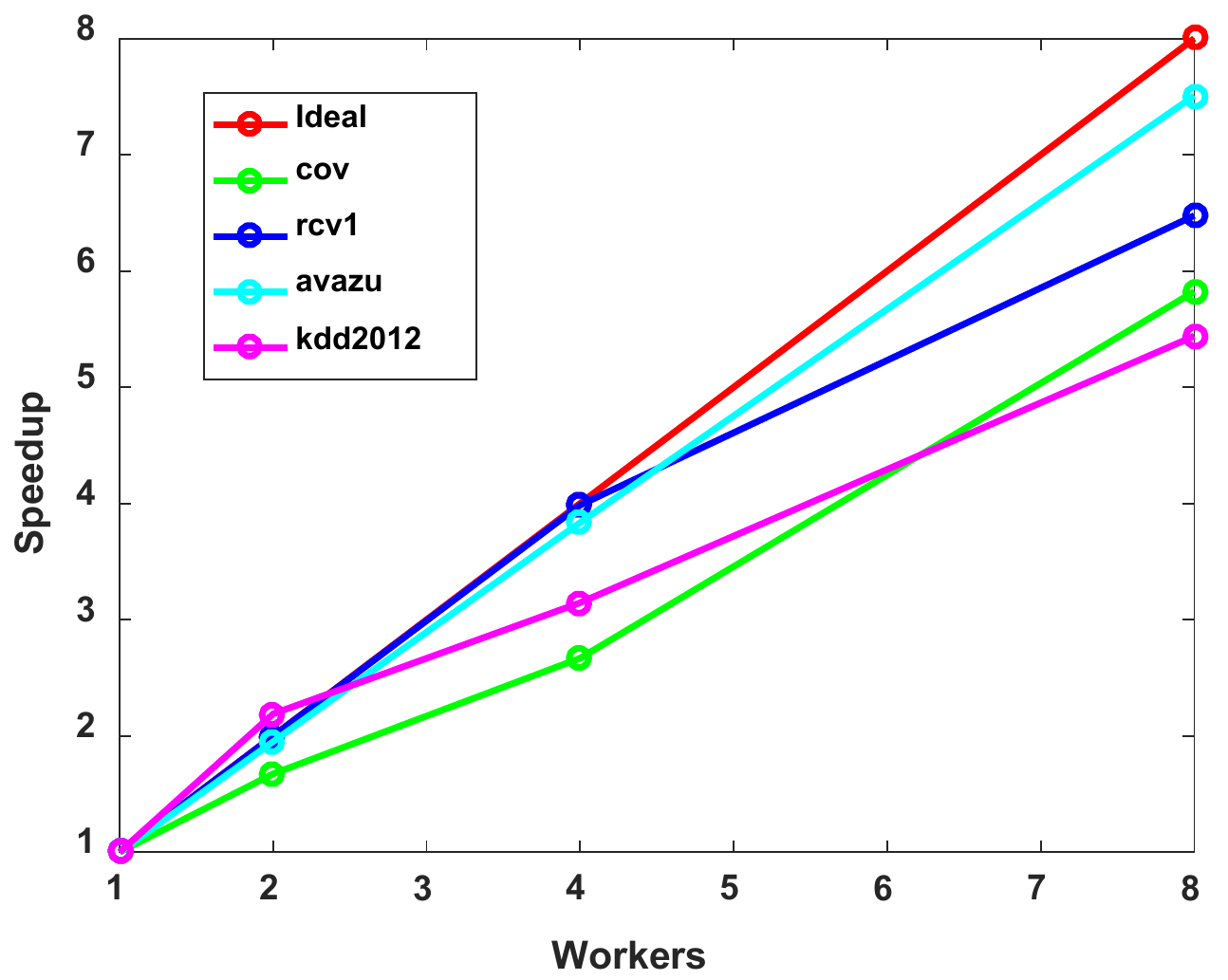}}
  \subfigure[Effect of data partition]{\includegraphics[width=1.6in]{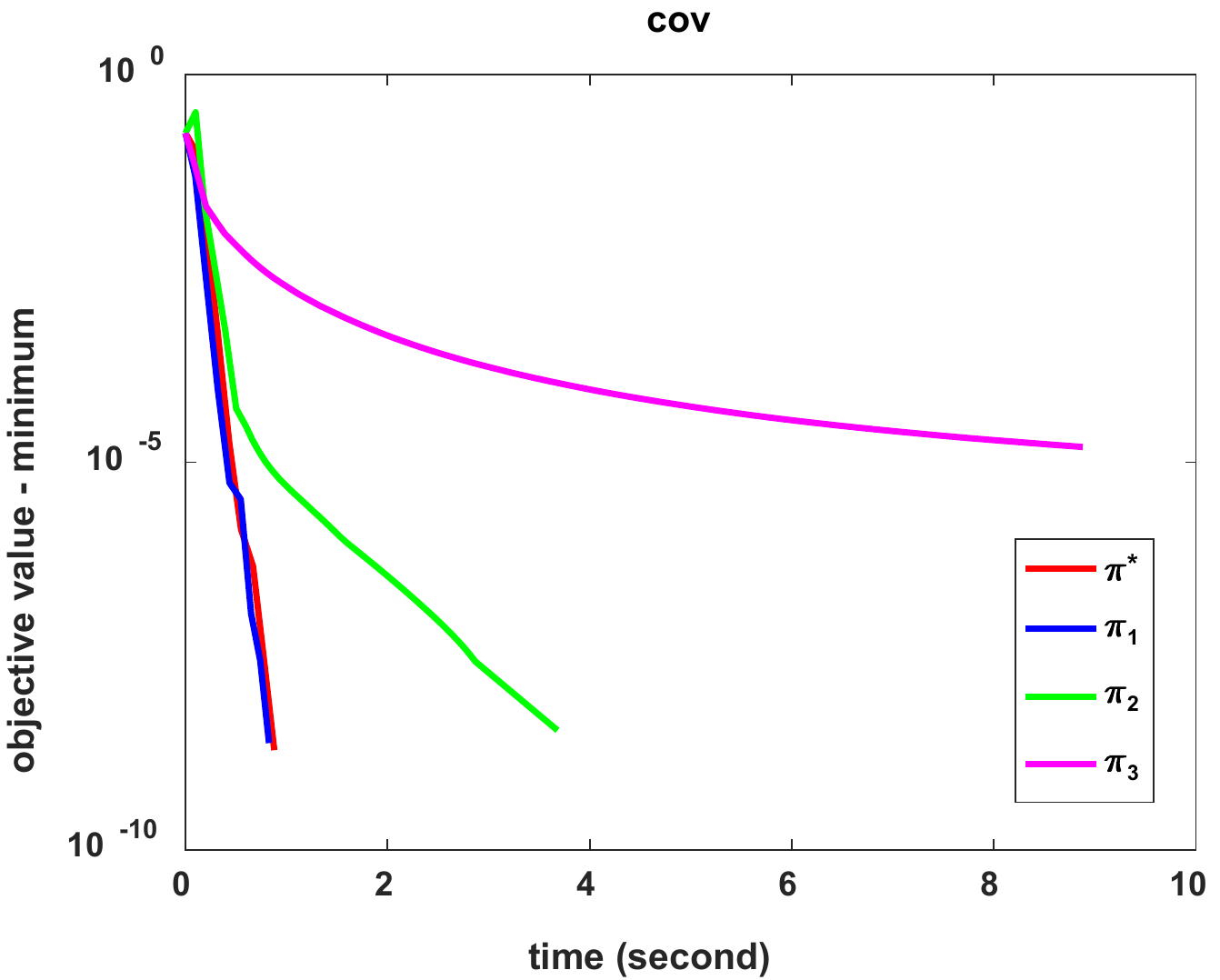}
       \includegraphics[width=1.6in]{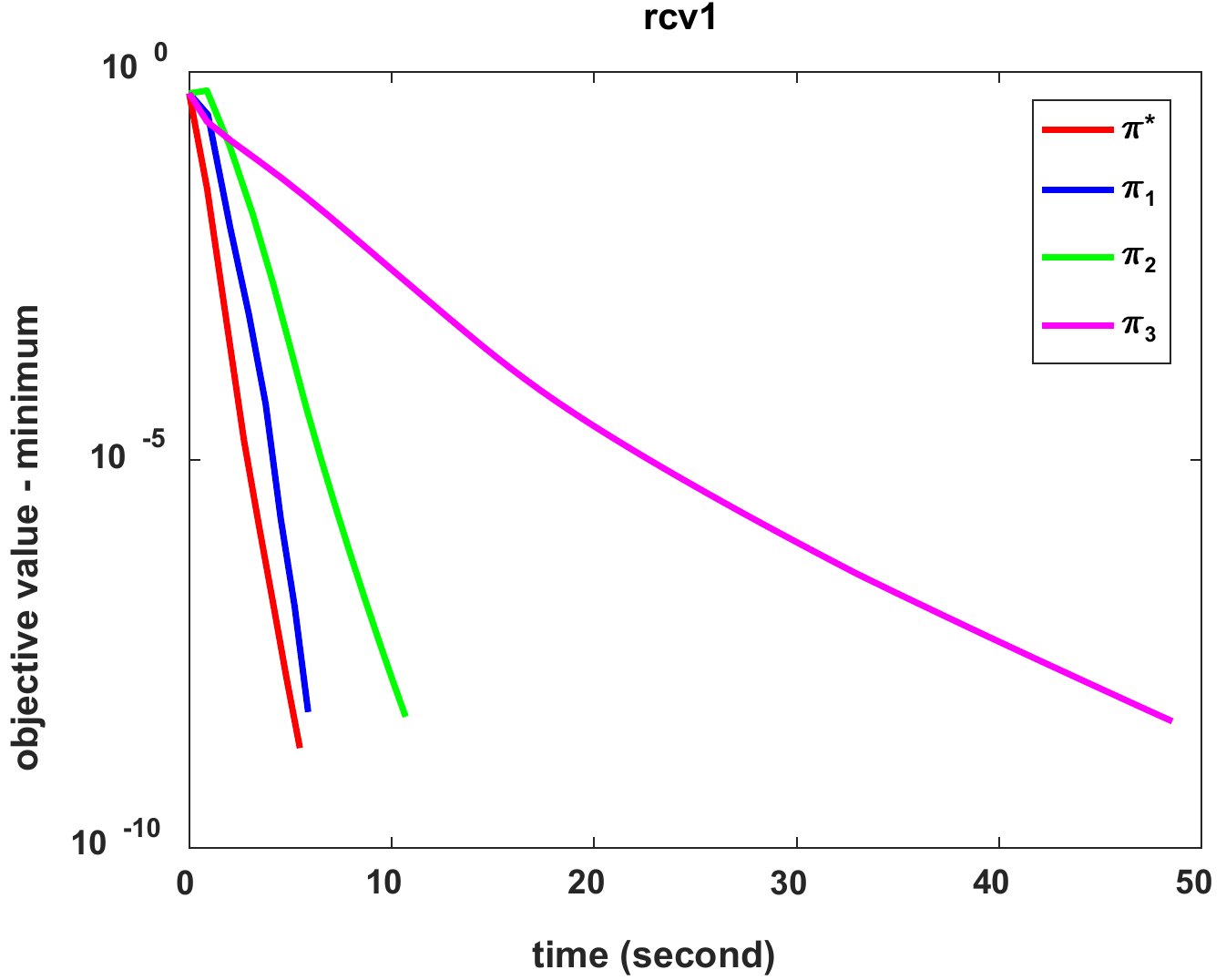}}
  \caption{Speedup and effect of data partition}
\end{figure}

\subsection{Effect of Data Partition}

We evaluate pSCOPE under different data partitions. We use two datasets cov and rcv1 for illustration, since they are balanced datasets which means the number of positive instances is almost the same as that of negative instances. For each dataset, we construct four data partitions: $\pi^*$~(Each worker has access to the whole data), $\pi_1$~(Uniform partition); $\pi_2$~(75\% positive instances and 25\% negative instances are on the first 4 workers, and other instances are on the last 4 workers), $\pi_3$~(All positive instances are on the first 4 workers, and all negative instances are on the last 4 workers).

The convergence results are shown in Figure 2~(b). We can see that data partition does affect the convergence of pSCOPE. The partition $\pi^*$  achieves the best performance, which verifies the theory in this paper~\footnote{The proof of that $\pi^*$ is the best partition and $\gamma(\pi^*,0) = 0$ is in the appendix}. The performance of uniform partition $\pi_1$ is similar to that of the best partition $\pi^*$, and is better than the other two data partitions. In real applications with large-scale dataset, it is impractical to assign each worker the whole dataset. Hence, we prefer to choose uniform partition $\pi_1$ in real applications, which is also adopted in above experiments of this paper.

\section{Conclusion}

In this paper, we propose a novel method, called pSCOPE, for distributed sparse learning. Furthermore, we theoretically analyze how the data partition affects the convergence of pSCOPE. pSCOPE is both communication and computation efficient. Experiments on real data show that pSCOPE can outperform other state-of-the-art methods to achieve the best performance.

\small
\bibliography{ref}
\bibliographystyle{plain}

\clearpage
\appendix

\section{Effect of Data Partition}

\subsection{Proof of Lemma \ref{lem:dual}}
$\forall \pi \in A(P)$, $l_\pi(\a) = P(\w^*) + \frac{1}{p}\sum_{k=1}^p H_k^*(-G_k(\a)) \geq l_\pi(\w^*) = 0, \forall \a$, where $H_k^*(\cdot)$ is the conjugate function of $\phi_k(\cdot) + R(\cdot)$.

\begin{proof}
According to Definition \ref{def:Local-Global Gap} and $\w_k^*(\a)$, we have
\begin{align}
	l_\pi(\a) =    & P(\w^*) - \frac{1}{p}\sum_{i=1}^p P_k(\w_k^*(\a);\a) \nonumber \\
	          =    & \frac{1}{p}\sum_{i=1}^p (P_k(\w^*;\a) - P(\w_k^*(\a);\a)) \geq 0 \nonumber
\end{align}
On the other hand, let $\a = \w^*$, then $\w_k^*(\w^*) = \mathop{\arg\min}_\w P_k(\w;\w^*)$, and
\begin{align}
    & \nabla \phi_k(\w^*;\w^*) + \zeta \nonumber \\
  = & \nabla \phi_k(\w^*) + \nabla F(\w^*) - \nabla \phi_k(\w^*) + \zeta \nonumber \\
  = & \nabla F(\w^*) + \zeta \nonumber \\
  = & \0 \nonumber
\end{align}
where $\zeta \in \partial R(\w^*)$ so that $\nabla F(\w^*) + \zeta = \0$. It implies that $\w^*=\mathop{\arg\min}_\w P_k(\w;\w^*)$. Due to the strong convexity of $P_k(\w;\w^*)$ w.r.t $\w$, we have $\w_k^*(\w^*) = \w^*$, which means $l_\pi(\w^*) = 0$.

For the dual form, according to (\ref{eq:pk}) and definition of $\w_k^*(\y)$, we have
\begin{align}
	P(\w_k^*(\a);\a) = & \mathop{\min}_\w (\phi_k(\w) + R(\w) + G_k(\a)^T\w)) \nonumber \\
	                 = & -\mathop{\max}_\w (-G_k(\a)^T\w - (\phi_k(\w) + R(\w))) \nonumber \\
	                 = & -H_k^*(-G_k(\a)) \nonumber
\end{align}
Then we have
\begin{align}
	l_\pi(\a) = P(\w^*) + \frac{1}{p}\sum_{k=1}^p H_k^*(-G_k(\a)) \nonumber
\end{align}
\end{proof}

\subsection{Proof of Theorem \ref{theorem:L1 qualified pi}}
\subsubsection{Warm up: quadratic function}
\label{sec:quadratic function}

We start with the simple quadratic function in one dimension space that $$\phi_k(w) = \frac{1}{2}m_k w^2 + b_k w + c_k,$$ and $$F(w) = \frac{1}{2}m w^2 + b w + c, R(w) = |w|,$$ where $w,m_k,b_k,c_k \in \RB, m_k>0$,
and $\frac{1}{p}\sum_{k=1}^p m_k = m, \frac{1}{p}\sum_{k=1}^p b_k = b, \frac{1}{p}\sum_{k=1}^p c_k = c$ so that $\pi = [\phi_1(\cdot),\ldots,\phi_p(\cdot)] \in A(P)$. The corresponding $P_k(\cdot;\cdot)$ is defined as
\begin{align}\label{eq:lgp_quadratic}
	P_k(w;a) = & \frac{1}{2}m_k w^2 + b_k w + c_k + (ma + b - m_ka-b_k)w + |w| \nonumber \\
	         = & \frac{1}{2}m_k w^2 + (ma+b-m_ka) w + c_k + |w|
\end{align}
Then we have the following lemma:

\begin{lemma}\label{lemma: quadratic function}
With $\phi_k(\cdot), F(\cdot), R(\cdot)$ defined above,
\begin{align}
P(w^*) - \frac{1}{p}\sum_{k=1}^p P_k(w_k^*(a);a) \leq \gamma (a-w^*)^2, \forall a\in \RB \nonumber
\end{align}
where $\gamma = \frac{1}{p}\sum_{k=1}^p \frac{(m-m_k)^2}{m_k}$.
\end{lemma}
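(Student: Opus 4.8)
The plan is to compute $l_\pi(a) = P(w^*) - \frac{1}{p}\sum_k P_k(w_k^*(a);a)$ in closed form and then extract the quadratic bound in $(a - w^*)^2$. First I would solve the global problem: $P(w) = \frac{1}{2}mw^2 + bw + c + |w|$ is a one-dimensional strongly convex function, so $w^*$ is determined by the subgradient condition $mw^* + b + \zeta = 0$ with $\zeta \in \partial|w^*|$, yielding the soft-thresholding formula $w^* = -\frac{1}{m}\mathrm{sign}(b)\max\{|b| - 1, 0\}$, i.e. $w^* = 0$ iff $|b| \le 1$. Then I would solve each local problem: from \eqref{eq:lgp_quadratic}, $P_k(w;a) = \frac{1}{2}m_k w^2 + (ma + b - m_k a)w + c_k + |w|$, so writing $g_k(a) = ma + b - m_k a = (m - m_k)a + b$ for the linear coefficient, the minimizer is again given by soft-thresholding: $w_k^*(a) = -\frac{1}{m_k}\mathrm{sign}(g_k(a))\max\{|g_k(a)| - 1, 0\}$.

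Next I would plug these back in. Using the identity that for $h(w) = \frac12 \mu w^2 + \beta w + |w|$ with minimizer $w^\dagger$, the optimal value is $h(w^\dagger) = -\frac{1}{2\mu}(\max\{|\beta|-1,0\})^2$ (and $0$ when $|\beta|\le 1$), I get $P(w^*) = c - \frac{1}{2m}(\max\{|b|-1,0\})^2$ and $P_k(w_k^*(a);a) = c_k - \frac{1}{2m_k}(\max\{|g_k(a)|-1,0\})^2$. Since $\frac1p\sum_k c_k = c$, the constants cancel and
\begin{align}
l_\pi(a) = \frac{1}{p}\sum_{k=1}^p \frac{1}{2m_k}\big(\max\{|g_k(a)|-1,0\}\big)^2 - \frac{1}{2m}\big(\max\{|b|-1,0\}\big)^2. \nonumber
\end{align}
Now observe $g_k(a) = (m-m_k)a + b$ while $b = g_k(w^*) - (m - m_k)w^* + (m-m_k)w^*$... more usefully $g_k(a) - g_k(w^*) = (m-m_k)(a - w^*)$, and note $g_k(w^*) = (m-m_k)w^* + b$. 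I would then bound each term $\frac{1}{2m_k}(\max\{|g_k(a)|-1,0\})^2$ against $\frac{1}{2m_k}(\max\{|g_k(w^*)|-1,0\})^2$ plus an error controlled by $|g_k(a) - g_k(w^*)|$, using that $t \mapsto \max\{|t|-1,0\}$ is $1$-Lipschitz. The cleanest route: since $\max\{|g_k(a)|-1,0\} \le \max\{|g_k(w^*)|-1,0\} + |m-m_k|\,|a-w^*|$, and since a separate computation shows $\frac1p\sum_k \frac{1}{2m_k}(\max\{|g_k(w^*)|-1,0\})^2 = \frac{1}{2m}(\max\{|b|-1,0\})^2$ (this is exactly $l_\pi(w^*) = 0$ from Lemma~\ref{lem:dual}, since $g_k(w^*) = -G_k(w^*)$ in scalar form and $w^*$ minimizes each $P_k(\cdot;w^*)$), the cross terms and the negative term combine, leaving $l_\pi(a) \le \frac{1}{p}\sum_k \frac{(m-m_k)^2}{2m_k}(a-w^*)^2 \cdot C$ for an absolute constant; I would chase the exact constant to land on $\gamma = \frac1p\sum_k \frac{(m-m_k)^2}{m_k}$.

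The main obstacle is the bookkeeping around the $\max\{\cdot,0\}$ (soft-threshold) kinks: naively expanding $(\max\{x+y,0\})^2 \le (\max\{x,0\}+|y|)^2 = \max\{x,0\}^2 + 2|y|\max\{x,0\} + y^2$ leaves a linear-in-$|a-w^*|$ cross term $2|y|\max\{x,0\}$ that must be absorbed. The key fact that makes this work is that at $a = w^*$ the gap vanishes term-by-term in the right sense — more precisely, $\mathrm{sign}(g_k(w^*))$ aligns with $\mathrm{sign}(b)$ and the soft-thresholded quantities satisfy $\frac{1}{m_k}\max\{|g_k(w^*)|-1,0\}\,\mathrm{sign}(g_k(w^*)) = -w_k^*(w^*) = -w^*$ independent of $k$ — so the cross terms telescope against the $-\frac{1}{2m}(\max\{|b|-1,0\})^2$ term rather than needing to be bounded crudely. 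I would organize the proof by first treating the easy case $w^* = 0$ (equivalently $|b|\le 1$), where $l_\pi(a) = \frac1p\sum_k \frac{1}{2m_k}(\max\{|(m-m_k)a + b|-1,0\})^2 \le \frac1p\sum_k\frac{1}{2m_k}((m-m_k)a)^2$ directly since $\max\{|x+b|-1,0\}\le|x|$ when $|b|\le1$, giving the bound with $(a - w^*)^2 = a^2$; then handling the general case by the translation $w = v + w^*$ to reduce to an analogous structure, being careful that the regularizer $|w|$ is not translation invariant but its effect is captured entirely through the already-computed closed forms.
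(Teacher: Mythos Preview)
Your closed-form setup is correct and matches the paper: both compute $P_k(w_k^*(a);a)=c_k-\frac{1}{2m_k}(\max\{|g_k(a)|-1,0\})^2$ with $g_k(a)=(m-m_k)a+b$, and then case-split on whether $|b|\le 1$ (i.e.\ $w^*=0$) or $|b|>1$. Your treatment of $|b|\le 1$ is clean and in fact sharper than the paper's (you land on $\gamma/2$ where the paper, via the cruder $[(m-m_k)a+b\pm1]^2\le 2[(m-m_k)a]^2$, gets $\gamma$).

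The gap is in the case $|b|>1$. The crude Lipschitz expansion you write out leaves the cross term $\frac{|w^*|\,|a-w^*|}{p}\sum_k|m-m_k|$, which is genuinely linear in $|a-w^*|$ and cannot be absorbed into any quadratic bound, so that route is dead. Your proposed fix via sign alignment is the right instinct, but the mechanism is misdescribed: the term $-\frac{1}{2m}(\max\{|b|-1,0\})^2=-\frac{m}{2}(w^*)^2$ cancels the \emph{constant} $\frac{1}{p}\sum_k\frac{m_k}{2}(w^*)^2$ coming from $S_k(w^*)^2$; the cross term has to vanish \emph{separately}, and it only does so if it is kept \emph{signed}, via $\frac{1}{p}\sum_k(m-m_k)=0$, not $\frac{1}{p}\sum_k|m-m_k|$. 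Concretely, when $b<-1$ and $g_k(a)<-1$ one has the exact identity $\max\{|g_k(a)|-1,0\}=-g_k(a)-1=m_kw^*-(m-m_k)(a-w^*)$, and squaring this makes the signed cross term sum to zero. But this identity fails for those $k$ with $g_k(a)>1$: your sign alignment holds at $a=w^*$, not for general $a$. The paper closes this by partitioning $k$ into $K_1=\{g_k(a)<-1\}$, $K_2=\{|g_k(a)|\le1\}$, $K_3=\{g_k(a)>1\}$ and, for $k\in K_3$, using the elementary inequality $[g_k(a)-1]^2\le[g_k(a)+1]^2$ (valid since $g_k(a)>0$) to bound the $K_3$ contribution by the same signed expression as $K_1$; the $K_2$ contribution is zero. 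Only after this does the signed cancellation go through. Your proposal glosses over this $K_3$ case, and the final ``translation $w=v+w^*$'' idea is unnecessary once the closed forms are in hand.
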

\begin{proof}
For convenience, we define three sets
\begin{align}
	& K_1(a) = \{k|(m-m_k)y + b < -1\} \nonumber \\
	& K_2(a) = \{k|(m-m_k)y + b \in [-1,1]\} \nonumber \\
	& K_3(a) = \{k|(m-m_k)y + b > 1\} \nonumber
\end{align}
Then it is easy to find that
\begin{itemize}
	\item if $k\in K_1(a)$, then $w_k^*(a) = -\frac{(m-m_k)a+b+1}{m_k}$, $P_k(w_k^*(a);a) = -\frac{[(m-m_k)a+b+1]^2}{2m_k} + c_k$;
	\item if $k\in K_2(a)$, then $w_k^*(a) = 0$, $P_k(w_k^*(a);a) = c_k$;
	\item if $k\in K_3(a)$, then $w_k^*(a) = -\frac{(m-m_k)a+b-1}{m_k}$, $P_k(w_k^*(a);a) = -\frac{[(m-m_k)a+b-1]^2}{2m_k} + c_k$;
\end{itemize}

Now we calculate the Local-Global Gap.

Firstly we consider the case that $b<-1$, we have $w^* = -\frac{b+1}{m}$, $P(w^*) = -\frac{(b+1)^2}{2m}+c$, and
\begin{align}
	l_\pi(y) = & -\frac{(b+1)^2}{2m} - \frac{1}{p}[\sum_{k\in K_1(a)}-\frac{[(m-m_k)a+b+1]^2}{2m_k} \nonumber \\
			   & +\sum_{k\in K_3(a)}-\frac{[(m-m_k)a+b-1]^2}{2m_k}] \nonumber \\
	         = & -\frac{mw^{*2}}{2} + \frac{1}{p}[\sum_{k\in K_1(a)}\frac{[(m-m_k)a-mw^*]^2}{2m_k} \nonumber \\
	           & +\sum_{k\in K_3(a)}\frac{[(m-m_k)a-mw^*-2]^2}{2m_k}] \nonumber
\end{align}
For $k\in K_3(a)$, we have
\begin{align}
	     & [(m-m_k)a-mw^*-2]^2 \nonumber \\
	=    & [(m-m_k)a-mw^*]^2 - 4[(m-m_k)a-mw^*] + 4 \nonumber \\
	=    & [(m-m_k)a-mw^*]^2 - 4[(m-m_k)a+b+1] + 4 \nonumber \\
	\leq & [(m-m_k)a-mw^*]^2 \nonumber
\end{align}
Then we have
\begin{align}
	l_\pi(a) \leq & -\frac{mw^{*2}}{2} + \frac{1}{p}[\sum_{k\in K_1(a)}\frac{[(m-m_k)a-mw^*]^2}{2m_k} \nonumber \\
	              & +\sum_{k\in K_3(a)}\frac{[(m-m_k)a-mw^*]^2}{2m_k}] \nonumber \\
	         \leq & -\frac{mw^{*2}}{2} + \frac{1}{p}\sum_{k=1}^p\frac{[(m-m_k)a-mw^*]^2}{2m_k} \nonumber \\
	         =    &  \frac{1}{p}\sum_{k=1}^p \frac{(m-m_k)^2}{m_k}(a-w^*)^2 \nonumber
\end{align}

Secondly we consider the case that $b\in [-1,1]$, we have $w^*=0$, $P(w^*) = c$, and
\begin{align}
	l_\pi(a) = &  \frac{1}{p}[\sum_{k\in K_1(a)}\frac{[(m-m_k)a+b+1]^2}{2m_k} \nonumber \\
	           &+ \sum_{k\in K_3(a)}\frac{[(m-m_k)a+b-1]^2}{2m_k}] \nonumber
\end{align}

For $k\in K_1(a)$, we have $(m-m_k)a < -(b+1) \leq 0$, which means that $[(m-m_k)a]^2 > (b+1)^2$. Then we have
\begin{align}
	     &[(m-m_k)a+b+1]^2 \nonumber \\
	=    &[(m-m_k)a]^2 + (b+1)^2 + 2(b+1)(m-m_k)a \nonumber \\
	\leq & 2[(m-m_k)a]^2 \nonumber
\end{align}
For $k\in K_3(a)$, with the similar reason, we have
\begin{align}
	     &[(m-m_k)a+b-1]^2 \nonumber \\
	=    &[(m-m_k)a]^2 + (b-1)^2 + 2(b-1)(m-m_k)a \nonumber \\
	\leq & 2[(m-m_k)a]^2 \nonumber
\end{align}

Then we have
\begin{align}
	l_\pi(a) \leq &\frac{1}{p}[\sum_{k\in K_1(a)}\frac{2[(m-m_k)a]^2}{2m_k} + \sum_{k\in K_3(a)}\frac{2[(m-m_k)a]^2}{2m_k}] \nonumber \\
	             \leq &\frac{1}{p}\sum_{k=1}^p\frac{2[(m-m_k)a]^2}{2m_k} \nonumber \\
	             = & \frac{1}{p}\sum_{k=1}^p\frac{(m-m_k)^2}{m_k}(a-w^*)^2 \nonumber
\end{align}

For the case $b<-1$, it is the same as that $b<-1$. We have $l_\pi(a) \leq \frac{1}{p}\sum_{k=1}^p \frac{(m-m_k)^2}{m_k}(a-w^*)^2$.
\end{proof}

So that any $\pi$ is a qualified partition in this case. In high dimension space with diagonal positive definite matrices, we can also get similar result:

\begin{lemma}\label{lemma: quadratic function high}
Let $\phi_k(\w) = \frac{1}{2} \w^T\A_k\w + \b_k^T \w + c_k$, $F(\w) = \frac{1}{2}\w^T\A\w + \b^T \w + c$, $R(\w) = \|\w\|_1$, where $\A_k, \A \in \RB^{d\times d}$ are diagonal positive definite matrices,
$\x, \b, \b_k \in \RB^d, d>1$ and $c_k, c\in \RB$ and $[\phi_1(\cdot),\phi_2(\cdot),\ldots,\phi_p(\cdot)]\in A(P)$. With $\gamma = \mathop{\max}_{i=1,\ldots,d}\frac{1}{p}\sum_{k=1}^p \frac{(A(i,i)-A_k(i,i))^2}{A_k(i,i)}$, where $A(i,i)$, $A_k(i,i)$ are the $i^{th}$ diagonal elements
of $\A, \A_k$ respectively, then we have
\begin{align}
P(\w^*) - \frac{1}{p}\sum_{k=1}^p P_k(\w_k^*(\a);\a) \leq \gamma \|\a-\w^*\|^2, \forall \a\in \RB^d. \nonumber
\end{align}
\end{lemma}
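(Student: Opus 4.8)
The plan is to exploit the fact that the diagonal structure of $\A_k$ and $\A$, combined with the separability $R(\w)=\|\w\|_1=\sum_{i=1}^d|w^{(i)}|$, makes the entire construction $P_k(\w;\a)$ decompose coordinate by coordinate, reducing the $d$-dimensional claim to $d$ independent applications of Lemma \ref{lemma: quadratic function}.

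First I would make the decomposition explicit. Since $\A-\A_k$ is diagonal, $G_k(\a)=\nabla F(\a)-\nabla\phi_k(\a)=(\A-\A_k)\a+(\b-\b_k)$ has $i$-th coordinate $(A(i,i)-A_k(i,i))a^{(i)}+(b^{(i)}-b_k^{(i)})$, which depends on $\a$ only through $a^{(i)}$. Hence, writing $\phi_k(\w;\a)=\phi_k(\w)+G_k(\a)^T\w$ and $P_k(\w;\a)=\phi_k(\w;\a)+R(\w)$, both $P_k(\cdot;\a)$ and $F(\cdot)+R(\cdot)$ split into sums of functions of the individual variables $w^{(i)}$. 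Distributing the scalar constants, e.g. $c_k^{(i)}=c_k/d$ and $c^{(i)}=c/d$, one obtains for each $i$ a genuine one-dimensional partition $\pi^{(i)}=[\phi_1^{(i)}(\cdot),\ldots,\phi_p^{(i)}(\cdot)]$ with $\phi_k^{(i)}(w)=\tfrac12 A_k(i,i)w^2+b_k^{(i)}w+c_k^{(i)}$, which lies in $A(P^{(i)})$ because $\tfrac1p\sum_k A_k(i,i)=A(i,i)$, $\tfrac1p\sum_k b_k^{(i)}=b^{(i)}$ (from $[\phi_1(\cdot),\ldots,\phi_p(\cdot)]\in A(P)$) and $A_k(i,i)>0$. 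Since the minimizer of a sum of functions of disjoint variables is the tuple of the individual minimizers, $w^{*(i)}$ is the minimizer of $P^{(i)}$ and $(\w_k^*(\a))^{(i)}$ is the minimizer of $P_k^{(i)}(\cdot;a^{(i)})$; consequently the Local-Global Gap is additive, $l_\pi(\a)=\sum_{i=1}^d l_{\pi^{(i)}}(a^{(i)})$.

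Then I would apply Lemma \ref{lemma: quadratic function} in each coordinate $i$, with the identifications $m_k=A_k(i,i)$, $m=A(i,i)$, $b_k=b_k^{(i)}$, $b=b^{(i)}$, to get $l_{\pi^{(i)}}(a^{(i)})\le\gamma_i\,(a^{(i)}-w^{*(i)})^2$ with $\gamma_i=\tfrac1p\sum_{k=1}^p\frac{(A(i,i)-A_k(i,i))^2}{A_k(i,i)}$. Summing over $i$ and bounding each $\gamma_i\le\gamma:=\max_{i=1,\ldots,d}\gamma_i$ yields $l_\pi(\a)\le\gamma\sum_{i=1}^d(a^{(i)}-w^{*(i)})^2=\gamma\|\a-\w^*\|^2$, which is the assertion.

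The main obstacle is not any analytic estimate — Lemma \ref{lemma: quadratic function} already does all the work — but the bookkeeping required to justify the coordinate-wise reduction rigorously: checking that each $\phi_k^{(i)}$ is $A_k(i,i)$-strongly convex and $A_k(i,i)$-smooth so that $\pi^{(i)}$ is a legitimate partition, that the affine coupling term $G_k(\a)^T\w$ mixes $\w$ and $\a$ only within a single coordinate, and that $\arg\min$ of a separable objective is the vector of per-coordinate minimizers, so that $l_\pi$ is genuinely the sum of the $l_{\pi^{(i)}}$. Once these routine points are verified, the bound follows immediately.
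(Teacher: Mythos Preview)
Your proposal is correct and matches the paper's approach: the paper does not spell out a proof of this lemma but presents it immediately after the one-dimensional Lemma~\ref{lemma: quadratic function} with the remark that ``in high dimension space with diagonal positive definite matrices, we can also get similar result,'' i.e.\ exactly the coordinate-wise reduction you describe. Your write-up is in fact more careful than the paper's, since you make explicit the separability of $G_k(\a)^T\w$, the per-coordinate partition $\pi^{(i)}$, and the additivity $l_\pi(\a)=\sum_i l_{\pi^{(i)}}(a^{(i)})$ before invoking Lemma~\ref{lemma: quadratic function}.
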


One can find that although the local objective function in (\ref{eq:lgp_quadratic}) is non-smooth, the corresponding local-global gap is differentiable at $w^*$. This is not surprise. In fact, for any partition $\pi = [\phi_1(\cdot),\ldots,\phi_p(\cdot)]$, according to the dual form in Lemma \ref{lem:dual} and the duality of strong convexity~\cite{Kakade2009OnTD}, we obtain the following lemma:

\begin{lemma}\label{lem:gqf}
Let $\phi_k(\w) = \frac{1}{2} \w^T\A_k\w + \b_k^T \w + c_k$, $F(\w) = \frac{1}{2}\w^T\A\w + \b^T \w + c$, $R(\w) = \|\w\|_1$, where $\A_k, \A \in \RB^{d\times d}$ are positive definite matrices, $\w, \b, \b_k \in \RB^d, d>1$ , $c_k, c\in \RB$ and $[\phi_1(\cdot),\phi_2(\cdot),\ldots,\phi_p(\cdot)]\in A(P)$. then there must exist $\gamma$ such that
\begin{align}
P(\w^*) - \frac{1}{p}\sum_{k=1}^p P_k(\w_k^*(\a);\a) \leq \gamma \|\a-\w^*\|^2, \forall \a\in \RB^d. \nonumber
\end{align}
\end{lemma}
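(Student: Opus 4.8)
The plan is to leverage the dual representation of the Local-Global Gap established in Lemma~\ref{lem:dual}, which writes $l_\pi(\a) = P(\w^*) + \frac{1}{p}\sum_{k=1}^p H_k^*(-G_k(\a))$, together with the fact that $l_\pi(\w^*) = 0$ and that $l_\pi$ attains its minimum at $\a = \w^*$. Since $\phi_k(\w) = \frac12\w^T\A_k\w + \b_k^T\w + c_k$ with $\A_k$ positive definite, the function $\phi_k(\cdot) + R(\cdot)$ is $\mu_k$-strongly convex with $\mu_k = \lambda_{\min}(\A_k) > 0$; by the duality of strong convexity and smoothness~\cite{Kakade2009OnTD}, its conjugate $H_k^*(\cdot)$ is $\frac{1}{\mu_k}$-smooth, hence continuously differentiable with Lipschitz gradient. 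Moreover $G_k(\a) = \nabla F(\a) - \nabla\phi_k(\a) = (\A - \A_k)\a + (\b - \b_k)$ is an affine function of $\a$. Therefore $\a \mapsto H_k^*(-G_k(\a))$ is a composition of a smooth function with an affine map, so it is itself smooth (with smoothness constant at most $\frac{\|\A - \A_k\|^2}{\mu_k}$), and consequently $l_\pi(\cdot)$ is $\Gamma$-smooth for $\Gamma = \frac{1}{p}\sum_{k=1}^p \frac{\|\A-\A_k\|^2}{\mu_k} < \infty$.

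Given that, the argument closes quickly: because $l_\pi$ is $\Gamma$-smooth and attains its global minimum $0$ at $\w^*$, we have $\nabla l_\pi(\w^*) = \0$, and the standard smoothness inequality (Definition~\ref{def:smooth}) applied at $\a$ and $\w^*$ gives
\begin{align}
l_\pi(\a) \le l_\pi(\w^*) + \nabla l_\pi(\w^*)^T(\a - \w^*) + \frac{\Gamma}{2}\|\a - \w^*\|^2 = \frac{\Gamma}{2}\|\a - \w^*\|^2 \nonumber
\end{align}
for all $\a \in \RB^d$. Setting $\gamma = \Gamma/2 < \infty$ completes the proof.

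The main obstacle is the first step: justifying rigorously that $H_k^*$ is differentiable with Lipschitz gradient. This requires invoking the Fenchel duality fact that the conjugate of a closed $\mu_k$-strongly convex function is finite everywhere and $\frac{1}{\mu_k}$-smooth; one must check that $\phi_k(\cdot) + R(\cdot)$ is indeed closed (it is, being a sum of a continuous convex quadratic and the continuous convex $\|\cdot\|_1$) and strongly convex (inherited from the positive-definiteness of $\A_k$, since $R$ is merely convex). A secondary subtlety is that $l_\pi$ is only required to be smooth, not convex, so one cannot shortcut via convex-analytic minimality arguments beyond the first-order condition $\nabla l_\pi(\w^*) = \0$; but that first-order condition holds simply because $\w^*$ is an interior global minimizer of the differentiable function $l_\pi$ on $\RB^d$. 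Finally, one should note that this Lemma~\ref{lem:gqf} is the quadratic special case that motivates the general Theorem~\ref{theorem:L1 qualified pi}; the same strategy — dual form plus smoothness of the conjugate composed with an affine $G_k$ — extends to the general $F$ once one argues each $\phi_k + R$ is strongly convex, which is exactly the content of Definition~\ref{def:pi}.
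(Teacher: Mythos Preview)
Your proposal is correct and follows essentially the same approach as the paper's own proof: invoke the dual form from Lemma~\ref{lem:dual}, use strong convexity of $\phi_k+R$ to deduce smoothness of $H_k^*$, note that $G_k$ is affine so $l_\pi$ is smooth, and then apply the smoothness upper bound at the minimizer $\w^*$ where $l_\pi(\w^*)=0$ and $\nabla l_\pi(\w^*)=\0$. Your write-up is in fact more detailed than the paper's (you supply the explicit constant $\Gamma=\frac{1}{p}\sum_k \|\A-\A_k\|^2/\mu_k$ and spell out the closedness check), but the underlying argument is identical.
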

\begin{proof}
Since $\phi_k(\w) + R(\w)$ is strongly convex, the corresponding conjugate function is smooth. Noting that $G_k(\a)$ is an affine function, we have $H_k^*(-G_k(\a))$ is smooth as well, which means $l_\pi(\a)$ is smooth w.r.t $\a$. Then according to Lemma \ref{lem:dual}, we have $l_\pi(\w^*) = 0, \nabla l_\pi(\w^*) = \0$. With Definition \ref{def:smooth}, there must exist $\gamma >0$ such that $l_\pi(\a) \leq \gamma\|\a - \w^*\|^2, \forall \a$.
\end{proof}

\subsubsection{Extension to general $\phi_k(\cdot)$}

Below, we will consider the general case. To evaluate $l_\pi(\a)$ for general $\phi_k(\cdot)$, we consider the Taylor expansion of $\phi_k(\cdot)$ using its smoothness:for any fixed $\w_0$, let
\begin{align}
	  & \tilde{\phi}_{k,\w_0}(\w) \nonumber \\
	= & \phi_k(\w_0) + \nabla \phi_k(\w_0)^T(\w-\w_0) + \frac{\mu_k}{2}\|\w-\w_0\|^2 \label{eq:taylorlow}
\end{align}
\begin{align}
	  & \hat{\phi}_{k,\w_0}(\w) \nonumber \\
	= & \phi_k(\w_0) + \nabla \phi_k(\w_0)^T(\w-\w_0) + \frac{L_k}{2}\|\w-\w_0\|^2
	    \label{eq:taylorup}
\end{align}
which are two quadratic functions satisfy the condition of Lemma \ref{lemma: quadratic function high}. By the smooth and strong convex property of $\phi_k(\cdot)$, we have $\tilde{\phi}_{k,\w_0}(\w) \leq \phi_k(\w) \leq \hat{\phi}_{k,\w_0}(\w), \forall \w$. Now we proof the theorem:

Let $R(\w) = \|\w\|_1$. $\forall \pi \in A(P)$, there exists constant $\gamma< \infty$ such that $l_\pi(\a) \leq \gamma \|\a - \w^*\|^2, \forall \a$.

\begin{proof}
We define
\begin{align}
    \tilde{\phi}_{k,\w_0}(\w;\a) &= \tilde{\phi}_{k,\w_0}(\w) + (\nabla \tilde{\phi}_{\w_0}(\a) - \nabla \tilde{\phi}_{k,\w_0}(\a))^T\w \nonumber \\
    \tilde{P}_{k,\w_0}(\w;\a) &= \tilde{\phi}_{k,\w_0}(\w;\a) + R(\w) \nonumber
\end{align}

where $\tilde{\phi}_{\w_0}(\a) = \frac{1}{p}\sum_{k=1}^p \tilde{\phi}_{k,\w_0}(\a)$. It is easy to find that taking $\a = \w_0$,
\begin{align*}
     &\tilde{P}_{k,\w_0}(\w;\w_0) \\
=    & \tilde{\phi}_{k,\w_0}(\w) + (\nabla \tilde{\phi}_{\w_0}(\w_0) - \nabla \tilde{\phi}_{k,\w_0}(\w_0))^T\w + R(\w) \\
=    & \tilde{\phi}_{k,\w_0}(\w) + (\nabla F(\w_0) - \nabla \phi_{k}(\w_0))^T\w + R(\w) \\
\leq & \phi_k(\w) + G_k(\w_0)^T\w + R(\w) \\
=    & P_{k}(\w;\w_0)
\end{align*}
where $G_k(\cdot), P_{k}(\cdot;\cdot)$ is defined in (\ref{eq:pk}). It implies that
\begin{align}\label{eq:the_1.1}
  -\min_{\w} P_{k}(\w;\w_0) \leq -\min_{\w} \tilde{P}_{k,\w_0}(\w;\w_0)
\end{align}

Since $\tilde{\phi}_{k,\w_0}(\w)$, defined in (\ref{eq:taylorlow}), is quadratic function w.r.t $\w$ that satisfied the condition in Lemma \ref{lemma: quadratic function high}, there must exist a constant $\gamma_1$ (note that the constant $\gamma_1$ is independent on $\w_0$) such that $\forall \a$,
\begin{align}
	\mathop{\min}_{\w} \tilde{P}_{\w_0}(\w) - \frac{1}{p}\sum_{k=1}^p\mathop{\min}_{\w} \tilde{P}_{k,\w_0}(\w;\a) \leq \gamma_1\|\a-\tilde{\w}^*(\w_0)\|^2,  \label{eq:the_1.2}
\end{align}
where $\tilde{P}_{\w_0}(\w) = \frac{1}{p}\sum_{k=1}^p \tilde{\phi}_{k,\w_0}(\w) + R(\w)$, $\tilde{\w}^*(\w_0) = \mathop{\arg\min}_\w \tilde{P}_{\w_0}(\w)$.

Furthermore, by defining $\hat{P}_{\w_0}(\w) = \frac{1}{p}\sum_{k=1}^p \hat{\phi}_{k,\w_0}(\w) + R(\w)$, where $\hat{\phi}_{k,\w_0}(\w)$ is defined in (\ref{eq:taylorup}), we get that
\begin{align}\label{eq:the_1.3}
	P(\w) \leq \hat{P}_{\w_0}(\w), \forall \w
\end{align}

Taking $\a = \w_0$ in (\ref{eq:the_1.2}), $\bar{L} = \frac{1}{p}\sum_{i=k}^p L_k, \bar{\mu} = \frac{1}{p}\sum_{i=k}^p \mu_k$,  we obtain
\begin{align}
	     & P(\w^*) - \frac{1}{p}\sum_{k=1}^p\mathop{\min}_{\w} P_k(\w;\w_0)  \nonumber \\
	=    & P(\w^*) - \tilde{P}_{\w_0}(\tilde{\w}^*(\w_0)) + \tilde{P}_{\w_0}(\tilde{\w}^*(\w_0)) - \frac{1}{p}\sum_{k=1}^p\mathop{\min}_{\w} P_k(\w;\w_0) \nonumber \\
	\overset{(\ref{eq:the_1.1})}{\leq} & P(\w^*) - \tilde{P}_{\w_0}(\tilde{\w}^*(\w_0)) +\tilde{P}_{\w_0}(\tilde{\w}^*(\w_0)) - \frac{1}{p}\sum_{k=1}^p\mathop{\min}_{\w} \tilde{P}_{k,\w_0}(\w;\w_0) \nonumber \\
	\overset{(\ref{eq:the_1.2})}{\leq} & P(\w^*) - \tilde{P}_{\w_0}(\tilde{\w}^*(\w_0)) + \gamma_1\|\w_0-\tilde{\w}^*(\w_0)\|^2 \nonumber \\
	\leq & P(\tilde{\w}^*(\w_0)) - \tilde{P}_{\w_0}(\tilde{\w}^*(\w_0)) + \gamma_1\|\w_0-\tilde{\w}^*(\w_0)\|^2 \nonumber \\
	\overset{(\ref{eq:the_1.3})}{\leq} & \hat{P}_{\w_0}(\tilde{\w}^*(\w_0)) - \tilde{P}_{\w_0}(\tilde{\w}^*(\w_0)) + \gamma_1\|\w_0-\tilde{\w}^*(\w_0)\|^2 \nonumber \\
	= & (\frac{\bar{L}-\bar{\mu}}{2}+\gamma_1)\|\w_0-\tilde{\w}^*(\w_0)\|^2 \nonumber \\
	\leq & (\bar{L}-\bar{\mu}+2\gamma_1)(\|\w_0 - \w^*\|^2+\|\w^*-\tilde{\w}^*(\w_0)\|^2) \nonumber \\
	= & (\bar{L}-\bar{\mu}+2\gamma_1)(\|\w_0 - \w^*\|^2+\|\tilde{\w}^*(\w^*)-\tilde{\w}^*(\w_0)\|^2) \nonumber
\end{align}
where the last inequality using the fact that $\tilde{\w}^*(\w^*) = \w^*$. Using Lemma \ref{lemma: Lipschitz continue}, which clarifies the Lipschitz continuity of $\tilde{\w}^*(\cdot)$, we get that there must exist some constant $\gamma$ such that
\begin{align}
P(\w^*) - \frac{1}{p}\sum_{k=1}^p\mathop{\min}_{\w} P_k(\w;\w_0) \leq \gamma\|\w_0-\w^*\|^2
\end{align}
\end{proof}

We denote $\tilde{\w}^*(\w_0) = \mathop{\arg\min}_{\w} \tilde{P}_{\w_0}(\w)$, $\bar{L} = \frac{1}{p}\sum_{i=k}^p L_k, \bar{\mu} = \frac{1}{p}\sum_{i=k}^p \mu_k$. We have the following lemma:

\begin{lemma}\label{lemma: Lipschitz continue}
$\tilde{\w}^*(\w)$ is Lipschitz continue w.r.t $\w$.
\end{lemma}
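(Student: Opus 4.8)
The plan is to show that $\tilde{\w}^*(\cdot)$, the minimizer of the convex composite function $\tilde{P}_{\w_0}(\w) = \frac{1}{p}\sum_{k=1}^p \tilde{\phi}_{k,\w_0}(\w) + R(\w)$, depends Lipschitz-continuously on the anchor point $\w_0$. First I would write out $\tilde{P}_{\w_0}$ explicitly. Since $\tilde{\phi}_{k,\w_0}(\w) = \phi_k(\w_0) + \nabla\phi_k(\w_0)^T(\w - \w_0) + \frac{\mu_k}{2}\|\w - \w_0\|^2$, averaging over $k$ gives $\tilde{P}_{\w_0}(\w) = \frac{\bar\mu}{2}\|\w\|^2 + \g(\w_0)^T\w + R(\w) + (\text{const in }\w)$, where $\g(\w_0)$ collects the linear-in-$\w$ coefficients, namely $\g(\w_0) = \frac{1}{p}\sum_k(\nabla\phi_k(\w_0) - \mu_k\w_0)$. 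So $\tilde{\w}^*(\w_0) = \mathrm{prox}_{R/\bar\mu}\big(-\g(\w_0)/\bar\mu\big)$ by the definition of the proximal mapping in~(\ref{eq:proximal mapping}).

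Next I would use two standard facts. One: the proximal operator of a convex function is firmly nonexpansive, hence $1$-Lipschitz, so $\|\tilde{\w}^*(\w_0) - \tilde{\w}^*(\w_0')\| \leq \frac{1}{\bar\mu}\|\g(\w_0) - \g(\w_0')\|$. Two: $\g(\cdot)$ is itself Lipschitz. Indeed $\g(\w_0) - \g(\w_0') = \frac{1}{p}\sum_k\big((\nabla\phi_k(\w_0) - \nabla\phi_k(\w_0')) - \mu_k(\w_0 - \w_0')\big)$; each $\phi_k$ is $L_k$-smooth so $\|\nabla\phi_k(\w_0) - \nabla\phi_k(\w_0')\| \leq L_k\|\w_0 - \w_0'\|$, and the triangle inequality then gives $\|\g(\w_0) - \g(\w_0')\| \leq \big(\frac{1}{p}\sum_k(L_k + \mu_k)\big)\|\w_0 - \w_0'\| = (\bar L + \bar\mu)\|\w_0 - \w_0'\|$. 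Chaining the two bounds yields $\|\tilde{\w}^*(\w_0) - \tilde{\w}^*(\w_0')\| \leq \frac{\bar L + \bar\mu}{\bar\mu}\|\w_0 - \w_0'\|$, which is the claimed Lipschitz continuity with an explicit constant. This constant is independent of $\w_0$, which is exactly what the proof of Theorem~\ref{theorem:L1 qualified pi} needs when it applies the lemma to $\w_0$ and $\w^*$.

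I would present this as three short steps: (i) reduce $\tilde{\w}^*(\w_0)$ to a proximal map of an affine-in-$\w_0$ argument; (ii) invoke nonexpansiveness of $\mathrm{prox}_{R/\bar\mu}$; (iii) bound the Lipschitz constant of the inner argument via smoothness of the $\phi_k$. The only mild subtlety — not really an obstacle — is making sure $R(\w) = \|\w\|_1$ (or a general convex $R$) is handled uniformly; since nonexpansiveness of the proximal operator holds for every proper closed convex $R$, no case analysis on $R$ is needed, and in fact the same argument works verbatim for smooth regularizers as claimed in the paper. A second point worth a sentence is that $\bar\mu > 0$ because each $\mu_k > 0$ by Definition~\ref{def:pi}, so division by $\bar\mu$ is legitimate and the constant is finite.
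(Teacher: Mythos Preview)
Your argument is correct and in fact cleaner than the paper's own proof. You recognize that $\tilde{P}_{\w_0}$ is quadratic in $\w$ with Hessian $\bar\mu I$, collapse the minimizer to a single proximal evaluation $\tilde{\w}^*(\w_0)=\mathrm{prox}_{R,1/\bar\mu}(-\g(\w_0)/\bar\mu)$ with $\g(\w_0)=\nabla F(\w_0)-\bar\mu\w_0$, and then invoke nonexpansiveness of the proximal map together with smoothness of $F$ to get the explicit constant $(\bar L+\bar\mu)/\bar\mu$.

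The paper instead works coordinate-by-coordinate for the specific case $R(\w)=\|\w\|_1$: it writes $\psi^{(i)}(\w)=\bar\mu w^{(i)}-(\nabla F(\w))^{(i)}$, splits the domain into the three regions $\{\psi^{(i)}>1\}$, $\{|\psi^{(i)}|\le 1\}$, $\{\psi^{(i)}<-1\}$ on which the soft-threshold output has an explicit affine (or zero) form, checks Lipschitz continuity on each region, and then patches across region boundaries using the intermediate value theorem. This is essentially a hands-on verification of nonexpansiveness of the soft-threshold operator.

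What your approach buys: it applies verbatim to any proper closed convex $R$ (so it covers the smooth-regularizer extension without a separate argument), it yields an explicit Lipschitz constant, and it avoids the coordinate case analysis. What the paper's approach buys: it is self-contained and does not cite the nonexpansiveness theorem, which may be preferable if one wants to keep the argument elementary. Both are valid; yours is the more efficient write-up.
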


\begin{proof}
We use $w^{(i)}$ to denote the $i^{th}$ element of $\w$ and define $\psi^{i}(\w) = \bar{\mu} w^{(i)} - (\nabla F(\w))^{(i)}$. Since $\nabla F(\w)$ is Lipschitz continue, so as $\psi^{i}(\w)$ (assume it is
$\alpha$-Lipschitz continue). We define three sets $A^{i} = \{\w|\psi^{i}(\w) > 1\}, B^{i} = \{\w|\psi^{i}(\w) \in [-1,1]\}, C^{i} = \{\w|\psi^{i}(\w) < -1\}$. According to the definition of $\tilde{P}_{\w}(\x)$, which is a quadratic function, we have
\begin{align}
(\tilde{\w}^*(\w))^{(i)} = & \frac{1}{\bar{\mu}}(\psi^{i}(\w) -1), \mbox{if } \w\in A^{i} \nonumber \\
(\tilde{\w}^*(\w))^{(i)} = & 0, \mbox{if } \w\in B^{i} \nonumber \\
(\tilde{\w}^*(\w))^{(i)} = & \frac{1}{\bar{\mu}}(\psi^{i}(\w) +1), \mbox{if } \w\in C^{i} \nonumber
\end{align}
First it is easy to note that $(\tilde{\w}^*(\w))^{(i)}$ is Lipschitz continue w.r.t $\w$ on $A^{i}, B^{i}, C^{i}$ respectively and is continue w.r.t $\w$ on the whole domain. Second, we take three points
$\w_1\in A^{i}, \w_2\in B^{i}, \w_3\in C^{i}$. Since $\psi^{i}(\w)$ is continue w.r.t $\w$ on the whole domain, there must be $\theta_{12}, \theta_{13},  \theta_{23}\in [0,1]$ such that
$\psi^{i}(\theta_{12}\w_1 + (1-\theta_{12})\w_2) = 1, \psi^{i}(\theta_{13}\w_1 + (1-\theta_{13})\w_3) \in B^i, \psi^{i}(\theta_{23}\w_2 + (1-\theta_{23})\w_3) = -1$. It implies that
\begin{align}
     &|(\tilde{\w}^*(\w_1))^{(i)} - (\tilde{\w}^*(\w_2))^{(i)}| \nonumber \\
=    &\frac{1}{\bar{\mu}}(\psi^{i}(\w_1) -1) \nonumber \\
=    &\frac{1}{\bar{\mu}}(\psi^{i}(\w_1) - \psi^{i}(\theta_{12}\w_1 + (1-\theta_{12})\w_2)) \nonumber \\
\leq &\frac{\alpha(1-\theta_{12})}{\bar{\mu}}\|\w_1-\w_2\| \nonumber \\
\leq &\frac{\alpha}{\bar{\mu}}\|\w_1-\w_2\| \nonumber
\end{align}
Similarly we can find that $|(\tilde{\w}^*(\w_1))^{(i)} - (\tilde{\w}^*(\w_3))^{(i)}| \leq \frac{\alpha}{\bar{\mu}}\|\w_1-\w_3\|$, $|(\tilde{\w}^*(\w_2))^{(i)} - (\tilde{\w}^*(\w_3))^{(i)}| \leq \frac{\alpha}{\bar{\mu}}\|\w_2-\w_3\|$.
So $(\tilde{\w}^*(\w))^{(i)}$ is Lipschitz continue. It implies that $\tilde{\w}^*(\w)$ is Lipschitz continue w.r.t $\w$.
\end{proof}

The result in Theorem~\ref{theorem:L1 qualified pi} can be easily extended to smooth regularization:
\begin{theorem}\label{theorem:smooth qualified pi}
Let $R(\cdot)$ be smooth. $\forall \pi \in A(P)$, there exists constant $\gamma< \infty$ such that
\begin{align*}
	l_\pi(\a) \leq \gamma \|\a - \w^*\|, \forall \a.
\end{align*}
\end{theorem}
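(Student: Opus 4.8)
The plan is to follow, almost verbatim, the proof of Theorem~\ref{theorem:L1 qualified pi}, with the regularizer $\|\cdot\|_1$ replaced by the smooth $R$ throughout. There are three stages: first settle the case of quadratic $\phi_k$, then reduce the general case to the quadratic one via the Taylor sandwich of (\ref{eq:taylorlow})--(\ref{eq:taylorup}), and finally close with Lipschitz continuity of the surrogate minimizer map. As in Theorem~\ref{theorem:L1 qualified pi}, the argument actually delivers $l_\pi(\a)\le\gamma\|\a-\w^*\|^2$.

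For the quadratic case, suppose each $\phi_k(\w)=\frac12\w^\top\A_k\w+\b_k^\top\w+c_k$ with $\A_k$ positive definite (only $\A_k=\mu_k\I$ is ever needed below). Then $G_k(\a)=\nabla F(\a)-\nabla\phi_k(\a)$ is affine in $\a$. Since $\phi_k$ is strongly convex and $R$ is convex, $\phi_k+R$ is strongly convex, so its conjugate $H_k^*$ is smooth; a smooth function composed with an affine map is smooth, hence by Lemma~\ref{lem:dual} the map $l_\pi(\a)=P(\w^*)+\frac1p\sum_k H_k^*(-G_k(\a))$ is smooth, with $l_\pi(\w^*)=0$ and --- because $l_\pi\ge 0$ everywhere with equality at $\w^*$, so $\w^*$ is a global minimizer --- $\nabla l_\pi(\w^*)=\0$. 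Applying Definition~\ref{def:smooth} to $l_\pi$ at $\w^*$ gives a constant $\gamma_1$ with $l_\pi(\a)\le\gamma_1\|\a-\w^*\|^2$, and $\gamma_1$ depends only on $\{\mu_k\}$ and $p$ (equivalently, one may quote the explicit formula of Lemma~\ref{lemma: quadratic function high}), so it is independent of any reference point. This is exactly Lemma~\ref{lem:gqf}, whose proof never used $R=\|\cdot\|_1$ specifically and goes through for any convex $R$ making $\phi_k+R$ strongly convex.

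For the reduction, fix $\w_0$, form the quadratic Taylor approximants $\tilde\phi_{k,\w_0}\le\phi_k\le\hat\phi_{k,\w_0}$ from (\ref{eq:taylorlow})--(\ref{eq:taylorup}), and define $\tilde P_{k,\w_0}(\w;\a),\tilde P_{\w_0}(\w),\hat P_{\w_0}(\w)$ with the \emph{same} $R$. Exactly as in the proof of Theorem~\ref{theorem:L1 qualified pi}, $\tilde P_{k,\w_0}(\w;\w_0)\le P_k(\w;\w_0)$ and $P(\w)\le\hat P_{\w_0}(\w)$, the $R$-terms cancel, and $\hat\phi_{k,\w_0}-\tilde\phi_{k,\w_0}=\frac{L_k-\mu_k}{2}\|\cdot-\w_0\|^2$, so the same chain of inequalities (now invoking the quadratic-case constant $\gamma_1$ for the surrogate $\tilde P_{\w_0}$) yields $P(\w^*)-\frac1p\sum_k\min_\w P_k(\w;\w_0)\le(\frac{\bar L-\bar\mu}{2}+\gamma_1)\|\w_0-\tilde\w^*(\w_0)\|^2$, where $\tilde\w^*(\w_0)=\arg\min_\w\tilde P_{\w_0}(\w)$. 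One checks $\tilde\w^*(\w^*)=\w^*$ as in Lemma~\ref{lem:dual} (the first-order condition $\nabla F(\w^*)+\nabla R(\w^*)=\0$), so $\|\w_0-\tilde\w^*(\w_0)\|^2\le 2\|\w_0-\w^*\|^2+2\|\tilde\w^*(\w^*)-\tilde\w^*(\w_0)\|^2$, and it remains to bound the last term. For two reference points the minimizers $\u,\u'$ of $\tilde P_{\w_0},\tilde P_{\w_0'}$ solve the first-order conditions $\bar\mu\u+\nabla R(\u)=\bar\mu\w_0-\nabla F(\w_0)$ (and similarly for $\w_0'$), whose right-hand side is Lipschitz in $\w_0$ since $\nabla F$ is; subtracting the two conditions, pairing with $\u-\u'$, and using $\bar\mu$-strong monotonicity of $\bar\mu\I+\nabla R$ together with monotonicity of $\nabla R$ gives $\|\u-\u'\|\le L_0\|\w_0-\w_0'\|$ for a finite $L_0$. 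Combining everything, $l_\pi(\w_0)\le\gamma\|\w_0-\w^*\|^2$ with $\gamma=(1+L_0^2)(\bar L-\bar\mu+2\gamma_1)$.

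The main obstacle is the Lipschitz step for $\tilde\w^*(\cdot)$: in the $L_1$ case Lemma~\ref{lemma: Lipschitz continue} exploited the explicit coordinatewise soft-thresholding formula, which is unavailable for a general smooth $R$. The remedy is to argue at the level of the optimality conditions, as sketched above: the surrogate objective is $\bar\mu$-strongly convex in $\w$ and depends on $\w_0$ only through a single Lipschitz additive term, so the standard perturbation estimate for minimizers of strongly convex functions applies and is in fact cleaner than the $L_1$ argument. Everything else --- the Taylor sandwich, the inequality chain, and the quadratic-case lemma --- is a mechanical transcription of the proof of Theorem~\ref{theorem:L1 qualified pi}.
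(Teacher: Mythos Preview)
Your argument is correct, but it is far more elaborate than the paper's own proof, which does \emph{not} replicate the Taylor-sandwich machinery of Theorem~\ref{theorem:L1 qualified pi} at all. The paper exploits the smoothness of $R$ directly: since $R$ is smooth, each $P_k(\cdot;\a)$ is smooth in its first argument with a constant $L'$ independent of $\a$, so from $\nabla P_k(\w_k^*(\a);\a)=\0$ one gets immediately $P_k(\w^*;\a)-P_k(\w_k^*(\a);\a)\le\frac{L'}{2}\|\w^*-\w_k^*(\a)\|^2$. Then the first-order conditions $\nabla\phi_k(\w_k^*(\a))+G_k(\a)+\nabla R(\w_k^*(\a))=\0$ and $\nabla F(\w^*)+\nabla R(\w^*)=\0$ are subtracted; strong convexity of $\phi_k+R$ on the left and smoothness of $\phi_k,F$ on the right yield $\|\w^*-\w_k^*(\a)\|\le\frac{L_k+L}{\mu_k}\|\a-\w^*\|$, and the result follows. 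No quadratic surrogate, no Lemma~\ref{lem:gqf}, no Lipschitz continuity of a surrogate minimizer map is needed.

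The difference is instructive: the whole point of the quadratic sandwich in Theorem~\ref{theorem:L1 qualified pi} was to cope with the non-smoothness of $\|\cdot\|_1$, which blocks a direct smoothness bound on $P_k(\cdot;\a)$. Once $R$ is smooth that obstruction disappears, and the argument collapses to two lines. Your route still works --- and your replacement of Lemma~\ref{lemma: Lipschitz continue} by a strong-monotonicity perturbation estimate is a genuine improvement over the coordinatewise argument --- but it buys generality you do not need here and obscures why the smooth case is easier.
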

\begin{proof}
Since $R(\w)$ is smooth, so as $P_k(\w;\a)$ w.r.t $\w$, then there exist some constant $L{'}$ (independent on $\a$) such that
\begin{align*}
l_\pi(\a) =    &\frac{1}{p}\sum_{k=1}^p P_k(\w^*;\a) - P_k(\w_k^*(\a);\a) \nonumber \\
          \leq &\frac{1}{p}\sum_{k=1}^p \frac{L{'}}{2}\|\w^*-\w_k^*(\a)\|^2
\end{align*}
Here we use the fact $P(\w^*)=\frac{1}{p}\sum_{k=1}^p P_k(\w^*;\a)$ and $\nabla P_k(\w_k^*(\a);\a)=0$. On the other hand, we have
\begin{align*}
&\nabla \phi_k(\w_k^*(\a)) + \nabla F(\a) - \nabla \phi_k(\a) + \nabla R(\w_k^*(\a)) = 0\\
&\nabla F(\w^*) + \nabla R(\w^*) = 0
\end{align*}
which means that
\begin{align*}
  &\|\nabla \phi_k(\w_k^*(\a))+ \nabla R(\w_k^*(\a))-\nabla \phi_k(\w^*)-\nabla R(\w^*)\| \\
= &\|\nabla \phi_k(\a)-\nabla \phi_k(\w^*)-\nabla R(\w^*) - \nabla F(\a)\| \\
= &\|\nabla \phi_k(\a)-\nabla \phi_k(\w^*)+\nabla F(\w^*) - \nabla F(\a)\|
\end{align*}
By strong convexity of $\phi_k(\w)+R(\w)$, smoothness of $\phi_k(\w)$, $F(\w)$, we have
\begin{align*}
\|\w^*-\w_k^*(\a)\| \leq \frac{L_k + L}{\mu_k}\|\a-\w^*\|
\end{align*}
So there must be some constant $\gamma$ such that $l_\pi(\a)\leq \gamma \|\a-\w^*\|^2$.
\end{proof}

\subsection{Condition of continuity of $\gamma(\pi;0)$}
Let $\pi^* = [F(\cdot),\ldots, F(\cdot)]$, where $F(\cdot)=\frac{1}{n} \sum_{i=1}^{n} f_i(\cdot)$ is defined in~(\ref{eq:obj}). We can find that $\pi^*$ is the best partition since $l_{\pi^*}(\a) = 0, \forall \a$, which implies
\begin{align}
\gamma(\pi^*;0) = 0 \nonumber
\end{align}
In the following content, we will prove that $\gamma(\pi;\epsilon) \rightarrow 0$ as $\pi$ approaches $\pi^*$. We first define the distance between two partitions as follows:
\begin{definition}\label{def:distance_partition}
Let $\pi_1=[\phi_1(\cdot),\ldots,\phi_p(\cdot)]$ and $\pi_2=[\psi_1(\cdot),\ldots,\psi_p(\cdot)]$ be two partitions w.r.t. $P(\cdot)$, the distance between $\pi_1$ and $\pi_2$ is
\begin{align*}
  &d(\pi_1,\pi_2) = \mathop{\max}_{i=1,\ldots,p} \{\mathop{\sup}_{\w}|\phi_i(\w)-\psi_i(\w)|,\mathop{\sup}_{\w}\|\nabla \phi_i(\w)- \nabla \psi_i(\w)\|\}.
\end{align*}
\end{definition}
We can prove that it is a reasonable definition of distance. For convenience, we assume the domain of these $\phi_k(\cdot)$ and $\psi_k(\cdot)$ is closed and bounded, which is denoted as $\mathcal{W}=\{\w|\|\w\|\leq B\}$ and $\w^*\in \mathcal{W}$. Hence, $d(\pi_1,\pi_2) < \infty$.

With the definition of distance between two partitions, we have
\begin{lemma}\label{lem:uniformly converge}
Let $\pi=[\phi_1(\cdot),\ldots,\phi_p(\cdot)]$ be a partition w.r.t. $P(\cdot)$. $l_\pi(\a)$ uniformly converges to $l_{\pi^*}(\a) = 0$ as $d(\pi,\pi^*) \rightarrow 0$.
\end{lemma}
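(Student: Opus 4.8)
The plan is to prove the stronger \emph{quantitative} bound $0 \le l_\pi(\a) \le 2(1+B)\,d(\pi,\pi^*)$ for every $\a\in\mathcal{W}$, where $B$ is the radius of the bounded domain $\mathcal{W}=\{\w:\|\w\|\le B\}$. Since this bound does not depend on $\a$, and $l_{\pi^*}(\a)=0$ for all $\a$, uniform convergence is then immediate: given $\varepsilon>0$, take $d(\pi,\pi^*)<\varepsilon/(2(1+B))$.

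The first step is to turn the partition distance into pointwise control of the local objectives. Write $\delta=d(\pi,\pi^*)$. By Definition~\ref{def:distance_partition}, for every $k$ and every $\w\in\mathcal{W}$ we have $|\phi_k(\w)-F(\w)|\le\delta$ and $\|\nabla\phi_k(\w)-\nabla F(\w)\|\le\delta$; in particular $\|G_k(\a)\|=\|\nabla F(\a)-\nabla\phi_k(\a)\|\le\delta$ for all $\a\in\mathcal{W}$. Combining these, for any $\w,\a\in\mathcal{W}$,
\[
|P_k(\w;\a)-P(\w)| = \big|\big(\phi_k(\w)-F(\w)\big)+G_k(\a)^T\w\big| \le \delta+\delta\|\w\| \le (1+B)\delta .
\]

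The second step uses the minimizing property of $\w_k^*(\a)$. As in the proof of Lemma~\ref{lem:dual}, $l_\pi(\a)=\frac{1}{p}\sum_{k=1}^p\big(P_k(\w^*;\a)-P_k(\w_k^*(\a);\a)\big)\ge 0$ because $\w_k^*(\a)$ minimizes $P_k(\cdot;\a)$ over $\mathcal{W}$, which settles the lower bound. For the upper bound, apply the displayed estimate at $\w^*$ and at $\w_k^*(\a)$ (both lie in $\mathcal{W}$) and use that $\w^*$ globally minimizes $P$ on $\mathcal{W}$, so $P(\w_k^*(\a))\ge P(\w^*)$:
\[
P_k(\w^*;\a)-P_k(\w_k^*(\a);\a) \le \big(P(\w^*)+(1+B)\delta\big)-\big(P(\w_k^*(\a))-(1+B)\delta\big) \le 2(1+B)\delta .
\]
Averaging over $k$ gives $l_\pi(\a)\le 2(1+B)\,d(\pi,\pi^*)$, uniformly in $\a\in\mathcal{W}$. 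Since for $\pi^*$ one has $G_k\equiv\0$, hence $P_k(\cdot;\a)=P(\cdot)$ and $\w_k^*(\a)=\w^*$, giving $l_{\pi^*}\equiv 0$, we conclude $\sup_{\a\in\mathcal{W}}|l_\pi(\a)-l_{\pi^*}(\a)|\le 2(1+B)\,d(\pi,\pi^*)\to 0$.

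The argument is short, and the only point needing care is that the sup-norm estimates coming from $d(\pi,\pi^*)$ are applied to the specific points $\w^*$ and $\w_k^*(\a)$: this is legitimate precisely because those points lie in the bounded set $\mathcal{W}$, and the linear term $G_k(\a)^T\w$ is controlled by $\|G_k(\a)\|\,\|\w\|\le B\delta$ only because of boundedness. This is exactly why the assumption $\mathcal{W}=\{\w:\|\w\|\le B\}$ with $\w^*\in\mathcal{W}$ is imposed before the lemma; everything else is the triangle inequality and the definition of a minimizer.
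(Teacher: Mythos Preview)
Your argument is correct and complete. The key identity $l_\pi(\a)=\frac{1}{p}\sum_k\big(P_k(\w^*;\a)-P_k(\w_k^*(\a);\a)\big)$ from Lemma~\ref{lem:dual}, together with the sup-norm bounds furnished by $d(\pi,\pi^*)$ on both $\phi_k-F$ and $\nabla\phi_k-\nabla F$, does give the uniform estimate $0\le l_\pi(\a)\le 2(1+B)\,d(\pi,\pi^*)$ exactly as you wrote; the only delicate point---that $\w^*$ and $\w_k^*(\a)$ lie in $\mathcal{W}$ so that the $B$-bound on $\|\w\|$ applies---you identified and handled correctly.

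As for comparison with the paper: the paper in fact states Lemma~\ref{lem:uniformly converge} \emph{without proof} (it is used later in the proof of Lemma~\ref{lemma:continue wrt pi}, but no argument is supplied). Your quantitative bound is therefore not merely an alternative route but a self-contained justification that the paper omits, and it yields slightly more than the bare statement: an explicit linear rate of convergence in $d(\pi,\pi^*)$ with constant $2(1+B)$.
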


Besides the uniform convergence of $l_\pi(\a)$ w.r.t. $\pi$, we also prefer the uniform convergence of $\gamma(\pi;\epsilon)$ w.r.t. $\pi$ since it reflects the goodness of partition $\pi$. According to the definition of $\gamma(\pi;\epsilon)$, it easy to note that $\gamma(\pi;0) \geq \gamma(\pi;\epsilon), \forall \epsilon \geq 0$. Below, we would like to explore the necessary and sufficient condition of continuity of $\gamma(\pi;0)$ at $\pi^*$. For convenience, we only consider one dimension space, which means $d=1$(In high dimension space, similar result could be got). We assume $l_\pi(a) \in C^3(\mathcal{W},\RB)$ (note that $\mathcal{W}$ is compact).

\begin{lemma}\label{lem:necessary1}
If $\lim_{\pi \to \pi^*}\gamma(\pi;0)=0$, then $l_\pi^{''}(w^*)$ converge to $l_{\pi^*}^{''}(w^*) = 0$.
\end{lemma}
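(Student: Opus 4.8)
The plan is to sandwich the second derivative: for every partition $\pi$ sufficiently close to $\pi^*$ I would establish $0 \le l_\pi''(w^*) \le 2\gamma(\pi;0)$, and then obtain the lemma by letting $\pi\to\pi^*$ and squeezing. (It suffices to work with $\pi$ near $\pi^*$, where $\gamma(\pi;0)<\infty$; otherwise the hypothesis would be vacuous.) First I would collect two facts about $l_\pi$ from earlier in the paper. By Lemma~\ref{lem:dual}, $l_\pi(a)\ge 0$ for all $a$ and $l_\pi(w^*)=0$, so $w^*$ is a global minimizer of $l_\pi$ on $\mathcal{W}$; since $l_\pi\in C^3(\mathcal{W},\RB)$ and $w^*$ is interior to $\mathcal{W}$, the second-order necessary optimality condition gives $l_\pi'(w^*)=0$ and $l_\pi''(w^*)\ge 0$. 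Second, setting $\epsilon=0$ in Definition~\ref{def:good} gives the quadratic upper envelope $l_\pi(a)\le\gamma(\pi;0)(a-w^*)^2$ for all $a$.

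Next I would introduce the auxiliary function $g(a):=\gamma(\pi;0)(a-w^*)^2-l_\pi(a)$. The envelope bound says $g\ge 0$ on $\mathcal{W}$, and $g(w^*)=0$, so $w^*$ is also a global minimizer of $g$, which is $C^2$ near $w^*$; hence $g''(w^*)\ge 0$. Since $g''(w^*)=2\gamma(\pi;0)-l_\pi''(w^*)$, this is exactly $l_\pi''(w^*)\le 2\gamma(\pi;0)$. Together with $l_\pi''(w^*)\ge 0$ from the first step, this yields
\begin{align}
 0 \le l_\pi''(w^*) \le 2\gamma(\pi;0). \nonumber
\end{align}
(The same bound can be read off from a Taylor expansion of $l_\pi$ at $w^*$ using $l_\pi(w^*)=l_\pi'(w^*)=0$, which gives $\frac{1}{2}l_\pi''(w^*)=\lim_{a\to w^*}l_\pi(a)/(a-w^*)^2\le\gamma(\pi;0)$.)

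To finish, I would note that $\pi^*=[F(\cdot),\ldots,F(\cdot)]$ makes $l_{\pi^*}\equiv 0$, hence $l_{\pi^*}''(w^*)=0$. So under the hypothesis $\lim_{\pi\to\pi^*}\gamma(\pi;0)=0$, the displayed sandwich forces $l_\pi''(w^*)\to 0=l_{\pi^*}''(w^*)$, which is the claim. The step I expect to need the most care is the appeal to the second-order necessary condition for $g$ and for $l_\pi$: it requires $w^*$ to lie in the interior of the compact domain $\mathcal{W}$ and the relevant functions to be $C^2$ there, so that one genuinely has unconstrained optimality conditions rather than boundary inequalities. Both hold under the standing assumptions ($\w^*\in\mathcal{W}$, taken interior, and $l_\pi\in C^3(\mathcal{W},\RB)$), after which everything reduces to elementary calculus.
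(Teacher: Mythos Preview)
Your argument is correct, and it is genuinely different from the paper's proof. You obtain the direct quantitative sandwich $0 \le l_\pi''(w^*) \le 2\gamma(\pi;0)$ by applying the second-order necessary condition at $w^*$ twice: once to $l_\pi$ (giving the lower bound) and once to the auxiliary difference $g(a)=\gamma(\pi;0)(a-w^*)^2-l_\pi(a)$ (giving the upper bound). The paper instead argues by contradiction: assuming $l_{\pi_n}''(w^*)$ stays bounded away from $0$ along a sequence $\pi_n\to\pi^*$, it expands $l_{\pi_n}$ to third order at $w^*$, picks a carefully shrinking sequence $y_n=w^*+1/(n\sup_x|l_{\pi_n}'''(x)|)$ so that the cubic remainder vanishes, and concludes $\gamma(\pi_n;0)\ge \tfrac12 l_{\pi_n}''(w^*)+o(1)$, contradicting the hypothesis. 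Your route is shorter and more robust: it only needs $C^2$ regularity of $l_\pi$ near $w^*$ and delivers an explicit inequality valid for every $\pi$ with $\gamma(\pi;0)<\infty$, whereas the paper's construction genuinely uses the $C^3$ assumption (through $\sup_x|l_{\pi_n}'''(x)|$) and yields only an asymptotic inequality along the subsequence. The paper's approach does have the minor advantage that it never needs to verify separately that $l_\pi''(w^*)\ge 0$ (the case $l_{\pi_n}''(w^*)<a<0$ is handled symmetrically), but this is a trivial step in your argument anyway.
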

\begin{proof}
If it is wrong, there must exist partitions $\{\pi_n\}$ and constant $a$ such that $l_{\pi_n}^{''}(w^*) > a>0$ (or $<a<0$), and $d(\pi_n,\pi_*) \rightarrow 0$ as $n \rightarrow \infty$. Since $l_{\pi_n}(w^*) = l_{\pi_n}^{'}(w^*)=0$, then for the series $\{y_n\}$ with $y_n = w^* + \frac{1}{n\sup_x{|l_{\pi_n}^{'''}(x)}|}$, by Taylor theory, we obtain:
\begin{align*}
l_{\pi_n}(y_n) = \frac{1}{2}l_{\pi_n}^{''}(w^*)(y_n-w^*)^2 + \frac{1}{6}l_{\pi_n}^{'''}(\zeta_n)(y_n-w^*)^3
\end{align*}
which implies
\begin{align*}
\gamma(\pi_n;0) \geq & \frac{1}{2}l_{\pi_n}^{''}(w^*) + \frac{1}{6}l_{\pi_n}^{'''}(\zeta_n)(y_n-w^*) \nonumber \\
                >    & \frac{a}{2} + \frac{l_{\pi_n}^{'''}(\zeta_n)}{6n\sup_x{|l_{\pi_n}^{'''}(x)}|}
\end{align*}
Let $n\rightarrow \infty$, it conflicts with the condition.
\end{proof}

For the necessary and sufficient condition, we have the following result:
\begin{lemma}\label{lemma:continue wrt pi}
$\lim_{\pi \to \pi^*}\gamma(\pi;0)=0$ if and only if there exist some $\delta$ such that $l_\pi^{''}(y)$ uniformly converge to $l_{\pi^*}^{''}(y) = 0$ on $\{y||y-w^*|<\delta\}$ w.r.t $\pi$.
\end{lemma}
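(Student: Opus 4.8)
The plan is to establish both directions by exploiting the expansion of $l_\pi(\cdot)$ around $w^*$, where (by Lemma~\ref{lem:dual}) we already know $l_\pi(w^*)=0$ and $l_\pi^{'}(w^*)=0$ for every $\pi$. Since we assume $l_\pi \in C^3(\mathcal{W},\RB)$ on the compact set $\mathcal{W}$, the third derivative is uniformly bounded on $\mathcal{W}$ for each fixed $\pi$, so Taylor's theorem gives, for $|a-w^*|$ small, $l_\pi(a) = \tfrac{1}{2}l_\pi^{''}(w^*)(a-w^*)^2 + O(|a-w^*|^3)$; combined with the trivial bound $\gamma(\pi;0)\geq \gamma(\pi;\epsilon)$ and the defining supremum in Definition~\ref{def:good}, this pins $\gamma(\pi;0)$ between $l_\pi^{''}$-type quantities near $w^*$ and whatever happens away from $w^*$. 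The key structural fact to invoke is Lemma~\ref{lem:uniformly converge}: as $d(\pi,\pi^*)\to 0$, $l_\pi(a)$ converges uniformly to $0$ on $\mathcal{W}$, so the ``far from $w^*$'' part of the supremum $\sup_{|a-w^*|\geq\delta}l_\pi(a)/\|a-w^*\|^2 \leq \sup_{a}l_\pi(a)/\delta^2 \to 0$. Hence the only obstruction to $\gamma(\pi;0)\to 0$ is the behavior of the ratio in a shrinking neighborhood of $w^*$, which is exactly controlled by $l_\pi^{''}$ near $w^*$.

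For the ``if'' direction, suppose there is $\delta>0$ such that $l_\pi^{''}(y)\to 0$ uniformly on $\{y:|y-w^*|<\delta\}$ as $d(\pi,\pi^*)\to 0$. Then for $|a-w^*|<\delta$, writing $l_\pi(a) = \tfrac12 l_\pi^{''}(\xi_a)(a-w^*)^2$ for some $\xi_a$ between $a$ and $w^*$ (second-order Taylor with the exact remainder, legitimate since $l_\pi^{'}(w^*)=l_\pi(w^*)=0$), we get $l_\pi(a)/\|a-w^*\|^2 \leq \tfrac12 \sup_{|y-w^*|<\delta}|l_\pi^{''}(y)| \to 0$. For $|a-w^*|\geq\delta$ we use the uniform convergence of $l_\pi$ to $0$ from Lemma~\ref{lem:uniformly converge} as above. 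Taking the max of the two pieces gives $\gamma(\pi;0)\to 0$. For the ``only if'' direction, assume $\gamma(\pi;0)\to 0$; I would argue by contradiction in the spirit of the proof of Lemma~\ref{lem:necessary1}: if no such $\delta$ works, then for every $n$ there is a partition $\pi_n$ with $d(\pi_n,\pi^*)\to 0$ and a point $y_n$ with $|y_n - w^*|<1/n$ (this is where one must be careful — the failure of uniform convergence on every fixed $\{|y-w^*|<\delta\}$ must be massaged into a failure on a sequence $|y_n-w^*|\to 0$, using that $l_{\pi^*}^{''}\equiv 0$ so the limit is continuous) at which $|l_{\pi_n}^{''}(y_n)|$ stays bounded away from $0$. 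Then a Taylor expansion of $l_{\pi_n}$ around $y_n$ together with $l_{\pi_n}(w^*)=l_{\pi_n}^{'}(w^*)=0$ produces a test point near $w^*$ whose ratio $l_{\pi_n}(\cdot)/\|\cdot-w^*\|^2$ is bounded below by roughly $\tfrac12|l_{\pi_n}^{''}(y_n)|$ minus a vanishing $C^3$-remainder term, contradicting $\gamma(\pi_n;0)\to 0$.

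The main obstacle I anticipate is the ``only if'' direction's bookkeeping: negating ``$\exists\delta$ such that $l_\pi^{''}$ converges uniformly on $\{|y-w^*|<\delta\}$'' gives, a priori, a sequence of neighborhoods and partitions where convergence fails, but the bad points $y_n$ need not approach $w^*$ — they could sit near the boundary of a fixed $\delta$-ball. One resolves this by noting that if uniform convergence fails on some fixed ball $\{|y-w^*|<\delta_0\}$, it fails on all smaller balls as well only if the bad points can be taken arbitrarily close to $w^*$; if instead the bad points stay at distance $\geq\delta'$ from $w^*$, then $l_{\pi_n}^{''}(y_n)\not\to 0$ there, but pairing this with the third-order control and $l_{\pi_n}$ vanishing to second order at $w^*$ still forces the ratio $l_{\pi_n}(z_n)/\|z_n-w^*\|^2$ to be large for a suitable $z_n$ on the segment from $w^*$ to $y_n$ — one integrates $l_{\pi_n}^{''}$ from $w^*$ outward and uses that it cannot be uniformly small in between. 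The cleanest route is probably: choose $z_n$ to be the point in $[w^*,y_n]$ (or in the ball) where $|l_{\pi_n}^{''}|$ is comparatively largest, show $l_{\pi_n}(z_n) \gtrsim |l_{\pi_n}^{''}(z_n)|\,\|z_n-w^*\|^2$ up to the vanishing remainder, and conclude. The second-order positivity of $l_\pi$ near $w^*$ (which follows from $l_\pi\geq 0$, $l_\pi(w^*)=0$) is what keeps all these remainder-dominated estimates one-signed and hence usable.
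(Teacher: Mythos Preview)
Your sufficiency argument is essentially the paper's: the paper argues by contradiction and detours through equicontinuity of $\{l_\pi^{''}\}$ to obtain the near-$w^*$ bound, whereas your Lagrange-remainder formula $l_\pi(a)=\tfrac12 l_\pi^{''}(\xi_a)(a-w^*)^2$ gets there directly; both then handle the region $|a-w^*|\geq\delta$ via Lemma~\ref{lem:uniformly converge}. For necessity the paper takes a shorter route than your test-point construction. It invokes Lemma~\ref{lem:necessary1} (which you do not cite) to obtain $l_{\pi_n}^{''}(w^*)\to 0$, then applies first-order Taylor to $l_{\pi_n}^{''}$ itself rather than to $l_{\pi_n}$: $l_{\pi_n}^{''}(w^*)=l_{\pi_n}^{''}(y_n)+\mathcal{O}(y_n-w^*)$. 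Since the left side is $o(1)$ while $|l_{\pi_n}^{''}(y_n)|>a$, the remainder cannot vanish, so $|y_n-w^*|$ stays bounded below, contradicting $y_n\to w^*$. Your plan of manufacturing a witness $z_n$ with $l_{\pi_n}(z_n)/(z_n-w^*)^2$ large is workable but more laborious, and when made precise (write $l_{\pi_n}(z_n)=\tfrac12 l_{\pi_n}^{''}(\xi)(z_n-w^*)^2$ and compare $l_{\pi_n}^{''}(\xi)$ to $l_{\pi_n}^{''}(y_n)$) it reduces to the same Taylor step on $l^{''}$ that the paper uses. You are right that the negation is delicate: the paper simply asserts one may take $y_n\to w^*$ with a single constant $a>0$, glossing over exactly the bookkeeping you flag; both arguments also tacitly need the third-derivative constant hidden in $\mathcal{O}(y_n-w^*)$ to be controlled along the sequence $\pi_n$.
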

\begin{proof}
First we proof the sufficiency.
If it is wrong, there must exist some qualified partitions $\pi_n$, $y_n$ and constant $a$ such that $l_{\pi_n}(y_n) > a(y_n-w^*)^2>0$, and $d(\pi_n,\pi_*) \rightarrow 0$ as $n \rightarrow \infty$.

Since $l_\pi^{''}(y)$ uniformly converge to $0$, one direct result is that $\{l_\pi^{''}(y)\}$ is equicontinuous, which means $\forall \epsilon > 0, \exists \delta > 0$ such that
\begin{align*}
|l_\pi^{''}(y_1) - l_\pi^{''}(y_2)| < \epsilon, \forall |y_1-y_2|<\delta, \pi.
\end{align*}
By taking $y_2 = w^*, \epsilon = \frac{a}{4}$, since $\lim_{\pi \rightarrow \pi^*} l_{\pi}^{''}(w^*) = 0$, we obtain $\exists 0<\delta_0<\delta$
\begin{align*}
|l_{\pi}^{''}(y)| < \frac{a}{4} + |l_\pi^{''}(w^*)| < \frac{a}{2} , \forall |y-w^*|<\delta_0, d(\pi,\pi^*)<\delta_0
\end{align*}
which implies $\forall |y-w^*|<\delta_0, d(\pi,\pi^*)<\delta_0$,
\begin{align*}
  &l_{\pi}(y) = \int_{w^*}^y l_{\pi}^{'}(t)dt \in (\frac{-a(y-w^*)^2}{4}, \frac{a(y-w^*)^2}{4})
\end{align*}
It implies that $|y_n-w^*|\geq\delta_0$. However, according to Lemma \ref{lem:uniformly converge}, we have
\begin{align*}
  \frac{l_{\pi_n}(y_n)}{(y_n-w^*)^2} \leq \mathop{\sup}_{|y-w^*|\geq\delta_0} \frac{l_{\pi_n}(y)}{(y-w^*)^2} \leq \mathop{\sup}_{|y-w^*|\geq\delta_0} \frac{l_{\pi_n}(y)}{\delta_0^2} \rightarrow 0
\end{align*}

which conflicts with the assumption at the beginning of proof.

For the necessity, if it is wrong, there must exit $\pi_n, y_n$ and constant $a$ such that $|l_{\pi_n}^{''}(y_n)| > a > 0$, and $d(\pi_n,\pi_*) \rightarrow 0$, $|y_n - w^*| \rightarrow 0$ as $n \rightarrow \infty$.

According to Lemma \ref{lem:necessary1}, we have $\lim_{n\rightarrow \infty} l_{\pi_n}^{''}(w^*) = 0$. On the other hand, by Taylor theory, we obtain
\begin{align*}
l_{\pi_n}^{''}(w^*) = l_{\pi_n}^{''}(y_n) + \mathcal{O}(y_n - w^*) \rightarrow 0 (n\rightarrow \infty)
\end{align*}
which implies that $\exists \delta_0$ such that $|y_n - w^*| > \delta_0$ with sufficient large $n$, which makes the confliction.
\end{proof}

Above all, we get the sufficient and necessary condition of $\lim_{\pi \to \pi^*}\gamma(\pi;0)=0$. As an example in Lemma \ref{lemma: quadratic function}, it satisfies the condition that $l^{''}_\pi(y)$ uniform converge to 0.

\subsection{Proof of Lemma \ref{lemma: Good partition}}
Assume $\pi=[F_1(\cdot),\ldots,F_p(\cdot)]$ is a partition w.r.t. $P(\cdot)$, where $F_k(\w) = \frac{1}{|D_k|}\sum_{i \in D_k}f_i(\w)$ is the local loss function with bounded domain, each $f_i(\cdot)$ is Lipschitz continuous and sampled from some unknown distribution $\PB$.
If we assign these $\{f_i(\cdot)\}$ uniformly to each worker, then with high probability, $\gamma(\pi; \epsilon) \leq  \frac{1}{p}\sum_{k=1}^p \mathcal{O}(1/(\epsilon\sqrt{|D_k|}))$. Moreover, if $l_\pi(\a)$ is convex w.r.t. $\a$, then $\gamma(\pi; \epsilon) \leq  \frac{1}{p}\sum_{k=1}^p \mathcal{O}(1/\sqrt{\epsilon|D_k|})$. Here we hid the $\log$ term and dimensionality $d$.

\begin{proof}
For convenience, We define
\begin{align}
\tilde{F}(\w) = & \E_{\xi\sim\P} [f_\xi(\w)] \nonumber \\
\tilde{P}(\w) = & \tilde{F}(\w) + R(\w) \nonumber
\end{align}
Then both $F(\w)$ and $F_k(\w)$ are the empirical estimation of $\tilde{F}(\w)$. Let the domain of $f_\xi(\cdot)$ be $\mathcal{W} = \{\w|\|\w\|\leq B\}$. By the uniform convergence (theorem 5 in
(\cite{DBLP:conf/colt/Shalev-ShwartzSSS09})), we obtain that with high probability,
\begin{align}
|P_k(\w;\a) - \tilde{P}(\w)| \leq & |F_k(\w) -\tilde{F}(\w)| + \|\w\|\|\nabla F(\a) - \nabla F_k(\a)\| \nonumber \\
                             \leq & \mathcal{O}(\sqrt{\frac{d}{|D_k|}}) + B\|\nabla \tilde{F}(\a) - \nabla F(\a)\| + B\|\nabla \tilde{F}(\a) - \nabla F_k(\a)\| \nonumber \\
                             \leq & \mathcal{O}(\sqrt{\frac{d}{|D_k|}}) + \mathcal{O}(\frac{dB}{\sqrt{n}}) + \mathcal{O}(\frac{dB}{\sqrt{|D_k|}})\nonumber \\
                             =    & \mathcal{O}(\frac{1}{\sqrt{|D_k|}}) \nonumber
\end{align}
Here we ignore the $\log$ term.
Similarly, we have $|P(\w) - \tilde{P}(\w)| \leq \mathcal{O}(\frac{1}{\sqrt{n}})$, which means that
\begin{align}
|P_k(\w;\a) - P(\w)| \leq \mathcal{O}(\frac{1}{\sqrt{|D_k|}}) \nonumber
\end{align}
Then we have
\begin{align}
l_\pi(\a) = & \frac{1}{p}\sum_{k=1}^p P(\w^*) - P_k(\w_k^*(\a);\a) \nonumber \\
                            \leq &\frac{1}{p}\sum_{k=1}^p \mathop{\max}_{\w} |P(\w) - P_k(\w;\a)| \nonumber \\
                            \leq &\frac{1}{p}\sum_{k=1}^p \mathcal{O}(\frac{1}{\sqrt{|D_k|}}) \nonumber
\end{align}
We can note that the right term is independent on $\a$. If $|D_k|$ is large enough, the Local-Global Gap would be small. And
\begin{align}
\gamma(\pi; \epsilon) = \mathop{\sup}_{\|\a-\w^*\|^2 \geq \epsilon}\frac{l_\pi(\a)}{\|\a-\w^*\|^2} \leq \frac{1}{p}\sum_{k=1}^p O(\frac{1}{\epsilon\sqrt{|D_k|}}) \nonumber
\end{align}
Moreover, if $l_\pi(\a)$ is convex and $\epsilon\leq \|\a-\w^*\|^2\leq 1$, then define $\a_1, \theta$ such that $\a - \w^* = \theta(\a_1-\w^*), \|\a_1-\w^*\|=1$. Then $\theta \geq \sqrt{\epsilon}$.
Since $l_\pi(\w^*) = 0$, then
\begin{align*}
	l_\pi(\a) = l_\pi(\theta\a_1 + (1-\theta)\w^*) \leq \theta l_\pi(\a_1)
\end{align*}
We get
\begin{align*}
\gamma(\pi; \epsilon) =    & \mathop{\sup}_{\|\a-\w^*\|^2 \geq \epsilon}\frac{l_\pi(\a)}{\|\a-\w^*\|^2} \leq \mathop{\sup}_{\|\a-\w^*\|^2 \geq \epsilon}\frac{l_{\pi}(\a_1)}{\theta} \nonumber \\
                      \leq & \frac{1}{p}\sum_{k=1}^p
O(\frac{1}{\sqrt{\epsilon|D_k|}})
\end{align*}
\end{proof}

\section{Convergence of pSCOPE}
\subsection{Proof of Theorem \ref{theorem:qualified segementation}}
\begin{proof}
Let $\u_{k,m+1} = \u_{k,m} - \eta\g_{k,m}$. $\forall \z$
\begin{align}
     & P_k(\u_{k,m+1};\w_t) \nonumber \\
\leq & F_k(\u_{k,m};\w_t) - \eta \nabla F_k(\u_{k,m};\w_t)^T(\g_{k,m}) + \frac{L\eta^2}{2}\|\g_{k,m}\|^2 + R(\u_{k,m} - \eta \g_{k,m}) \nonumber \\
\leq & F_k(\z;\w_t) - \nabla F_k(\u_{k,m};\w_t)^T(\z - \u_{k,m}) - \frac{\mu}{2}\|\z - \u_{k,m}\|^2 \nonumber \\
     & - \eta \nabla F_k(\u_{k,m};\w_t)^T(\g_{k,m}) + \frac{L\eta^2}{2}\|\g_{k,m}\|^2 \nonumber \\
     & + R(\z) - (\g_{k,m} - \v_{k,m})^T(\z - \u_{k,m} + \eta \g_{k,m}) \nonumber \\
=    & P_k(\z;\w_t) - \frac{\mu}{2}\|\z - \w_{k,m}\|^2 + (\v_{k,m}- \nabla F_k(\u_{k,m};\w_t))^T(\z - \u_{k,m} + \eta \g_{k,m}) \nonumber \\
     & - \g_{k,m}^T(\z - \u_{k,m}) - (\eta-\frac{L\eta^2}{2})\|\g_{k,m}\|^2 \nonumber
\end{align}
The first inequality uses the smooth property of $F_k(\cdot)$. The second inequlity uses the strongly convex property of $F_k(\cdot)$ and the fact that $\g_{k,m} - \v_{k,m} \in \partial R(\mbox{prox}_{R,\eta}(\u_{k,m} - \eta \v_{k,m})) = \partial R(\u_{k,m} - \eta \g_{k,m})$.

Taking $\z = \w^*, \eta \leq \frac{1}{L}$, then we have
\begin{align}
     & P_k(\u_{k,m+1};\w_t) \nonumber \\
\leq & P_k(\w^*;\w_t)  + (\v_{k,m}- \nabla F_k(\u_{k,m};\w_t))^T(\w^* - \u_{k,m+1}) \nonumber \\
     & - \frac{\mu}{2}\|\w^* - \u_{k,m}\|^2 - \g_{k,m}^T(\w^* - \u_{k,m}) - \frac{\eta}{2}\|\g_{k,m}\|^2 \label{eq:the_3.1}
\end{align}
According to the update rule, we have
\begin{align}
& \|\u_{k,m+1} - \w^*\|^2 \nonumber \\
=     & \|\u_{k,m} -\w^*- \eta \g_{k,m}\|^2 \nonumber \\
=     & \|\u_{k,m} -\w^*\|^2 + 2\eta\g_{k,m}^T(\w^*-\u_{k,m}) +\eta^2\|\g_{k,m}\|^2 \nonumber \\
\overset{(\ref{eq:the_3.1})}{\leq}  & \|\u_{k,m} -\w^*\|^2 + 2\eta(P_k(\w^*;\w_t) - P_k(\u_{k,m+1};\w_t)) - \mu\eta\|\u_{k,m}-\w^*\|^2 \nonumber \\
      & -2\eta(\v_{k,m} - \nabla F_k(\u_{k,m};\w_t))^T(\u_{k,m+1}-\w^*) \nonumber
\end{align}

Define $\hat{\u}_{k,m+1} = prox_{\eta R}(\u_{k,m} - \eta \nabla F_k(\u_{k,m};\w_t))$, $\Delta_{k,m} = \v_{k,m}- \nabla F_k(\u_{k,m};\w_t)$, then we have
\begin{align}
     &-2\eta(\v_{k,m}- \nabla F_k(\u_{k,m};\w_t))^T(\u_{k,m+1}-\w^*) \nonumber \\
=    & -2\eta\Delta_{k,m}^T(\u_{k,m+1}-\hat{\u}_{k,m+1}) - 2\eta\Delta_{k,m}^T(\hat{\u}_{k,m+1}-\w^*) \nonumber \\
\leq & 2\eta\|\Delta_{k,m}\|\|(\u_{k,m+1}-\hat{\u}_{k,m+1})\| - 2\eta\Delta_{k,m}^T(\hat{\u}_{k,m+1}-\w^*) \nonumber \\
\leq & 2\eta^2\|\Delta_{k,m}\|^2 - 2\eta\Delta_{k,m}^T(\hat{\u}_{k,m+1}-\w^*)
\end{align}
The last inequality uses the property $\forall \x,\y, \|\mbox{prox}_{R,\eta}(\x) - \mbox{prox}_{R,\eta}(\y)\| \leq \|\x - \y\|$.

We can note that for the expectation, we have
\begin{align}
\EB \Delta_{k,m} = \EB \v_{k,m} - \nabla F_k(\u_{k,m};\w_t) = 0 \nonumber
\end{align}

For the variance, we have
\begin{align}
     &\EB \|\Delta_{k,m}\|^2 \nonumber \\
=    & \EB \|\v_{k,m}- \nabla F_k(\u_{k,m};\w_t)\|^2 \nonumber \\
=    & \frac{1}{q}\sum_{i\in D_k}\| \nabla f_i(\u_{k,m}) - \nabla f_i(\w_t) - (\nabla F_k(\u_{k,m}) - \nabla F_k(\w_t)) \|^2 \nonumber \\
\leq & \frac{1}{q}\sum_{i\in D_k}\| \nabla f_i(\u_{k,m}) - \nabla f_i(\w_t) \|^2 \nonumber \\
\leq & L^2\|\u_{k,m}-\w_t\|^2 \nonumber \\
\leq & 2L^2\|\u_{k,m}-\w^*\|^2 + 2L^2\|\w_t-\w^*\|^2 \nonumber
\end{align}

Combining the above equations, we have
\begin{align}
     & \EB\|\u_{k,m+1} - \w^*\|^2 \nonumber \\
\leq & (1-\mu\eta + 2L^2\eta^2)\EB\|\u_{k,m} -\w^*\|^2 + 2L^2\eta^2\EB\|\w_t - \w^*\|^2 + 2\eta\EB(P_k(\w^*;\w_t) - P_k(\u_{k,m+1};\w_t)) \nonumber \\
\leq & (1-\mu\eta + 2L^2\eta^2)\EB\|\u_{k,m} -\w^*\|^2 + 2L^2\eta^2\EB\|\w_t - \w^*\|^2 + 2\eta(P_k(\w^*;\w_t) - P_k(\w^*(\w_t);\w_t)) \nonumber
\end{align}
where $\w^*(\w_t) = \mathop{\arg\min}_\w P_k(\w;\w_t)$. Let $\rho = 1-\mu\eta + 2L^2\eta^2$, then we have
\begin{align}
     & \EB\|\u_{k,M} - \w^*\|^2 \nonumber \\
\leq & (\rho^M+\frac{2L^2\eta^2}{1-\rho})\|\w_t - \w^*\|^2 + \frac{2\eta}{1-\rho}(P_k(\w^*;\w_t) - P_k(\w^*(\w_t);\w_t)) \nonumber
\end{align}

Summing up the equation from $k=1$ to $p$,
\begin{align}
     & \EB\|\w_{t+1} - \w^*\|^2 \nonumber \\
\leq & (\rho^M+\frac{2L^2\eta^2}{1-\rho})\|\w_t - \w^*\|^2 + \frac{2\eta}{(1-\rho)}l_\pi(\w_t) \nonumber \\
\leq & (\rho^M+\frac{2L^2\eta^2 + \eta \xi}{1-\rho})\|\w_t - \w^*\|^2 \nonumber \\
=    & ((1-\mu\eta + 2L^2\eta^2)^M+\frac{2L^2\eta + 2\xi}{\mu - 2L^2\eta})\|\w_t - \w^*\|^2 \nonumber
\end{align}
\end{proof}

\subsection{Proof of Corollary \ref{corollary:good segementation}}
\begin{proof}
Since $\gamma(\pi;\epsilon) \leq \frac{\mu}{8}$. Then
\begin{align}
\frac{2\zeta+2L^2\eta}{\mu - 2L^2\eta} \leq \frac{1}{2}
\end{align}
On the other hand,
\begin{align}
(1-\mu\eta + 2L^2\eta^2)^M = (1 - \frac{5}{72\kappa^2})^M \leq \frac{1}{4} \nonumber
\end{align}
So the computation complexity is $O((n/p+\kappa^2) \log(\frac{1}{\epsilon}))$
\end{proof}

\subsection{Proof of Corollary \ref{corollary:proxsvrg}}
\begin{proof}
Since $p = 1$, we have $l_\pi(\w_t) = 0, \gamma(\pi;0) = 0$. Then
\begin{align}
\frac{2L^2\eta}{\mu - 2L^2\eta} \leq \frac{1}{2}
\end{align}
and
\begin{align}
(1-\mu\eta + 2L^2\eta^2)^M = (1 - \frac{1}{9\kappa^2})^M \leq \frac{1}{4} \nonumber
\end{align}
So the computation complexity is $O((n+\kappa^2) \log(\frac{1}{\epsilon}))$
\end{proof}

\section{Handle High Dimensional Sparse Data}
\label{appendix:recovery rules}

The algorithm with recovery rule is as follow:

\begin{algorithm}[thb]
\caption{Efficient Learning for Sparse Data}
\label{alg:sparseupdate}
\small
\begin{algorithmic}[1]
\STATE  \textbf{Task of the $k${th} worker}:
\FOR{$t=0,1,2,...T-1$}
\STATE Wait until receiving $\w_t$ from master;
\STATE Let $\u_{k,0} = \w_t$, calculate $\z_k = \sum_{i\in D_k} f_i(\w_t)$ and send $\z_k$ to master;
\STATE Wait until receiving $\z$ from master;
\STATE Set $\r = (0,0,\ldots,0)\in \RB^d$;
\FOR{$m=0,1,2,...M-1$}
\STATE Randomly choose an instance index $s$ and denote $C_s = \{j|x_s^{(j)} \neq 0\}$;
\STATE $\forall j\in C_s$, recover $u^{(j)}_{k,m} $ according to the \emph{recovery rules} with $m_1 = r^{(j)}, m_2=m$;
\STATE Calculate the inner products $\x_s^T\u_{k,m}$ and $\x_s^T\w_t$;
\FOR{$j \in C_s$}
\STATE $v^{(j)}_{k,m} = h_s^{'}(\x_s^T\u_{k,m})x_s^{(j)} - h_s^{'}(\x_s^T\w_t)x_s^{(j)} + z^{(j)}$;
\STATE $u^{(j)}_{k,m} \leftarrow prox_{\lambda_2|u|, \eta}((1-\eta\lambda_1)u^{(j)}_{k,m} - \eta v^{(j)}_{k,m})$;
\STATE $r^{(j)} = m+1$;
\ENDFOR
\ENDFOR
\STATE  $\forall j\in [d]$, recover $u^{(j)}$ to get $\u_{k,M}$;
\STATE Send $\u_{k,M}$ to master
\ENDFOR
\end{algorithmic}
\end{algorithm}

First, we define two sequences $\{\alpha_q\}_{q=0,1,\ldots}, \{\beta_q\}_{q=1,2,\ldots}$ as: $\alpha_0 = 0$ and for $q = 1,2,\ldots$
\begin{align}
  \alpha_{q} = \frac{\beta_q}{(1-\lambda_1\eta)^q}, ~~\beta_q &= \sum_{i=1}^{q}(1-\lambda_1\eta)^{i-1}
\end{align}

\begin{lemma}
(\textbf{Recovery Rules}) For the coordinate $j$ and constants $m_1, m_2$, if $j \notin C_{i_{k,m}}$ for any $m\in [m_1, m_2-1]$, then the relation between $u_{k,m_1}^{(j)}$ and $u_{k,m_2}^{(j)}$ can be summarized as follow:
\begin{enumerate}
  \item $|z^{(j)}| < \lambda_2$:
        \begin{enumerate}
          \item $u_{k,m_1}^{(j)} > 0$. In this case, define $q_0$ satisfies $\alpha_{q_0}\eta(z^{(j)}+\lambda_2)\leq u_{k,m_1}^{(j)} < \alpha_{q_0+1}\eta(z^{(j)}+\lambda_2)$.

                If $m_2 - m_1 \leq q_0$, then
                \begin{align}
                	  &u_{k,m_2}^{(j)} \nonumber \\
                	= &(1-\lambda_1\eta)^{m_2-m_1}[u_{k,m_1}^{(j)} - \alpha_{m_2-m_1}\eta(z^{(j)}+\lambda_2)] \nonumber
                \end{align}
                If $m_2 - m_1 > q_0$, then
                      $$u_{k,m_2}^{(j)} = 0.$$
          \item $u_{k,m_1}^{(j)} = 0$.
                In this case, $u_{k,m_2}^{(j)} = 0$.
          \item $u_{k,m_1}^{(j)} < 0$.
                In this case, define $q_0$ satisfies $\alpha_{q_0+1}\eta(z^{(j)}-\lambda_2) < u_{k,m_1}^{(j)} \leq \alpha_{q_0}\eta(z^{(j)}-\lambda_2)$.

                If $m_2 - m_1 \leq q_0$, then
                      $$u_{k,m_2}^{(j)} = (1-\lambda_1\eta)^{q_0}[u_{k,m_1}^{(j)} - \alpha_{q_0}\eta(z^{(j)}-\lambda_2)].$$
                If $m_2 - m_1 > q_0$, then
                      $$u_{k,m_2}^{(j)} = 0.$$
        \end{enumerate}
  \item $z^{(j)} = -\lambda_2$.

        In this case, we have
        \begin{align*}
        u_{k,m_2}^{(j)} =
        \begin{cases}
          (1-\lambda_1\eta)^{(m_2-m_1)}u_{k,m_1}^{(j)}, & \mbox{if } u_{k,m_1}^{(j)} \geq 0  \\
          \mbox{the same as 1-(c)}, & \mbox{if } u_{k,m_1}^{(j)} < 0 \\
        \end{cases}
        \end{align*}
  \item $z^{(j)} = \lambda_2$.

        In this case we have
        \begin{align*}
        u_{k,m_2}^{(j)} =
        \begin{cases}
          (1-\lambda_1\eta)^{(m_2-m_1)}u_{k,m_1}^{(j)}, & \mbox{if } u_{k,m_1}^{(j)} \leq 0  \\
          \mbox{the same as 1-(a)}, & \mbox{if } u_{k,m_1}^{(j)} > 0 \\
        \end{cases}
        \end{align*}
  \item $z^{(j)} > \lambda_2$.
        \begin{enumerate}
          \item $u_{k,m_1}^{(j)} > 0$. In this case, define $q_0$ satisfies $\alpha_{q_0}\eta(z^{(j)}+\lambda_2)\leq u_{k,m_1}^{(j)} < \alpha_{q_0+1}\eta(z^{(j)}+\lambda_2)$.

                  If $m_2 - m_1 \leq q_0$, then
                        $$u_{k,m_2}^{(j)} = (1-\lambda_1\eta)^{q_0}[u_{k,m_1}^{(j)} - \alpha_{q_0}\eta(z^{(j)}+\lambda_2)].$$
                  If $m_2 - m_1 > q_0$ and $(1-\lambda_1\eta)u_{k,m_1 + q_0}^{(j)}\leq \eta(z^{(j)}-\lambda_2)$, then
                        \begin{align*}
                          u_{k,m_2}^{(j)} =& (1-\lambda_1\eta)^{m_2-m_1-q_0}u_{k,q_0}^{(j)} \\
                                           & - \eta(z^{(j)}-\lambda_2)\beta_{m_2-m_1-q_0}
                        \end{align*}
                  If $m_2 - m_1 = q_0+1$ and $(1-\lambda_1\eta)u_{k,m_1+q_0}^{(j)}> \eta(z^{(j)}-\lambda_2)$, then
                        \begin{align*}
                          u_{k,m_2}^{(j)} = 0
                        \end{align*}
                  If $m_2 - m_1 > q_0+1$ and $(1-\lambda_1\eta)u_{k,m_1+q_0}^{(j)}> \eta(z^{(j)}-\lambda_2)$, then
                        \begin{align*}
                          u_{k,m_2}^{(j)} = - \eta(z^{(j)}-\lambda_2)\beta_{m_2-m_1-q_0-1}
                        \end{align*}
          \item $u_{k,m_1}^{(j)} \leq 0$. In this case, we have
                  \begin{align*}
                    u_{k,m_2}^{(j)} =& (1-\lambda_1\eta)^{m_2-m_1}u_{k,m_1}^{(j)} \\
                                     & - \eta(z^{(j)}-\lambda_2)\beta_{m_2-m_1}
                  \end{align*}
        \end{enumerate}
  \item $z^{(j)} < -\lambda_2$
        \begin{enumerate}
          \item $u_{k,m_1}^{(j)} \geq 0$. In this case, we have
                  \begin{align*}
                    u_{k,m_2}^{(j)} =& (1-\lambda_1\eta)^{m_2-m_1}u_{k,m_1}^{(j)} \\
                                     & - \eta(z^{(j)}+\lambda_2)\beta_{m_2-m_1}
                  \end{align*}
          \item $u_{k,m_1}^{(j)} < 0$. In this case, define $q_0$ satisfies $\alpha_{q_0+1}\eta(z^{(j)}-\lambda_2) < u_{k,m_1}^{(j)} \leq \alpha_{q_0}\eta(z^{(j)}-\lambda_2)$.

                  If $m_2 - m_1 \leq q_0$, then
                      $$u_{k,m_2}^{(j)} = (1-\lambda_1\eta)^{q_0}[u_{k,m_1}^{(j)} - \alpha_{q_0}\eta(z^{(j)}-\lambda_2)].$$
                  If $m_2 - m_1 > q_0$ and $(1-\lambda_1\eta)u_{k,m_1 + q_0}^{(j)}\leq \eta(z^{(j)}+\lambda_2)$, then
                        \begin{align*}
                          u_{k,m_2}^{(j)} =& (1-\lambda_1\eta)^{m_2-m_1-q_0}u_{k,q_0}^{(j)} \\
                                           & - \eta(z^{(j)}+\lambda_2)\beta_{m_2-m_1-q_0}
                        \end{align*}
                  If $m_2 - m_1 = q_0+1$ and $(1-\lambda_1\eta)u_{k,m_1+q_0}^{(j)}> \eta(z^{(j)}+\lambda_2)$, then
                        \begin{align*}
                          u_{k,m_2}^{(j)} = 0
                        \end{align*}
                  If $m_2 - m_1 > q_0+1$ and $(1-\lambda_1\eta)u_{k,m_1+q_0}^{(j)}> \eta(z^{(j)}+\lambda_2)$, then
                        \begin{align*}
                          u_{k,m_2}^{(j)} = - \eta(z^{(j)}+\lambda_2)\beta_{m_2-m_1-q_0-1}
                        \end{align*}
        \end{enumerate}
\end{enumerate}
\end{lemma}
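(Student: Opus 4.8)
The plan is to isolate the update of a single untouched coordinate $j$ as a scalar recursion and then iterate it in closed form. First I would note that $j\notin C_{i_{k,m}}$ means $x_{i_{k,m}}^{(j)}=0$, so the data-dependent part of the stochastic gradient vanishes in coordinate $j$ and $v_{k,m}^{(j)}=z^{(j)}$ is constant for every $m\in[m_1,m_2-1]$. For the elastic-net model the $j$th component of the inner update is $u_{k,m+1}^{(j)}=\mathcal{S}_{\lambda_2\eta}\big((1-\lambda_1\eta)u_{k,m}^{(j)}-\eta v_{k,m}^{(j)}\big)$, where $\mathcal{S}_\tau(x)=\mathrm{sign}(x)\max(|x|-\tau,0)$ is the soft-thresholding operator and the ridge term $\frac{\lambda_1}{2}\|\w\|^2$ has been folded into the gradient step; hence, writing $a:=1-\lambda_1\eta$ (which lies in $(0,1]$ for small $\eta$, and in $(0,1)$ when $\lambda_1>0$), for untouched $j$ this becomes the one-dimensional recursion $u_{k,m+1}^{(j)}=\mathcal{S}_{\lambda_2\eta}\big(a\,u_{k,m}^{(j)}-\eta z^{(j)}\big)$. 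Setting $\beta_q=\sum_{i=1}^{q}a^{i-1}$ so that $\alpha_q=\beta_q/a^q$, I would record the identity $\beta_{q+1}=1+a\beta_q$, equivalently $\alpha_{q+1}=(\alpha_q+1)/a$, which shows that $\{\alpha_q\}$ is strictly increasing from $\alpha_0=0$ and diverges; consequently $q_0$ in the statement is well defined by the sandwich $\alpha_{q_0}\eta(z^{(j)}+\lambda_2)\le u_{k,m_1}^{(j)}<\alpha_{q_0+1}\eta(z^{(j)}+\lambda_2)$ (here $z^{(j)}+\lambda_2>0$ since $|z^{(j)}|<\lambda_2$), and the affine map $g(u)=a\,u-\eta(z^{(j)}+\lambda_2)$ admits the closed form $g^{q}(u)=a^{q}\big(u-\alpha_q\eta(z^{(j)}+\lambda_2)\big)$, again via $\beta_{q+1}=1+a\beta_q$.

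Next I would prove by induction on $q=0,1,\dots,q_0$ that $u_{k,m_1+q}^{(j)}=g^{q}\big(u_{k,m_1}^{(j)}\big)=a^{q}\big(u_{k,m_1}^{(j)}-\alpha_q\eta(z^{(j)}+\lambda_2)\big)\ge 0$. The base case is the hypothesis $u_{k,m_1}^{(j)}>0$. For the inductive step, as long as $u_{k,m_1+q}^{(j)}\ge 0$ one has $a\,u_{k,m_1+q}^{(j)}-\eta z^{(j)}\ge -\eta z^{(j)}>-\lambda_2\eta$, so the argument of $\mathcal{S}_{\lambda_2\eta}$ is never in its negative branch; moreover $a\,u_{k,m_1+q}^{(j)}-\eta z^{(j)}-\lambda_2\eta=a^{q+1}\big(u_{k,m_1}^{(j)}-\alpha_{q+1}\eta(z^{(j)}+\lambda_2)\big)$, which is $\ge 0$ whenever $q+1\le q_0$, by the definition of $q_0$ and the monotonicity of $\{\alpha_q\}$. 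Hence the positive branch of $\mathcal{S}_{\lambda_2\eta}$ is active, $u_{k,m_1+q+1}^{(j)}=g\big(u_{k,m_1+q}^{(j)}\big)=g^{q+1}\big(u_{k,m_1}^{(j)}\big)$, and the induction closes. Taking $q=m_2-m_1\le q_0$ yields case 1.

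For case 2, at $q=q_0$ the same identity gives $a\,u_{k,m_1+q_0}^{(j)}-\eta z^{(j)}-\lambda_2\eta=a^{q_0+1}\big(u_{k,m_1}^{(j)}-\alpha_{q_0+1}\eta(z^{(j)}+\lambda_2)\big)<0$ by the upper bound in the definition of $q_0$, while still $a\,u_{k,m_1+q_0}^{(j)}-\eta z^{(j)}>-\lambda_2\eta$; therefore $u_{k,m_1+q_0+1}^{(j)}=\mathcal{S}_{\lambda_2\eta}(\cdot)=0$. Since $\mathcal{S}_{\lambda_2\eta}(-\eta z^{(j)})=0$ because $|z^{(j)}|<\lambda_2$, the value $0$ is absorbing for the recursion, so $u_{k,m}^{(j)}=0$ for every $m\ge m_1+q_0+1$, which is case 2. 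The remaining sign/magnitude configurations in the full Recovery Rules (appendix) are obtained from the same three ingredients---reduction to a scalar recursion, the closed form of the active affine branch, and identifying when soft-thresholding collapses onto a fixed point---with $\lambda_2$ replaced by $-\lambda_2$ and the roles of the two thresholds interchanged as needed. The one delicate point, and where I expect to have to be most careful, is the branch bookkeeping: one must check that for every inner step $q\le q_0$ the argument of $\mathcal{S}_{\lambda_2\eta}$ stays in the ``shift by $\lambda_2\eta$'' regime and then crosses into the ``collapse to $0$'' regime exactly at step $q_0+1$, which is precisely what the monotonicity of $\{\alpha_q\}$ together with the defining inequalities for $q_0$ guarantee.
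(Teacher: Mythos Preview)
Your proposal is correct and follows essentially the same approach as the paper's own proof: both reduce to the scalar soft-thresholding recursion $u_{k,m+1}^{(j)}=\mathcal{S}_{\lambda_2\eta}\big((1-\lambda_1\eta)u_{k,m}^{(j)}-\eta z^{(j)}\big)$, verify by induction that the closed form $u_{k,m_1+q}^{(j)}=(1-\lambda_1\eta)^q\big(u_{k,m_1}^{(j)}-\alpha_q\eta(z^{(j)}+\lambda_2)\big)$ holds while the positive branch is active (i.e.\ for $q\le q_0$), check that the argument falls into the zero branch precisely at step $q_0+1$, and then observe that $0$ is absorbing when $|z^{(j)}|<\lambda_2$. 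Your explicit identification of the identity $\alpha_{q+1}=(\alpha_q+1)/a$ and the affine iterate $g^q$ is in fact cleaner than the paper's ``it is easy to verify''; the only difference is that the paper mechanically writes out every one of the remaining sign/magnitude cases (including the four-way branching in 4(a) and 5(b)), whereas you defer those to symmetry --- which is legitimate, though for cases~4 and~5 the trajectory does not terminate at~$0$ but instead switches branches, so the bookkeeping there is slightly more than a sign flip.
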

\begin{proof}
According to the definition of proximal mapping, we have
\begin{align}
  u_{k,m+1}^{(j)} =
  \begin{cases}
    d^{(j)}-\lambda_2\eta, & \mbox{if } d^{(j)} \geq \lambda_2\eta \\
    0, & \mbox{if } |d^{(j)}|<\lambda_2\eta \\
    d^{(j)}+\lambda_2\eta, & \mbox{if } d^{(j)} \leq -\lambda_2\eta
  \end{cases}
\end{align}
where $d(j) = (1-\lambda_1\eta)u_{k,m}^{(j)}-\eta z^{(j)}$.

First we consider the case $|z^{j}| < \lambda_2, u_{k,m_1}^{(j)} > 0$. According to the definition of $\{\alpha_q\}$, it is a monotonic increasing sequence. With the definition of $q_0$ that $\alpha_{q_0}\eta(z^{(j)}+\lambda_2)\leq u_{k,m_1}^{(j)} < \alpha_{q_0+1}\eta(z^{(j)}+\lambda_2)$. It is easy to verify that: for $q = 0,1,\ldots,\mathop{\min}\{q_0,m_2 - m_1\}$, we have
\begin{align}
u_{k,m_1+q}^{(j)} = (1-\lambda_1\eta)^q[u_{k,m_1}^{(j)} - \alpha_q\eta(z^{(j)}+\lambda_2)] \nonumber
\end{align}
So if $m_2 - m_1 \leq q_0$, then
\begin{align}
  & u_{k,m_2}^{(j)} \nonumber \\
= & (1-\lambda_1\eta)^{m_2-m_1}[u_{k,m_1}^{(j)} - \alpha_{m_2-m_1}\eta(z^{(j)}+\lambda_2)]
\end{align}
If $m_2 - m_1 > q_0$, according to the definition of $q_0$, we have
\begin{align}
    &(1-\lambda_1\eta)u_{k,m_1+q_0}^{(j)} - \eta z^{(j)} \nonumber \\
=   &(1-\lambda_1\eta)^{q_0+1}[u_{k,m_1}^{(j)} - \alpha_{q_0}\eta(z^{(j)}+\lambda_2)] - \eta z^{(j)}\nonumber \\
\in &(-\lambda_2\eta, \lambda_2\eta) \nonumber
\end{align}
Then we have $u_{k,m_1+q_0+1}^{(j)} = 0$. Since $0 - \eta z^{j} \in  (-\lambda_2\eta, \lambda_2\eta)$, then we have
\begin{align}
u_{k,m_2}^{(j)} = u_{k,m_2-1}^{(j)} = \cdots = u_{k,m_1+q_0+1}^{(j)} = 0
\end{align}

The next case is that $|z^{j}| < \lambda_2, u_{k,m_1}^{(j)} = 0$. Since $0 - \eta z^{j} \in  (-\lambda_2\eta, \lambda_2\eta)$, then we have
\begin{align}
u_{k,m_2}^{(j)} = u_{k,m_2-1}^{(j)} = \cdots = u_{k,m_1}^{(j)} = 0
\end{align}

The next case is that $|z^{j}| < \lambda_2, u_{k,m_1}^{(j)} < 0$. With the definition of $q_0$ that $\alpha_{q_0+1}\eta(z^{(j)}-\lambda_2)\leq u_{k,m_1}^{(j)} < \alpha_{q_0}\eta(z^{(j)}-\lambda_2)$. It is easy to verify that: for $q = 0,1,\ldots,\mathop{\min}\{q_0,m_2 - m_1\}$, we have
\begin{align}
u_{k,m_1+q}^{(j)} = (1-\lambda_1\eta)^q[u_{k,m_1}^{(j)} - \alpha_q\eta(z^{(j)}-\lambda_2)] \nonumber
\end{align}
So if $m_2 - m_1 \leq q_0$, then
\begin{align}
  & u_{k,m_2}^{(j)} \nonumber \\
= & (1-\lambda_1\eta)^{m_2-m_1}[u_{k,m_1}^{(j)} - \alpha_{m_2-m_1}\eta(z^{(j)}-\lambda_2)]
\end{align}
If $m_2 - m_1 > q_0$, according to the definition of $q_0$, we have
\begin{align}
    &(1-\lambda_1\eta)u_{k,m_1+q_0}^{(j)} - \eta z^{(j)} \nonumber \\
=   &(1-\lambda_1\eta)^{q_0+1}[u_{k,m_1}^{(j)} - \alpha_{q_0}\eta(z^{(j)}+\lambda_2)] - \eta z^{(j)}\nonumber \\
\in &(-\lambda_2\eta, \lambda_2\eta) \nonumber
\end{align}
Then we have $u_{k,m_1+q_0+1}^{(j)} = 0$. Since $0 - \eta z^{j} \in  (-\lambda_2\eta, \lambda_2\eta)$, then we have
\begin{align}
u_{k,m_2}^{(j)} = u_{k,m_2-1}^{(j)} = \cdots = u_{k,m_1+q_0+1}^{(j)} = 0
\end{align}

The next case is $z^{(j)} = -\lambda_2$. If $u_{k,m_1}^{(j)} \geq 0$, then
$(1 - \lambda_1\eta)u_{k,m_1}^{(j)} - \eta z^{(j)} \geq \lambda_2\eta$, which leads to $u_{k,m_1+1}^{(j)} = (1 - \lambda_1\eta)u_{k,m_1}^{(j)}$. By induction, we get that
\begin{align}
u_{k,m_2}^{(j)} = (1 - \lambda_1\eta)^{m_2 - m_1}u_{k,m_1}^{(j)}
\end{align}
If $u_{k,m_1}^{(j)} < 0$, define $q_0$ such that $\alpha_{q_0+1}\eta(z^{(j)}-\lambda_2) \leq u_{k,m_1}^{(j)} < \alpha_{q_0}\eta(z^{(j)}-\lambda_2)$. It is easy to verify that: for $q = 0,1,\ldots,\mathop{\min}\{q_0,m_2 - m_1\}$, we have
\begin{align}
u_{k,m_1+q}^{(j)} = (1-\lambda_1\eta)^q[u_{k,m_1}^{(j)} - \alpha_q\eta(z^{(j)}-\lambda_2)] \nonumber
\end{align}
So if $m_2 - m_1 \leq q_0$, then
\begin{align}
  & u_{k,m_2}^{(j)} \nonumber \\
= & (1-\lambda_1\eta)^{m_2-m_1}[u_{k,m_1}^{(j)} - \alpha_{m_2-m_1}\eta(z^{(j)}-\lambda_2)]
\end{align}
If $m_2 - m_1 > q_0$, according to the definition of $q_0$, we have
\begin{align}
    &(1-\lambda_1\eta)u_{k,m_1+q_0}^{(j)} - \eta z^{(j)} \nonumber \\
=   &(1-\lambda_1\eta)^{q_0+1}[u_{k,m_1}^{(j)} - \alpha_{q_0}\eta(z^{(j)}-\lambda_2)] - \eta z^{(j)}\nonumber \\
\in &(-\lambda_2\eta, \lambda_2\eta) \nonumber
\end{align}
Then we have $u_{k,m_1+q_0+1}^{(j)} = 0$. Since $0 - \eta z^{(j)} = \lambda_2\eta$, then we have
\begin{align}
u_{k,m_2}^{(j)} = u_{k,m_2-1}^{(j)} = \cdots = u_{k,m_1+q_0+1}^{(j)} = 0
\end{align}

The next case is $z^{(j)} = \lambda_2$. If If $u_{k,m_1}^{(j)} \leq 0$, then
$(1 - \lambda_1\eta)u_{k,m_1}^{(j)} - \eta z^{(j)} \leq -\lambda_2\eta$, which leads to $u_{k,m_1+1}^{(j)} = (1 - \lambda_1\eta)u_{k,m_1}^{(j)}$. By induction, we get that
\begin{align}
u_{k,m_2}^{(j)} = (1 - \lambda_1\eta)^{m_2 - m_1}u_{k,m_1}^{(j)}
\end{align}
If $u_{k,m_1}^{(j)} > 0$, define $q_0$ such that $\alpha_{q_0}\eta(z^{(j)}+\lambda_2) \leq u_{k,m_1}^{(j)} < \alpha_{q_0+1}\eta(z^{(j)}+\lambda_2)$. It is easy to verify that: for $q = 0,1,\ldots,\mathop{\min}\{q_0,m_2 - m_1\}$, we have
\begin{align}
u_{k,m_1+q}^{(j)} = (1-\lambda_1\eta)^q[u_{k,m_1}^{(j)} - \alpha_q\eta(z^{(j)}+\lambda_2)] \nonumber
\end{align}
So if $m_2 - m_1 \leq q_0$, then
\begin{align}
  & u_{k,m_2}^{(j)} \nonumber \\
= & (1-\lambda_1\eta)^{m_2-m_1}[u_{k,m_1}^{(j)} - \alpha_{m_2-m_1}\eta(z^{(j)}+\lambda_2)]
\end{align}
If $m_2 - m_1 > q_0$, according to the definition of $q_0$, we have
\begin{align}
    &(1-\lambda_1\eta)u_{k,m_1+q_0}^{(j)} - \eta z^{(j)} \nonumber \\
=   &(1-\lambda_1\eta)^{q_0+1}[u_{k,m_1}^{(j)} - \alpha_{q_0}\eta(z^{(j)}+\lambda_2)] - \eta z^{(j)}\nonumber \\
\in &(-\lambda_2\eta, \lambda_2\eta) \nonumber
\end{align}
Then we have $u_{k,m_1+q_0+1}^{(j)} = 0$. Since $0 - \eta z^{(j)} = -\lambda_2\eta$, then we have
\begin{align}
u_{k,m_2}^{(j)} = u_{k,m_2-1}^{(j)} = \cdots = u_{k,m_1+q_0+1}^{(j)} = 0
\end{align}

The next case is $z^{(j)} > \lambda_2$, $u_{k,m_1}^{(j)} \leq 0$, then $(1 - \lambda_1\eta)u_{k,m_1}^{(j)} - \eta z^{(j)} < -\lambda_2\eta$, which leads to $u_{k,m_1+1}^{(j)} = (1-\lambda_1\eta)u_{k,m_1}^{(j)} -  \eta(z^{(j)} - \lambda_2)$. By induction, we have
\begin{align}
u_{k,m_2}^{(j)} = & (1-\lambda_1\eta)^{m_2-m_1}u_{k,m_1}^{(j)} \nonumber \\
                  & -\eta(z^{(j)}-\lambda_2)\sum_{i=1}^{m_2-m_1}(1-\lambda_1\eta)^{i-1} \nonumber \\
                = & (1-\lambda_1\eta)^{m_2-m_1}u_{k,m_1}^{(j)} \nonumber \\
                  & -\eta(z^{(j)}-\lambda_2)\beta_{m_2-m_1}
\end{align}

The next case is $z^{(j)} > \lambda_2$, $u_{k,m_1}^{(j)} > 0$. Then define $q_0$ such that $\alpha_{q_0+1}\eta(z^{(j)}+\lambda_2)\leq u_{k,m_1}^{(j)} < \alpha_{q_0}\eta(z^{(j)}+\lambda_2)$. It is easy to verify that: for $q = 0,1,\ldots,\mathop{\min}\{q_0,m_2 - m_1\}$, we have
\begin{align}
u_{k,m_1+q}^{(j)} = (1-\lambda_1\eta)^q[u_{k,m_1}^{(j)} - \alpha_q\eta(z^{(j)}+\lambda_2)] \nonumber
\end{align}
So if $m_2 - m_1 \leq q_0$, then
\begin{align}
  & u_{k,m_2}^{(j)} \nonumber \\
= & (1-\lambda_1\eta)^{m_2 - m_1}q[u_{k,m_1}^{(j)} - \alpha_{m_2-m_1}\eta(z^{(j)}+\lambda_2)]
\end{align}
Otherwise, if $(1-\lambda_1\eta)u_{k,m_1+q_0}^{(j)} \leq \eta(z^{(j)} - \lambda_2)$, then by induction, we have
\begin{align}
  & u_{k,m_1+q_0+1}^{(j)} \nonumber \\
= & (1-\lambda_1\eta)u_{k,m_1+q_0}^{(j)} - \eta(z^{(j)} - \lambda_2) \nonumber \\
  & u_{k,m_1+q_0+2}^{(j)} \nonumber \\
= & (1-\lambda_1\eta)^2u_{k,m_1+q_0}^{(j)} - \eta(z^{(j)} - \lambda_2)(1+(1-\lambda_1\eta))\nonumber \\
\vdots \nonumber \\
  & u_{k,m_2}^{(j)} \nonumber \\
= & (1-\lambda_1\eta)^{m_2-m_1-q_0}u_{k,m_1+q_0}^{(j)} \nonumber \\
  &- \eta(z^{(j)} - \lambda_2)\beta_{m_2-m_1-q_0}
\end{align}
If $(1-\lambda_1\eta)u_{k,m_1+q_0}^{(j)} > \eta(z^{(j)} - \lambda_2)$, then
$(1-\lambda_1\eta)u_{k,m_1+q_0}^{(j)} - \eta z^{(j)} \in (-\lambda_2\eta, \lambda_2\eta)$, which leads to $u_{k,m_1+q_0+1}^{(j)} = 0$.

So if $m_2 - m_1 = q_0 + 1$, then
\begin{align}
u_{k,m_2}^{(j)} = 0.
\end{align}
If $m_2 - m_1 > q_0 + 1$, then by induction, we have
\begin{align}
& u_{k,m_1+q_0+2}^{(j)} = -\eta(z^{(j)}-\lambda_2) \nonumber \\
& \vdots \nonumber \\
& u_{k,m_2}^{(j)} = -\eta(z^{(j)}-\lambda_2)\beta_{m_2-m_1-q_0-1}
\end{align}

The next case is $z^{(j)} < -\lambda_2$, $u_{k,m_1}^{(j)} > 0$, then $(1 - \lambda_1\eta)u_{k,m_1}^{(j)} - \eta z^{(j)} > \lambda_2\eta$, which leads to $u_{k,m_1+1}^{(j)} = (1-\lambda_1\eta)u_{k,m_1}^{(j)} -  \eta(z^{(j)} + \lambda_2)$. By induction, we have
\begin{align}
u_{k,m_2}^{(j)} = & (1-\lambda_1\eta)^{m_2-m_1}u_{k,m_1}^{(j)} \nonumber \\
                  & -\eta(z^{(j)}+\lambda_2)\sum_{i=1}^{m_2-m_1}(1-\lambda_1\eta)^{i-1} \nonumber \\
                = & (1-\lambda_1\eta)^{m_2-m_1}u_{k,m_1}^{(j)} \nonumber \\
                  & -\eta(z^{(j)}+\lambda_2)\beta_{m_2-m_1}
\end{align}

The next case is $z^{(j)} < -\lambda_2$, $u_{k,m_1}^{(j)} \leq 0$. Then define $q_0$ such that $\alpha_{q_0+1}\eta(z^{(j)}-\lambda_2)\leq u_{k,m_1}^{(j)} < \alpha_{q_0}\eta(z^{(j)}-\lambda_2)$. It is easy to verify that: for $q = 0,1,\ldots,\mathop{\min}\{q_0,m_2 - m_1\}$, we have
\begin{align}
u_{k,m_1+q}^{(j)} = (1-\lambda_1\eta)^q[u_{k,m_1}^{(j)} - \alpha_q\eta(z^{(j)}-\lambda_2)] \nonumber
\end{align}
So if $m_2 - m_1 \leq q_0$, then
\begin{align}
  & u_{k,m_2}^{(j)} \nonumber \\
= & (1-\lambda_1\eta)^{m_2 - m_1}q[u_{k,m_1}^{(j)} - \alpha_{m_2-m_1}\eta(z^{(j)}-\lambda_2)]
\end{align}
Otherwise, if $(1-\lambda_1\eta)u_{k,m_1+q_0}^{(j)} > \eta(z^{(j)} +\lambda_2)$, then by induction, we have
\begin{align}
  & u_{k,m_1+q_0+1}^{(j)} \nonumber \\
= & (1-\lambda_1\eta)u_{k,m_1+q_0}^{(j)} - \eta(z^{(j)} + \lambda_2) \nonumber \\
  & u_{k,m_1+q_0+2}^{(j)} \nonumber \\
= & (1-\lambda_1\eta)^2u_{k,m_1+q_0}^{(j)} - \eta(z^{(j)} + \lambda_2)(1+(1-\lambda_1\eta))\nonumber \\
\vdots \nonumber \\
  & u_{k,m_2}^{(j)} \nonumber \\
= & (1-\lambda_1\eta)^{m_2-m_1-q_0}u_{k,m_1+q_0}^{(j)} \nonumber \\
  &- \eta(z^{(j)} + \lambda_2)\beta_{m_2-m_1-q_0}
\end{align}
If $(1-\lambda_1\eta)u_{k,m_1+q_0}^{(j)} < \eta(z^{(j)} + \lambda_2)$, then
$(1-\lambda_1\eta)u_{k,m_1+q_0}^{(j)} - \eta z^{(j)} \in (-\lambda_2\eta, \lambda_2\eta)$, which leads to $u_{k,m_1+q_0+1}^{(j)} = 0$.

So if $m_2 - m_1 = q_0 + 1$, then
\begin{align}
u_{k,m_2}^{(j)} = 0.
\end{align}
If $m_2 - m_1 > q_0 + 1$, then by induction, we have
\begin{align}
& u_{k,m_1+q_0+2}^{(j)} = -\eta(z^{(j)} + \lambda_2) \nonumber \\
& \vdots \nonumber \\
& u_{k,m_2}^{(j)} = -\eta(z^{(j)} + \lambda_2)\beta_{m_2-m_1-q_0-1}
\end{align}
\end{proof}

\end{document}